\DeclareMathAlphabet{\mathpzc}{OT1}{pzc}{m}{it}
\definecolor{puorange}{rgb}{0.80,0.20,0}
\definecolor{bluegray}{rgb}{0.04,0,0.7}
\definecolor{greengray}{rgb}{0.05,0.50,0.15}
\definecolor{darkbrown}{rgb}{0.40,0.2,0.05}
\definecolor{darkcyan}{rgb}{0,0.4,1}
\definecolor{black}{rgb}{0,0,0}
\definecolor{grey}{rgb}{0.93,0.93,0.93}
\title{Kernel-based Translations of Convolutional Networks}
\author{Corinne Jones, Vincent Roulet, Zaid Harchaoui \vspace{0.2cm}\\ University of Washington \\ \{cjones6, vroulet, zaid\}@uw.edu}
\newcommand*{\C}{C}
\newcommand*{\emat}{\mathcal{E}}
\newcommand*{\E}{E}
\newcommand*{\F}{F}
\newcommand*{\f}{f}
\newcommand*{\im}{F}
\newcommand*{\imk}{K}
\newcommand*{\impatch}{\boxplus}
\renewcommand*{\k}{k}
\newcommand*{\N}{N}
\newcommand*{\nbhd}{\mathcal{N}}
\newcommand*{\nfilt}{f}
\renewcommand*{\P}{P}
\newcommand*{\p}{p}
\newcommand*{\patch}{x}
\newcommand*{\patchk}{k}
\newcommand*{\s}{s}
\newcommand*{\V}{V}
\newcommand*{\W}{W}
\newcommand*{\w}{w}
\newcommand*{\Z}{Z}
\newcommand{\filter}{w}
\newcommand{\nbfilters}{f}
\newcommand{\gfun}{g}
\newcommand{\simpobj}{\hat f}
\newcommand{\sign}{\operatorname{sign}}
\let\vec\relax
\DeclareMathOperator*{\vec}{vec}
\DeclareMathOperator*{\argmin}{arg\,min}
\DeclareMathOperator*{\diag}{diag}
\newcommand*{\mbr}{\mathbb{R}}
\newcommand*{\mbs}{\mathbb{S}}
\newcommand*{\mch}{\mathcal{H}}
\newcommand*{\mcl}{\mathcal{L}}
\newcommand*{\mbone}{\mathbbm{1}}
\newcommand{\reals}{\mathbb{R}}
\newcommand{\grad}{\operatorname{grad}}
\newcommand{\Vect}{\operatorname{Vect}}
\newcommand{\Proj}{\operatorname{Proj}}
\newcommand{\Expect}{\mathbb{E}}
\newcommand{\stepsize}{\gamma}
\newcommand{\step}{\operatorname{ret}}
\newcommand{\ones}{\mathbf{1}}
\newcommand{\Tr}{\operatorname{\bf Tr}}
\newcommand{\id}{\operatorname{I}}
\renewcommand{\triangleq}{\coloneqq}
\renewcommand{\algorithmicrepeat}{\textbf{Do:}}
\newtheorem{example}{Example} 
\newtheorem{theorem}{Theorem}
\newtheorem{lemma}[theorem]{Lemma} 
\newtheorem{proposition}[theorem]{Proposition} 
\newtheorem{corollary}[theorem]{Corollary}
\declaretheoremstyle[
notefont=\bfseries, notebraces={}{},
bodyfont=\normalfont\itshape,
headformat=\NAME \NOTE
]{nopar}
\declaretheorem[style=nopar, name=Proposition]{proposition_unnumbered}
\renewcommand*{\b}{b}
\renewcommand*{\k}{k}
\renewcommand*{\L}{L}
\begin{document}
\maketitle

\begin{abstract}
Convolutional Neural Networks, as most artificial neural networks, are commonly viewed as methods different in essence from kernel-based methods. We provide a systematic translation of Convolutional Neural Networks (ConvNets) into their kernel-based counterparts, Convolutional Kernel Networks (CKNs), and demonstrate that this perception is unfounded both formally and empirically. We show that, given a Convolutional Neural Network, we can design a corresponding Convolutional Kernel Network, easily trainable using a new stochastic gradient algorithm based on an accurate gradient computation, that performs on par with its Convolutional Neural Network counterpart. 
We present experimental results supporting our claims on landmark ConvNet architectures comparing each ConvNet to its CKN counterpart over several parameter settings.
\end{abstract}

\section{Introduction}
For many tasks, convolutional neural networks (ConvNets) are currently the most successful approach to learning a functional mapping from inputs to outputs. For example, this is true for image classification, where they can learn a mapping from an image to a visual object category.
The common description of a convolutional neural network decomposes the architecture into layers that implement particular parameterized functions \citep{goodfellow2016}. 

The first generation of ConvNets, which include the Neocognitron~\citep{fukushima1980} and the LeNet series~\citep{lecun1988,lecun1989,lecun1989b,lecun1995,lecun1998a} stack two main types of layers: convolutional layers and pooling layers. These two types of layers were motivated from the Hubel-Wiesel model of human visual perception \citep{hubel1962}. 
A convolutional layer decomposes into several units. Each unit is connected to local patches in the feature maps of the previous layer through a set of weights. A pooling layer computes a local statistic of a patch of units in one feature map.

 This operational description of ConvNets contrasts with the mathematical description of kernel-based methods. Kernel-based methods, such as support vector machines, were at one point the most popular array of approaches to learning functional mappings from input examples to output labels \citep{scholkopf2002,steinwart2008}. Kernels are positive-definite pairwise similarity measures that allow one to design and learn such mappings by defining them as linear functionals in a Hilbert space. Owing to the so-called reproducing property of kernels, these linear functionals can be learned from data. 
	
This apparent antagonism between the two families of approaches is, however, misleading and somewhat unproductive. We argue and demonstrate that, in fact, \emph{any convolutional neural network can potentially be translated into a convolutional kernel network}, a kernel-based method with an appropriate hierarchical compositional kernel. Indeed, the operational description of a ConvNet can be seen as the description of a data-dependent approximation of an appropriate kernel map.

The kernel viewpoint brings important insights. Despite the widespread use of ConvNets, relatively little is understood about them. We would, in general, like to be able to address questions such as the following: What kinds of activation functions should be used? How many filters should there be at each layer?  Why should we use spatial pooling? Through CKNs we can begin to understand the answers to these questions. Activation functions used in ConvNets are used to approximate a kernel, i.e., a similarity measure, between patches. The number of filters determines the quality of the approximation. Moreover, spatial pooling may be viewed as approximately taking into account the distance between patches when measuring the similarity between images.  
	
We lay out a systematic translation framework between ConvNets and CKNs and put it to practice with three landmark architectures on two problems. The three ConvNet architectures, %
LeNet-1, LeNet-5, and All-CNN-C, correspond to milestones in the development of ConvNets. 
We consider digit classification with LeNet-1 and LeNet-5 on MNIST \citep{lecun1995,lecun1998a} and image classification with All-CNN-C \citep{springenberg2014} on CIFAR-10 \citep{krizhevsky2009}. 
We present an efficient algorithm to train a convolutional kernel network end-to-end, based on a first stage of unsupervised training and a second stage of gradient-based supervised training. 

To our knowledge, this work presents the first systematic experimental comparison~\emph{on an equal standing} of the two approaches on real-world datasets. By equal standing we mean that the two architectures compared are analogous from a functional viewpoint and are trained similarly from an algorithmic viewpoint. 
This is also the first time kernel-based counterparts of convolutional nets are shown to perform on par with %
convolutional neural nets on several real datasets over a wide range of settings.
	
	In summary, we make the following contributions: 
	\begin{itemize}
\item Translating the LeNet-1, LeNet-5, and All-CNN-C ConvNet architectures into their Convolutional Kernel Net counterparts;
	\item Establishing a general gradient formula and algorithm to train a Convolutional Kernel Net in a supervised manner;
	\item Demonstrating that Convolutional Kernel Nets can achieve comparable %
	performance to their ConvNet counterparts.
	\end{itemize}
The CKN code for this project is publicly available in the software package \emph{yesweckn} at \url{https://github.com/cjones6/yesweckn}.

\section{Related work}
This paper builds upon two interwoven threads of research related to kernels: the connections between kernel-based methods and (convolutional) neural networks and the use of compositions of kernels %
for the design of feature representations.

The first thread dates back to~\citet{neal1996}, who showed that an infinite-dimensional single-layer neural network is equivalent to a Gaussian process. Building on this, \citet{williams1996} derived what the corresponding covariance functions were for two specific activation functions. Later,~\citet{cho2009} proposed what they termed the arc-cosine kernels, showing that they are equivalent to infinite-dimensional neural networks with specific activation functions (such as the ReLU) when the weights of the neural networks are independent and unit Gaussian. Moreover, they composed these kernels and obtained higher classification accuracy on a dataset like MNIST. However, all of these works dealt with neural networks, not convolutional neural networks. 

More recently, several works used kernel-based methods or approximations thereof as substitutes to fully-connected layers or other parts of convolutional neural networks to build hybrid ConvNet architectures~\citep{bruna2013,huang2014,dai2014,yang2015,oyallon2017,oyallon2018}. In contrast to these works, we are interested in purely kernel-based networks.

The second thread began with \citet[Section 13.3.1]{scholkopf2002}, who proposed a kernel over image patches for image classification. The kernel took into account the inner product of the pixels at 
every pair of pixel locations within a patch, in addition to the distance between the pixels. 
This served as a precursor to later work that composed kernels. A related idea, introduced by \citet{bouvrie2009}, entailed having a hierarchy of kernels. These kernels were defined to be normalized inner products of ``neural response functions'', which were derived by pooling the values of a kernel at the previous layer over a particular region of an image.
In a similar vein,~\citet{bo2011} first defined a kernel over sets of patches from two images based on the sum over all pairs of patches within the sets. Within the summation is a weighted product of a kernel over the patches and a kernel over the patch locations. They then proposed using this in a hierarchical manner, approximating the kernel at each layer by projecting onto a subspace.

Multi-layer convolutional kernels were introduced in~\citet{mairal2014}. In~\citet{mairal2014} and \citet{paulin2017}, kernel-based methods using such kernels were shown to achieve competitive performance for particular parameter settings on image classification and image retrieval tasks, respectively. The kernels considered were however different from the kernel counterparts of the ConvNets they competed with. 
Building upon this work, \citet{mairal2016} proposed an end-to-end training algorithm for convolutional kernel networks. Each of these aforementioned works relied on an approximation to the kernels based on either optimization or the Nystr\"{o}m method \citep{williams2000,bo2009}. Alternatively, kernel approximations using random Fourier features~\citep{rahimi2007} or variants thereof could also approximate such kernels for a variety of activation functions~\citep{daniely2016,daniely2017}, although at a slower rate. Finally,~\citet{bietti2019} studied the invariance properties of convolutional kernel networks from a theoretical function space point of view. 

Building off of~\citet{mairal2014,daniely2016,daniely2017,bietti2019} and \citet{paulin2017}, we put to practice the translation of a convolutional neural network into its convolutional kernel network counterpart for several landmark architectures. Translating a convolutional neural network to its convolutional kernel network counterpart requires a careful examination of the details of the architecture, going beyond broad strokes simplifications made in previous works.
When each translation is carefully performed, the resulting convolutional kernel network can compete with the convolutional neural network. We provide all the details of our translations in Appendix~\ref{app:ckn_desc}.
To effectively train convolutional kernel networks, we present a rigorous derivation of a general formula of the gradient of the objective with respect to the parameters. Finally, we propose a new stochastic gradient algorithm using an accurate gradient computation method to train convolutional kernel networks in a supervised manner. As a result, we demonstrate that convolutional kernel networks perform on par with their convolutional neural network counterparts.

\section{Convolutional networks: kernel and neural formulations}
\label{sec:translation}
To refresh the reader on convolutional kernel networks (CKNs), we begin this section by providing a novel informal viewpoint. We then proceed to describe the correspondences between CKNs and ConvNets. Using these correspondences, we then demonstrate how to translate an example ConvNet architecture, LeNet-5~\citep{lecun1998a}, to a CKN.

\subsection{Convolutional kernel networks}
\label{sec:ckn_formulation}

Convolutional kernel networks provide a natural means of generating features for structured data such as signals and images.  Consider the aim of generating features for images such that ``similar'' images have ``similar'' features. %
CKNs approach this problem by developing a similarity measure between images.

\paragraph{Similarity measure on patches}
Let $K$ be a similarity measure between images and suppose $K$ can be written as an inner product 
\begin{equation*}
\imk(\im, \im') = \langle \varphi(\im), \varphi(\im')\rangle_{\mch_\imk}
\end{equation*}
in a space $\mch_\imk$ for some $\varphi$. Then we could take $\varphi(\im)$ to be the feature representation for image $\im$ and similarly for image $\im'$.
 The question therefore is how to choose $\imk$. A similarity measure $\imk$ that is applied pixel-wise will be ineffective. This is because there is no reason why we would expect two ``similar'' images to be similar pixel-wise. Therefore, we consider similarity measures applied to patches. Here a patch consists of the pixel values at a (typically contiguous) subset of pixel locations. Statistics on patches more closely align with how humans perceive images. Moreover, treating a patch as a unit takes into account the fact that images are often not locally rotationally invariant. 
 
Let $\impatch_i$ and ${\impatch_i}'$, $i=1,\dots, m$ be patches from images $\im$ and $\im'$, respectively, where for all $i$, $\impatch_i$ and ${\impatch_i}'$ are from the same positions. %
Then, given a similarity measure $\patchk$ on the patches, we could choose the similarity measure $\imk$ on the images to be given by
\begin{equation*}
\label{eq:image_kernel_patches}
\imk(\im, \im') = \sum_{i,j=1}^mw_{i,j}\patchk(\impatch^i, {\impatch^j}') 
\end{equation*}
for some weights $w_{ij}\geq0$.  The overall similarity measure $\imk$ accounts for the fact that images that are similar will not necessarily have patches in the same locations by comparing all pairs of patches between the two images. The weighting accounts for the fact that while similar patches may not occur in the same location, we would not expect them to be too far apart. 

Convolutional kernel networks build such similarity measures $\imk$ by using a \emph{positive definite kernel} as the similarity measure $\patchk$ between patches. A positive definite kernel implicitly maps the patches to an infinite-dimensional space (a reproducing kernel Hilbert space (RKHS)) $\mathcal{H}$ and computes their inner product in this space, i.e., $\patchk(\impatch, \impatch' ) = \langle \phi(\impatch), \phi(\impatch')\rangle_{\mathcal{H}}$, where $\phi$ is the mapping from patches to $\mathcal{H}$ induced by $\patchk$. %
As long as $\patchk$ is a kernel, $\imk$ is also a kernel and can therefore be written as $\imk(\im, \im') = \langle \varphi(\im), \varphi(\im')\rangle_{\mch_\imk}$, where $\mch_\imk$ is the RKHS associated with kernel $\imk$ and $\varphi$ is the mapping from patches to $\mch_K$ induced by $K$.

\paragraph{Network construction}
If two images are similar, we would expect their patches to be similar for several patch sizes. Multi-layer CKNs incorporate this via a hierarchy of kernels. A primary aim of using a hierarchical structure is to gain invariance in the feature representation of the images. Specifically, we seek to gain translation invariance and invariance to small deformations. 
We will now describe such a hierarchical structure.

Let $\phi_1$ be the canonical feature map of a kernel $\patchk_1$ defined on patches of a given size, i.e., 
\begin{equation*}
\patchk_1(\impatch, \impatch') =  \langle \phi_1(\impatch), \phi_1(\impatch')\rangle_{\mch_1},
\end{equation*}
where $\mch_1$ is the RKHS associated with kernel $\patchk_1$.
Then $\phi_1(\impatch)$ provides a feature representation of the patch $\impatch$. Applying $\phi_1$ to each patch of the given size in the image, we obtain a new representation of the image (See Figure~\ref{fig:exact_ckn}). %

If two images are similar, we would expect them to also be similar when comparing their representations obtained by applying $\phi_1$. Therefore, we may apply the same logic as before. Let $\patchk_2$ be a kernel on patches in the new space. Applying its canonical feature map $\phi_2$, we obtain another representation of the image. The features at each spatial location in this representation are derived from a larger portion of the original image than those in the previous representation (previous \emph{layer}). Specifically, denoting by $\impatch_\ell$ and $\impatch_{\ell}'$ image patches from image representations $\im_\ell$ and ${\im_\ell}'$ at layer $\ell$, we apply the canonical feature map $\phi_\ell$ of the kernel $\patchk_\ell$ given by
\begin{equation*}
\patchk_\ell(\impatch_\ell, \impatch_{\ell'}) =  \langle \phi_\ell(\impatch_\ell), \phi_\ell(\impatch_{\ell}')\rangle_{\mch_\ell}
\end{equation*}
to each patch in $\im_\ell$ and ${\im_\ell}'$. 

One way to increase the invariance is to include an averaging (i.e., \emph{pooling}) step after applying each feature map $\phi_\ell$. Specifically, denote by ${(\im_\ell)}_{jk}$ the feature representation of the image $\im$ at spatial location $(j,k)$ after applying feature map $\phi_\ell$. Letting $\nbhd_{jk}$ denote a spatial neighborhood of the point $(j, k)$, we compute 
\begin{equation*}
(\im_\ell)_{jk} \gets \sum_{(j',k')\in\nbhd_{jk}}  w_{j,j',k,k'} (\im_\ell)_{j'k'}
\end{equation*}
for all $j,k$ where $w_{j,j',k,k'}\geq0$ are pre-specified weights. For example, for average pooling,  $w_{j,j',k,k'} = 1/|\nbhd_{jk}|$.

After pooling we often subsample the locations in the image for computational purposes. Subsampling by an integer factor of $k$ entails retaining every $k$th feature representation in each row and then every $k$th feature representation in each column.
By subsampling after pooling we aim to remove redundant features. 
Building layers in the above manner by applying feature maps $\phi_\ell$, pooling, and subsampling, we obtain a convolutional kernel network.

\begin{figure}
\centering
\vspace*{-3cm}
\hspace*{-2.5cm}
\includegraphics[trim={3cm 6.7cm 5.5cm 1cm},clip,scale=0.6]{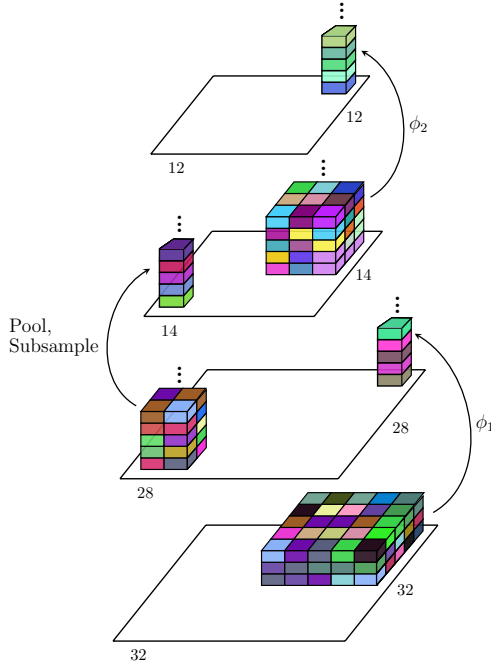}
\caption{\label{fig:exact_ckn} Example CKN architecture. The block sizes are the sizes of the filters at each layer.  The height of the blocks at the input layer is three to represent the input having three channels.  At every subsequent layer the feature representation is infinite dimensional. %
The arrows indicate how one block gets transformed into the block at the next layer. The numbers on the sides of the parallelograms indicate the spatial dimensions of the feature representations at each layer. }
\end{figure}

\begin{example}
Figure~\ref{fig:exact_ckn} depicts an example CKN. In the figure an initial RGB image of size $32\times32$ (represented by the bottom rhombus in the figure) gets transformed by applying feature map $\phi_1$ to patches of size $5\times5$. As $\phi_1$ is applied with stride $1\times1$ (i.e., it is applied to every possible contiguous $5\times5$ patch in the image), this results in a new feature representation (second rhombus) with spatial dimensions $28\times28$. Atop each spatial location sits an infinite-dimensional vector. 

At the second layer $2\times2$ pooling is applied to the infinite-dimensional vectors, followed by subsampling by a factor of 2. The pooling is performed on all contiguous $2\times2$ patches, which initially decreases the spatial dimensions to $27\times27$. Subsampling by a factor of two entails removing the features on top of every other spatial location, yielding an output with spatial dimensions $14\times14$. The output of this layer results in the next representation (third rhombus). 

Finally, the figure depicts the application of another feature map $\phi_2$ to patches of size $3\times3$ to obtain the feature representation at the final layer. As the stride is $1\times1$, this results in an output with spatial dimensions $12\times12$. 

Given such a network, we may then compute the similarity of two images by concatenating each image's feature representation at the final layer and then applying a linear kernel. 
While there are only two feature maps $\phi_\ell$, $\ell=1,2$, depicted in this figure, the process could continue for many more layers. 
\end{example}

\paragraph{Approximating the kernels}
While computing the overall kernel exactly is theoretically possible (assuming that the kernels at each layer only depend on inner products of features at previous layers), it is computationally unwieldy for even moderately-sized networks. 
To overcome the computational difficulties, we approximate the kernel $\patchk_\ell$ at each layer $\ell$ by finding a map $\psi_\ell$ such that 
$\patchk_\ell(\impatch_\ell,\impatch_\ell') \approx \langle \psi_\ell(\impatch_\ell), \psi_\ell(\impatch_\ell')\rangle_{\mbr^{\nfilt_\ell}}$
for some positive integer $\nfilt_\ell$. The $\psi_\ell$'s then replace the $\phi_\ell$'s at each layer, thereby providing feature representations of size $\nfilt_\ell$  of the patches at each layer $\ell$. 
There are many ways to choose $\psi_\ell$, including directly optimizing an approximation to the kernel, using random features, and projecting onto a
 subspace. %
 
We consider here the approximation resulting from the projection onto a subspace spanned by ``filters’’, usually referred to as the Nyström method \citep{williams2000,bo2009,mairal2016}. These filters may be initialized at random by sampling patches from the images. We shall show in Sections \ref{sec:grad_formula}-\ref{sec:intertwined_newton} how to differentiate through this approximation and learn the filters from data in a supervised manner.

 Consider a dot product kernel $\patchk$ with corresponding RKHS $\mch$ and canonical feature map $\phi$. Furthermore, let $\filter_1, \ldots, \filter_{\nbfilters} \in \reals^\s$ be a set of \emph{filters}. %
 Given a patch  $\impatch\in\mbr^\s$ of size $s$, the Nystr\"om approximation projects $\phi(\impatch)$  onto the subspace spanned by $\phi(\filter_1),\dots, \phi(\filter_\nbfilters)$ in $\mathcal{H}_\patchk$ by solving the 
kernel least squares problem
\begin{align*}
 \alpha^\star \in \argmin_{\alpha\in\mbr^\nbfilters} \left\Vert \phi(\impatch) - \sum_{i=1}^{\nbfilters}\alpha_{i} \phi(\filter_i) \right\Vert_{\mch}^2.
\end{align*}
Defining $\W=[\filter_1, \dots, \filter_\f]^T\in\mbr^{\f\times\s}$ and assuming that $\k$ is a dot product kernel, this results in the coefficients $\alpha^\star = \k(\W\W^T)^{-1}\k(\W\impatch)$, where the kernel $\k$ is understood to be applied element-wise.\footnote{A dot product kernel is a kernel of the form $k(x,y)=f(\langle x, y\rangle)$ for a function $f:\mbr\times\mbr\to\mbr$. For notational convenience for a dot product kernel $k$ we will write $k(t)$ rather than $k(x,y)$ where $t=\langle x, y\rangle$. For a matrix $A$ the element-wise application of $k$ to $A\in\mbr^{m\times n}$ results in $k(A)\coloneqq [k(A_{i,j})]_{i,j=1}^{m,n}$.}
Therefore, for two patches $\impatch$ and $\impatch'$ with corresponding optimal coefficients $\alpha$ and $\alpha'$, we have
\begin{align*}
\left\langle \phi(\impatch), \phi(\impatch')\right\rangle_\mch 
&\approx \left\langle \sum_{i=1}^{\nbfilters}\alpha_{i}^\star \phi(\filter_i), \sum_{i=1}^{\nbfilters}\alpha_{i}^{\star'} \phi(\filter_i)\right\rangle_\mch\\
&= \k(\W\impatch)^T\k(WW^T)^{-1}\k(\W\impatch')\\
&= \langle \patchk(\W\W^T)^{-1/2}\patchk(\W\impatch), \patchk(\W\W^T)^{-1/2}\patchk(\W\impatch')\rangle_{\mbr^\nfilt}.
\end{align*}
Hence, a finite-dimensional approximate feature representation of $\impatch$ is given by
\begin{align*}
\psi(\impatch) = (\patchk(\W\W^T))^{-1/2}\patchk(\W\impatch) \in \reals^\nbfilters.
\end{align*}
We will add a regularization term involving a small value $\epsilon>0$, as $\patchk(\W\W^T)$ may be poorly conditioned. 

Denote the input features to layer $\ell$ by $\F_{\ell-1}$, where the rows index the features and the columns index the spatial locations (which are flattened into one dimension). Let $\E_\ell$ be a function that extracts patches from $\F_{\ell-1}$.
We then write the features output by the Nyström method as 
\begin{align*}
 \F_{\ell}=\Psi_\ell(\F_{\ell-1}, \W_\ell) 
 =  (\patchk(\W\W^T)+\epsilon I)^{-1/2}\patchk(\W\E_\ell(\F_{\ell-1})).
 \end{align*}
 Here $\Psi_\ell$ denotes the function that applies the approximate feature map $\psi_\ell$ as derived above to the features at each spatial location.
It is important to note that the number of filters $\nfilt_\ell$ controls the quality of the approximation at layer $\ell$. Moreover, such a procedure results in the term $\W_\ell\E_\ell(\F_{\ell-1})$, in which the filters are \emph{convolved} with the images. %
This convolution is followed by a \emph{non-linearity} computed using the kernel $k$, resulting in the application of $\Psi_\ell$. 

\paragraph{Overall formulation}
The core hyperparameter in CKNs is the choice of kernel. For simplicity of the exposition we assume that the same kernel is used at each layer. Traditionally, CKNs use normalized kernels of the form
\begin{align*}
\patchk(\impatch, \impatch') = \|\impatch\|_2 \|\impatch'\|_2 \tilde \patchk(\impatch^\top \impatch'/ \|\impatch \|_2 \|\impatch'\|_2)
\end{align*}
where $\tilde k$ is a dot product kernel on the sphere. Examples of such kernels $\tilde k$ include the arc-cosine kernel of order 0 and the RBF kernel on the sphere. Here we allow for this formulation. Using dot product kernels on the sphere allows us to restrict the filters to lying on the sphere. Doing so adds a projection step in the optimization.

Let $\im_0$ be an input image. Denote by $\W_{\ell}$ the filters at layer $\ell$, $\E_\ell$ the function that extracts patches from $\F_\ell$ at layer $\ell$, and $\N_\ell$ the function normalizing the patches of $F_\ell$ at layer $\ell$. Furthermore, let $\P_\ell$ be the pooling and subsampling operator, represented by a matrix. (See Appendix~\ref{app:gradient_proofs} for precise definitions.) Then the representation at the next layer given by extracting patches, normalizing them, projecting onto a subspace, re-multiplying by the norms of the patches, pooling, and subsampling is given by
\begin{equation*}
F_{\ell} = \Psi_{\ell}(\im_{\ell-1}, \W_{\ell})\triangleq 
(\patchk(\W_\ell\W_\ell^T)+\epsilon \id)^{-1/2}\patchk(\W_\ell\E_\ell(\im_{\ell-1})\N_\ell(\im_\ell)^{-1})\N_\ell(\im_{\ell-1})\P_\ell.
\end{equation*}
After $\L$ such compositions we obtain a final representation of the image that can be used for a classification task.

Precisely, given a set of images $\im^{(1)},\dots, \im^{(n)}$ with corresponding labels $y^{(1)},\dots, y^{(n)}$, we
consider a linear classifier parametrized by $W_{L+1}$ and a loss $\mathcal{L}$, leading to the optimization problem 
\begin{align}\label{eq:training_pb}
\min_{\W_1,\dots,\W_{L+1}}\quad &  \frac{1}{n} \sum_{i=1}^n \mathcal{L}\left(y^{(i)}, \left\langle \W_{L+1}, \im_L^{(i)}\right\rangle\right)+\lambda \Vert \W_{L+1}\Vert_F^2 \\
\mbox{subject to} \quad & W_\ell \in S^{d_\ell} \quad \mbox{for $\ell=1,\ldots, L$}.\nonumber
\end{align}
Here $\lambda \geq 0$ is a regularization parameter for the classifier and $S^{d_\ell}$ is the product of Euclidean spheres at the layer $\ell$; see Appendix~\ref{app:gradient_proofs}.

\subsection{Connections to ConvNets}
\label{sec:translations}
\begin{table*}
	\caption{\label{tab:correspondence} Correspondences between ConvNets and CKNs
}
	\centering
	\begin{tabular}{ll}
		ConvNet component & CKN component \\ \hline
		Convolutional layer & Projection onto the same subspace for all patch locations \\
		Partially connected layer & Projection onto a different subspace for each region \\
		Fully connected layer &  Projection onto a subspace for the entire image representation\\
		Convolution + no nonlinearity & Applying feature map of linear kernel \\
		Convolution + tanh nonlinearity & Applying feature map of arc-cosine kernel of order 0$^*$ \\
		Convolution + ReLU nonlinearity & Applying feature map of arc-cosine kernel of order 1  \\
		Average pooling & Averaging of feature maps \\
		Local response normalization & Dividing patches by their norm$^*$ \\
		\bottomrule
	\end{tabular}
	\begin{tablenotes}
		\footnotesize
		\item $^*$ denotes an inexact correspondence.
	\end{tablenotes}
\end{table*}

CKNs may be viewed as infinite-dimensional analogues of ConvNets.  Table~\ref{tab:correspondence}
lists a set of transformations between ConvNets and CKNs. These are discussed below in more detail. For the remainder of this section we let $G\in\mbr^{\f\times\s_{1}\times\s_{2}}$ denote the feature representation of an image in a ConvNet. For clarity of exposition we represent it as a 3D tensor rather than a 2D matrix as for CKNs above. Here the first dimension indexes the features while the second and third dimensions index the spatial location. We denote the element of $G$ in feature map $z$ at spatial location $(x,y)$ by $(G)_{z,x,y}$.

\paragraph{Convolution and activation function}
The main component of ConvNets is the convolution of patches with filters, followed by a pointwise nonlinearity. More precisely, denote the filters by $W\in\mbr^{\f\times \s}$, a patch from $G$ by $\impatch\in\mbr^s$, and a nonlinearity by $a: \reals^\s \times \reals^\s \rightarrow \reals$. A ConvNet computes $a(W\impatch)$ for every patch $\impatch$ in $G$ where $a$ is understood to be applied element-wise.
This can be seen as an approximation of a kernel, as stated in the following proposition (see \cite{daniely2016} for more details). 
\begin{proposition}
	Consider a measurable space $\Omega$ with probability measure $\mu$ and an activation function $a: \reals^\s \times \reals^\s \rightarrow \reals$ such that $a(\cdot, \impatch)$ is  square integrable with respect to $\mu$ for any $\impatch \in \reals^\s$. Then  the pair $(\mu, a)$
	defines a kernel on patches $\impatch, \impatch' $ as the dot product of the functions $a(\cdot, \impatch)$ and $a(\cdot, \impatch')$ on the measurable space $\Omega$ with probability measure $\mu$, i.e., %
	\begin{align*}
		\patchk(\impatch,\impatch') \coloneqq \Expect_{\filter\sim\mu}[a(\filter,\impatch)a(\filter,\impatch')].
	\end{align*}
\end{proposition}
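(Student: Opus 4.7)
The plan is to exhibit $k$ explicitly as an inner product in a Hilbert space, from which all three defining properties of a positive definite kernel (well-definedness, symmetry, and positive semi-definiteness) will follow immediately. The natural Hilbert space here is $L^2(\Omega, \mu)$, and the feature map will be $\impatch \mapsto a(\cdot, \impatch)$.

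First I would verify that $k(\impatch, \impatch')$ is well-defined and finite. By hypothesis, for every patch $\impatch \in \reals^\s$ the function $a(\cdot, \impatch)$ lies in $L^2(\Omega, \mu)$. Cauchy--Schwarz in $L^2(\Omega, \mu)$ then gives
\begin{equation*}
\left| \Expect_{\filter \sim \mu}[a(\filter, \impatch) a(\filter, \impatch')] \right| \leq \|a(\cdot, \impatch)\|_{L^2(\mu)}\, \|a(\cdot, \impatch')\|_{L^2(\mu)} < \infty,
\end{equation*}
so that $k(\impatch, \impatch') = \langle a(\cdot, \impatch),\, a(\cdot, \impatch')\rangle_{L^2(\mu)}$ is a real number. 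Symmetry $k(\impatch, \impatch') = k(\impatch', \impatch)$ is then immediate from the commutativity of the pointwise product inside the integral.

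For positive semi-definiteness, I would take any finite collection of patches $\impatch_1, \dots, \impatch_m \in \reals^\s$ and coefficients $c_1, \dots, c_m \in \reals$ and compute
\begin{equation*}
\sum_{i,j=1}^m c_i c_j\, k(\impatch_i, \impatch_j) = \int_\Omega \sum_{i,j=1}^m c_i c_j\, a(\filter, \impatch_i) a(\filter, \impatch_j)\, d\mu(\filter) = \int_\Omega \left( \sum_{i=1}^m c_i\, a(\filter, \impatch_i) \right)^{\!2} d\mu(\filter) \geq 0,
\end{equation*}
where the exchange of sum and integral is justified because the sum is finite. This is exactly the standard completing-the-squares identity for Gram matrices of vectors in a Hilbert space.

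There is no real obstacle; the argument is a routine verification once one recognizes that the construction is precisely an $L^2(\Omega, \mu)$-inner product of feature functions. The only substantive hypothesis used is square integrability of $a(\cdot, \impatch)$, which is exactly what guarantees that the feature map lands in $L^2(\Omega, \mu)$ and that Cauchy--Schwarz applies. The result could equivalently be phrased by saying that the RKHS associated with $k$ is (isometric to) the closed linear span of $\{a(\cdot, \impatch) : \impatch \in \reals^\s\}$ in $L^2(\Omega, \mu)$, which is what makes the subsequent interpretation of ConvNet activations as Monte Carlo approximations to this kernel meaningful.
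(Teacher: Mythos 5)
Your proof is correct and takes exactly the route the paper intends: the paper states the proposition without a written proof (deferring to the cited reference), but its phrasing — the kernel "as the dot product of the functions $a(\cdot,\impatch)$ and $a(\cdot,\impatch')$" — is precisely your identification of $k$ with the $L^2(\Omega,\mu)$ inner product of the feature map $\impatch\mapsto a(\cdot,\impatch)$, with Cauchy--Schwarz for finiteness and the sum-of-squares identity for positive semi-definiteness. Nothing is missing.
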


Hence, the convolution and pointwise nonlinearity in ConvNets with random weights approximate a kernel on patches. This approximation converges to the true value of the kernel as the number of filters $\f$ goes to infinity. The downside to using such a random feature approximation is that it produces less concise approximations of kernels than e.g., the Nystr\"om method. In order to assess whether trained CKNs perform similarly regardless of the approximation, we approximate CKNs using the Nystr\"om method.

Several results have been proven relating specific activation functions to their corresponding kernels.
For example, the ReLU corresponds to the arc-cosine kernel of order 1 \citep{cho2009} and the identity map corresponds to the linear kernel. The tanh nonlinearity may be approximated by a step function, and a step function corresponds to the arc-cosine kernel of order 0 \citep{cho2009}. 

\paragraph{Layer type}
ConvNets may have several types of layers, including convolutional, partially-connected, and fully connected layers. Each layer is parameterized by filters. Convolutional layers define patches and apply the same set of filters to each patch. On the other hand, partially-connected layers in ConvNets define patches and apply filters that differ across image regions to the patches.  Finally, fully connected layers in ConvNets are equivalent to convolutional layers where the size of the patch is the size of the image.

As in ConvNets, CKNs may have convolutional, partially-connected, and fully connected layers. Recall from Section~\ref{sec:ckn_formulation} that
CKNs project onto a subspace at each layer and that the subspace is defined by a set of filters. At convolutional layers in CKNs the projection is performed onto the same subspace for every patch location. On the other hand, for partially connected layers for CKNs we project onto a different subspace for each image region. Finally, for fully connected layers CKNs project onto a subspace defined by filters that are the size of the feature representation of an entire image.  

\paragraph{Pooling}
Pooling in ConvNets can take many forms, including average pooling. In each case, one defines spatial neighborhoods within the dimensions of the current feature representation (e.g., all $2\times2$ blocks). Within each neighborhood a local statistic is computed from the points within each feature map.\footnote{In the ConvNet literature, in contrast to the kernel literature, a feature map is defined as a slice of the feature representation along the depth dimension. That is, for a given $z$, $(G)_{z, \cdot, \cdot}$ is a feature map.} Concretely, for a spatial neighborhood $\nbhd\subset\{1,\dots,\s_{1}\}\times\{1,\dots,\s_{2}\}$ centered at the point $(x,y)$, average pooling computes
\begin{align}
(\tilde G)_{z, x, y} = \frac{1}{|\nbhd|}\sum_{(x',y') \in \nbhd} (G)_{z,x',y'}
\end{align}
for all $z=1,\dots, \f$.

 Average pooling in ConvNets corresponds to an averaging of the feature maps in CKNs. In addition, any weighted averaging, where the weights are the same across layers, corresponds to a weighted averaging of the feature maps. Specifically, note that in \cite{mairal2014,mairal2016} and \cite{paulin2017} the authors proposed Gaussian pooling for CKNs. In this formulation the weight of a feature map at location $\patch'$ when averaging about a feature map at location $\patch$ is given by $\exp({-\|\patch'-\patch\|_2^2}/(2\sigma^2))$. Here $\sigma$ is a hyperparameter.

\paragraph{Normalization}
There are a wide range of normalizations that have been proposed in the ConvNet literature. %
Normalizations of ConvNets modify the representation at each location $(z,x,y)$ of $G$ by taking into account values in a neighborhood $\nbhd$ of $(z,x,y)$.
One such normalization is local response normalization. 
Local response normalization computes
\begin{equation*}
(\tilde G)_{z,x,y} = \frac{(G)_{z,x,y}}{\left(\alpha + \beta\sum_{(z',x',y')\in\nbhd} (G)_{z',x',y'}^2\right)^{\gamma}},
\end{equation*}
for all $z,x,y$ where $\alpha, \beta$, and $\gamma$ are parameters that can be learned.
In local response normalization the neighborhood $\nbhd$ is typically defined to be at a given spatial location $(x,y)$ across some or all of the feature maps. However, the spatial scale of the neighborhood could be expanded to be defined across multiple locations within feature maps.

In CKNs there is not a meaningful counterpart to defining a neighborhood across only a subset of the feature maps. Therefore, we present only a counterpart to using all feature maps at once. Consider local response normalization in ConvNets when taking the neighborhood to be the locations across all feature maps within a given spatial area. This roughly corresponds to dividing by a power of the norm of a patch in CKNs when $\alpha=0$ and $\beta=1$. %

\subsection{Example translation}
We illustrate the translation from ConvNets to CKNs on the LeNet-5 architecture. LeNet-5 was one of the first modern ConvNets. However, the order of its modules differs from that of many recent ConvNets, as the nonlinearities follow the pooling. See Appendix~\ref{app:convnet_desc} for the details of the LeNet-5 ConvNet and Figure~\ref{fig:lenet5_arch} for a depiction of the translated CKN architecture. For clarity of exposition we represent the features at each layer of the CKN as a 3D tensor rather than a 2D matrix as in Section~\ref{sec:ckn_formulation}. In performing the translation, we use the approximate correspondence between the tanh activation and the arc-cosine kernel of order 0.  

\paragraph{Layer 1}
Let $\F_0\in\mbr^{1\times32\times32}$ denote the initial representation of an image. The first layer is the counterpart to a convolutional layer and consists of applying a linear kernel and projecting onto a subspace. Let $\W_1\in\mbr^{6\times1\times5\times5}$. %
For $k=1,\dots, 6$, let
\begin{equation*}
\F'_1(k, \cdot, \cdot) = F_0(1, \cdot, \cdot)\star \W_1(k, 1, \cdot, \cdot)
\end{equation*}
where $\star$ denotes the convolution operation.
Then the output of the first layer is given by $\F_1\in\mbr^{6\times28\times28}$ with 
\begin{equation*}
\F_1(\cdot, i, j) = \left(\sum_{m,n=1}^5 \W_1(\cdot, 1, m, n)\W_1(\cdot, 1, m, n)^T\right)^{-1/2}\F'_1(\cdot, i, j)
\end{equation*}
for $i,j=1,\dots, 28$.

\paragraph{Layer 2}
 Next, the second layer in the ConvNet performs average pooling and subsampling with learnable weights and then applies a pointwise nonlinearity (tanh). The corresponding CKN pools and subsamples and then applies an arc-cosine kernel on $1\times1$ patches. Define $\emat_{2}=(e_1, e_3, e_5, \dots, e_{27})$ where $e_i\in\mbr^{27}$ is a vector with 1 in element $i$ and 0 elsewhere.  The pooling and subsampling result in $\F'_2\in\mbr^{6\times14\times14}$ given by
\begin{equation*}
\F'_2(k, \cdot, \cdot) = \emat_{2}^T\left(\F_1(k, \cdot, \cdot)\star\frac{1}{4}\mathbbm{1}_{2\times2}\right)\emat_{2}
\end{equation*}
for $k=1,\dots, 6$.
Next, let $W_2\in\mbr^{6\times6}$ be the identity matrix and let $\k_2:\mbr^6\times\mbr^6\to\mbr$ be the arc-cosine kernel of order zero. 
The output of the second layer is then $\F_2\in\mbr^{6\times14\times14}$ given by
\begin{equation*}
\F_2(\cdot, i, j) = \left(\left[\k_2\left(\W_2(m, \cdot), \W_2(n, \cdot)\right)\right]_{m,n=1}^{6}\right)^{-1/2}\left[\k_2\left(\F'_2(\cdot, i, j), W_2(m, \cdot)\right)\right]_{m=1}^6
\end{equation*}
for $i,j=1,\dots, 14$.

\paragraph{Layer 3}
The third layer in LeNet-5 is again a convolutional layer. Here we use a complete connection scheme since for the ConvNet we found that empirically a complete connection scheme outperforms an incomplete connection scheme (see Appendix~\ref{app:lenet_incomplete}). Therefore, this layer again consists of applying a linear kernel and projecting onto a subspace. 
For $k=1,\dots, 16$, let
\begin{equation*}
\F'_3(k, \cdot, \cdot) = \sum_{z=1}^{6}  \F_2(z, \cdot, \cdot)\star \W_3(k, z, \cdot, \cdot).
\end{equation*}
Then the output of the third layer is given by $\F_3\in\mbr^{16\times10\times10}$ with 
\begin{equation*}
\F_3(\cdot, i, j) = \left(\sum_{m=1}^6\sum_{n,p=1}^5 \W_3(\cdot, m, n, p)\W_3(\cdot, m, n, p)^T\right)^{-1/2}\F'_3(\cdot, i, j).
\end{equation*}

\paragraph{Layer 4}
The fourth layer is similar to the second layer. The CKN pools and subsamples and then applies an arc-cosine kernel on $1\times1$ patches. Define $\emat_{4}=(e_1, e_3, e_5,  e_{7})$ where $e_i\in\mbr^{7}$ is a vector with 1 in element $i$ and 0 elsewhere.  The pooling and subsampling result in $\F'_4\in\mbr^{16\times5\times5}$ given by
\begin{equation*}
\F'_4(k, \cdot, \cdot) = \emat_{4}^T\left(\F_3(k, \cdot, \cdot)\star\frac{1}{4}\mathbbm{1}_{2\times2}\right)\emat_{4}
\end{equation*}
for $k=1,\dots, 4$.
Next, let $W_4\in\mbr^{16\times16}$ be the identity matrix and let $\k_4:\mbr^{12}\times\mbr^{12}\to\mbr$ be the arc-cosine kernel of order zero. 
The output of the fourth layer is then $\F_4\in\mbr^{16\times5\times5}$ given by
\begin{equation*}
\F_4(\cdot, i, j) = \left(\left[\k_4\left(\W_4(m, \cdot), \W_4(n, \cdot)\right)\right]_{m,n=1}^{16}\right)^{-1/2}\left[\k_4\left(\F'_4(\cdot, i, j), \W_4(m, \cdot)\right)\right]_{m=1}^{16}
\end{equation*}
for $i,j=1,\dots, 5$.

\paragraph{Layer 5}
The fifth layer is a fully connected layer. Let $\W_5\in\mbr^{120\times16\times5\times5}$ and let $\k_5:\mbr^{16}\times\mbr^{16}\to\mbr$ be the arc-cosine kernel of order zero.
 Then the output of this layer is given by $\F_5\in\mbr^{120}$ given by
\begin{equation*}
\F_5 = \left(\left[\k_5\left(\vec(\W_5(m, \cdot, \cdot, \cdot)), \vec(\W_5(n, \cdot, \cdot, \cdot))\right)\right]_{m,n=1}^{120}\right)^{-1/2}\left[\k_5\left(\vec(\F_4), \vec(\W_5(m, \cdot, \cdot, \cdot))\right)\right]_{m=1}^{120}.
\end{equation*}

\paragraph{Layer 6}
Finally, the sixth layer is also a fully connected layer. Let $\W_6\in\mbr^{84\times120}$ and let $\k_6:\mbr^{120}\times\mbr^{120}\to\mbr$ be the arc-cosine kernel of order zero. Then the output is given by $\F_6\in\mbr^{84}$ with
\begin{equation*}
\F_6 = \left(\left[\k_6\left(\W_6(m, \cdot), \W_6(n, \cdot)\right)\right]_{m,n=1}^{84}\right)^{-1/2}\left[\k_6\left(\F_5, \W_6(m, \cdot)\right)\right]_{m=1}^{84}.
\end{equation*}
The output from this layer is the set of features provided to a classifier.

\begin{figure}
\vspace*{-3cm}
\hspace*{-2.5cm}
\begin{center}
\includegraphics[scale=1.0,trim={1cm 5cm 0cm 1cm},clip]{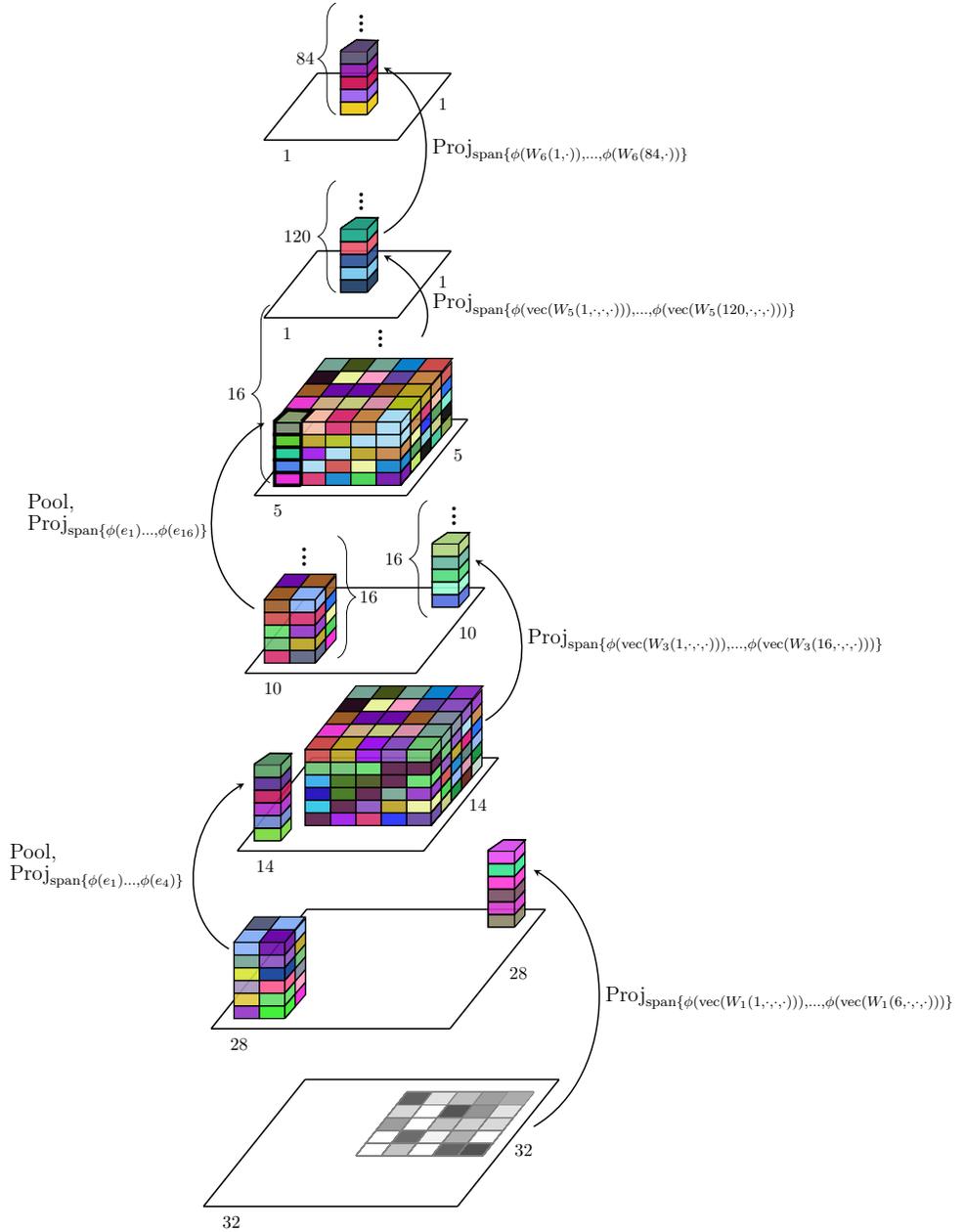}
\end{center}
\caption{\label{fig:lenet5_arch} LeNet-5 CKN architecture. The dimensions of the stacks of blocks are the dimensions of the filters at each layer. The numbers next to curly brackets indicate the number of filters at each layer when the number of filters is not the same as the height of the stack of blocks. The arrows indicate how one block gets transformed into the block at the next layer. The numbers on the sides of the parallelograms indicate the spatial dimensions of the feature representations at each layer. }%
\end{figure}

\section{Supervised training of CKNs}
\label{sec:training}
Like any functional mapping defined as a composition of modules differentiable with respect to their parameters, a CKN can be trained using a gradient-based optimization algorithm. An end-to-end learning approach to train a CKN in a supervised manner was first considered in \cite{mairal2016}. There are three essential ingredients to optimizing CKNs that we contribute in this section: (i) a rigorously derived general gradient formula; (ii) a numerically accurate gradient computation algorithm; and (iii) an efficient stochastic optimization algorithm.

\subsection{Gradient formula}
\label{sec:grad_formula}
When a CKN uses a differentiable dot product kernel $\patchk$, each layer of the CKN is differentiable with respect to its weights and inputs. Therefore, the entire CKN is differentiable. This provides a benefit over commonly used ConvNets that use non-differentiable activation functions such as the ReLU, which must be trained using subgradient methods rather than gradient methods. Note that, while widely used, only weak convergence guarantees are known for stochastic subgradient methods. Moreover, they require sophisticated topological non-smooth analysis notions \citep{davis2019}. As we shall show here, a CKN with a kernel corresponding to a smooth nonlinearity performs comparably to a ConvNet with non-smooth nonlinearities.

The derivatives of the loss function $\mcl$ from Section~\ref{sec:ckn_formulation} with respect to the filters at each layer and the inputs at each layer can be derived using the chain rule. First recall the output of a single convolutional layer presented in Section~\ref{sec:ckn_formulation}:
\begin{equation}\label{eq:ckn_layer}
F_{\ell} = \Psi_{\ell}(\im_{\ell-1}, \W_{\ell})\triangleq 
	(\patchk(\W_\ell\W_\ell^T)+\epsilon I)^{-1/2}\patchk(\W_\ell\E_\ell(\im_{\ell-1})\N_\ell(\im_{\ell-1})^{-1})\N_\ell(\im_{\ell-1})\P_\ell.
\end{equation}
We then have the following proposition, which is detailed in Appendix~\ref{app:gradient_proofs}. 

\begin{proposition}\label{prop:loss_grad}
	Let  $\mcl(y, \langle \W_{L+1}, \F_L\rangle)$ be the loss incurred by an image-label sample $(F_0, y)$, where 
	$F_L$ is the output of $L$th layer of the network described by ~\eqref{eq:ckn_layer} and $W_{L+1}$ parameterizes the linear classifier. 
	Then the 
	Jacobian of the loss 
	with respect to the inner weights $W_{\ell}$, $1\leq\ell\leq L$ is given by
	\begin{align*}
	\nabla_{\vec(W_{\ell})} \mcl\left(y, \left\langle \W_{L+1}, \F_L\right\rangle\right) =
	\mcl'\vec(W_{L+1})^T\left[\prod_{\ell '=\ell+1}^{L} \nabla_{\vec\left(F_{\ell'-1}\right)} \vec\left(\Psi_{\ell'}\right)\right]
	\nabla_{\vec(W_{\ell})} \vec\left(\Psi_{\ell}\right),
	\end{align*}
	where $\mcl'=\frac{\partial \mcl(\bar y, \hat y)}{\partial \hat y }|_{(\bar y, \hat y) = (y,\left\langle \W_{L+1}, \F_L\right\rangle)} $,  $\nabla_{\vec(F_{\ell'-1})} \vec(\Psi_{\ell'})$ is detailed in Proposition~\ref{prop:2layer_grad_part1}, and $\nabla_{\vec(W_{\ell})} \vec(\Psi_{\ell})$ is detailed in Proposition~\ref{prop:1layer_grad}.
\end{proposition}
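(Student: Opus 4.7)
The plan is to apply the chain rule to the composition defining the loss, working layerwise in vectorized form so that each differential is a properly shaped matrix that can be multiplied against the next. Viewing the network as a sequence of maps
\begin{equation*}
F_0 \xrightarrow{\Psi_1(\cdot,\W_1)} F_1 \xrightarrow{\Psi_2(\cdot,\W_2)} \cdots \xrightarrow{\Psi_L(\cdot,\W_L)} F_L \xrightarrow{\langle \W_{L+1},\cdot\rangle} \hat y \xrightarrow{\mcl(y,\cdot)} \mcl,
\end{equation*}
the dependence of $\mcl$ on $\W_\ell$ flows through $F_\ell, F_{\ell+1}, \ldots, F_L$ and then through the linear classifier, so the gradient with respect to $\W_\ell$ is the product of Jacobians of these factors taken in the appropriate order.

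I would first compute the three types of differentials involved. The derivative of $\mcl(y,\cdot)$ at $\hat y = \langle \W_{L+1}, F_L\rangle$ is the scalar $\mcl'$ defined in the statement. Since $\hat y = \vec(\W_{L+1})^T \vec(F_L)$, its Jacobian with respect to $\vec(F_L)$ is the row vector $\vec(\W_{L+1})^T$. The remaining factors are the per-layer Jacobians $\nabla_{\vec(F_{\ell'-1})}\vec(\Psi_{\ell'})$ for $\ell' = \ell+1,\ldots,L$, which propagate backward through the layers, together with the terminal factor $\nabla_{\vec(\W_\ell)}\vec(\Psi_\ell)$, which captures how $\W_\ell$ enters its own layer; these are precisely the quantities handed off to Propositions~\ref{prop:2layer_grad_part1} and~\ref{prop:1layer_grad}. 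Differentiability of each $\Psi_{\ell}$ at the point of interest is justified by the assumption that the dot product kernel $\patchk$ is differentiable together with the fact that $\patchk(\W_\ell \W_\ell^T) + \epsilon \id$ is positive definite (by $\epsilon > 0$), so its inverse square root is smooth on a neighborhood; patch extraction, normalization, and pooling/subsampling are either linear or smooth away from the origin. Composing these factors with consistent vectorization conventions yields exactly the stated formula.

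The main obstacle would ordinarily lie in computing $\nabla_{\vec(F_{\ell'-1})}\vec(\Psi_{\ell'})$ and $\nabla_{\vec(\W_\ell)}\vec(\Psi_\ell)$, because $\Psi_\ell$ is a composition of five nontrivial operations (patch extraction $\E_\ell$, normalization $\N_\ell^{-1}$, the element-wise kernel $\patchk$, left-multiplication by the inverse square root $(\patchk(\W_\ell \W_\ell^T)+\epsilon\id)^{-1/2}$, and pooling/subsampling $\P_\ell$), with $\W_\ell$ entering both inside $\patchk$ and inside the inverse square root. For the present proposition these subcomputations are treated as black boxes supplied by the two auxiliary propositions, so the remaining task is purely bookkeeping: checking that each vectorized Jacobian has matching row and column dimensions so that the product is well-defined, placing the row vector $\mcl'\vec(\W_{L+1})^T$ on the left, the product of backward layer Jacobians in the middle (in decreasing order of $\ell'$ so that the leftmost factor acts on $\vec(F_L)$), and $\nabla_{\vec(\W_\ell)}\vec(\Psi_\ell)$ on the right. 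Using associativity of matrix multiplication, this composition is exactly the expression in the statement.
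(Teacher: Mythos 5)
Your proposal is correct and follows essentially the same route as the paper: the paper likewise treats this proposition as a direct application of the chain rule to the vectorized layer maps, deferring all substantive computation to the per-layer Jacobians in Propositions~\ref{prop:1layer_grad} and~\ref{prop:2layer_grad_part1}. Your added remarks on why each $\Psi_\ell$ is differentiable (differentiability of the kernel, positive definiteness of $\patchk(\W_\ell\W_\ell^T)+\epsilon\id$ ensuring smoothness of the inverse square root, and smoothness of the remaining operations) are consistent with, and slightly more explicit than, what the paper states.
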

Computing the derivatives of the output of a convolutional layer involves several linear algebra manipulations. The critical component lies in differentiating through the matrix inverse square root. For this we use the following lemma.

\begin{lemma}
\label{lemma:matrix_sqrt_main}
Define the matrix square root function $\gfun:S^n_{++}\to\mbr^{n\times n}$ by $\gfun(A)=A^{1/2}$.
Then for a positive definite matrix $A\in S^n_{++}$ and a matrix $H\in\mbr^{n\times n}$ such that $A+H \in S^n_{++}$ we have
\begin{align*}
\vec(\gfun(A+H)) = \vec(\gfun(A)) +  
		\left(\id_n \otimes A^{1/2}+A^{1/2}\otimes \id_n\right)^{-1}\vec(H) + o(\|H\|_F).
\end{align*}
\end{lemma}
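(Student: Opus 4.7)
The plan is to differentiate the defining identity $g(A)^2 = A$ implicitly. Write $K \triangleq g(A+H) - g(A)$; since $A, A+H \in S^n_{++}$, the matrices $g(A) = A^{1/2}$ and $g(A+H)$ are positive definite, hence symmetric, so $K$ is symmetric. Expanding $(A^{1/2} + K)^2 = A + H$ yields the Sylvester-type relation
\begin{equation*}
A^{1/2} K + K A^{1/2} \;=\; H - K^2.
\end{equation*}

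The next step is to show $\|K\|_F = O(\|H\|_F)$ so that the quadratic remainder $K^2$ is negligible at first order. Let $L : X \mapsto A^{1/2} X + X A^{1/2}$. Because $A^{1/2}$ is positive definite with eigenvalues $\mu_1,\dots,\mu_n > 0$, the linear operator $L$ has eigenvalues $\mu_i + \mu_j > 0$ and is therefore invertible with $\|L^{-1}\|_{\mathrm{op}} \le 1/(2\mu_{\min})$. Rearranging gives $K = L^{-1}(H) - L^{-1}(K^2)$; taking Frobenius norms and applying $\|K^2\|_F \le \|K\|_F^2$, a standard small-perturbation (fixed-point) argument shows that for $\|H\|_F$ sufficiently small, $\|K\|_F \le c\,\|H\|_F$ with $c$ depending only on $A$. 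Hence $\|K^2\|_F = O(\|H\|_F^2) = o(\|H\|_F)$, so $L(K) = H + o(\|H\|_F)$.

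To translate this operator identity into the stated Kronecker form, apply $\vec(BXC) = (C^\top \otimes B)\vec(X)$ to both terms of $L(K)$ and use the symmetry $(A^{1/2})^\top = A^{1/2}$, giving
\begin{equation*}
\vec(L(K)) \;=\; \left(\id_n \otimes A^{1/2} + A^{1/2} \otimes \id_n\right)\vec(K).
\end{equation*}
The Kronecker sum on the right has eigenvalues $\mu_i + \mu_j > 0$ and is therefore invertible. Multiplying $L(K) = H + o(\|H\|_F)$ by its inverse yields
\begin{equation*}
\vec(K) \;=\; \left(\id_n \otimes A^{1/2} + A^{1/2} \otimes \id_n\right)^{-1}\vec(H) + o(\|H\|_F),
\end{equation*}
which is the claimed expansion of $\vec(g(A+H))$ around $\vec(g(A))$.

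The main obstacle is the bootstrap step: one cannot invoke the stated expansion to bound $\|K\|_F$, so a self-contained argument is needed — either the implicit function theorem applied to the smooth map $X \mapsto X^2$ at $A^{1/2}$ (whose derivative is the invertible map $L$), or the direct fixed-point contraction sketched above — in order to first establish Lipschitz continuity of $g$ at $A$ before the quadratic remainder can be dismissed.
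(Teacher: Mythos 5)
Your proof is correct and starts from the same algebraic identity as the paper's --- expanding $(A^{1/2}+K)^2 = A+H$ into a Sylvester equation and vectorizing via $\vec(BXC)=(C^\top\otimes B)\vec(X)$ --- but the two arguments diverge on how the remainder is controlled, and the divergence is substantive. The paper keeps the exact relation $H = (A+H)^{1/2}K + KA^{1/2}$, so no quadratic term ever appears; the price is that the Lyapunov operator it inverts is the perturbed one, $\id_n\otimes(A+H)^{1/2}+A^{1/2}\otimes\id_n$, and to replace it by the unperturbed operator the paper proves a separate $1/2$-H\"older continuity estimate for the matrix square root (via an integral representation of $x\mapsto x^{1/2}$), which suffices because the resulting $O(\|H\|_F^{1/2})$ operator error multiplies $\vec(H)$ and hence contributes $o(\|H\|_F)$. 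You instead keep the unperturbed operator $L:X\mapsto A^{1/2}X+XA^{1/2}$ and push $K^2$ into the remainder; this forces you to establish the stronger bound $\|K\|_F=O(\|H\|_F)$ (H\"older-$1/2$ continuity alone would only give $\|K\|_F^2=O(\|H\|_F)$, which is not $o(\|H\|_F)$), and your fixed-point/implicit-function-theorem step delivers exactly that. Your route is arguably cleaner and yields a stronger intermediate fact (local Lipschitz continuity of $g$ rather than H\"older-$1/2$). The one point to write out carefully is the identification of the fixed point of $X\mapsto L^{-1}(H)-L^{-1}(X^2)$ with $K$ itself: run the contraction on the space of symmetric matrices (which both $L^{-1}$ and squaring preserve), so that the fixed point yields a symmetric square root of $A+H$ close to $A^{1/2}$, hence positive definite, hence equal to $g(A+H)$ by uniqueness of the principal square root. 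With that detail supplied, the proof is complete.
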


Hence, computing the gradient of the CKN in this manner consists of solving a continuous Lyapunov equation. The remainder of the gradient computations involve Kronecker products and matrix multiplications. 

\subsection{Differentiating through the matrix inverse square root}
\label{sec:intertwined_newton}
\begin{algorithm}[t!]         
	\caption{\label{alg:intertwined_newton} $\textsc{Intertwined Newton Method for Matrix Inverse Square Root}$}      
	\begin{algorithmic}                     
		\STATE {\bfseries Input:}{
			Positive definite matrix $M\in\mbr^{d\times d}$,
			number of iterations $t_{\text{max}}$ \\
		}
		\STATE {\bfseries Initialize:}{
			$S_0=\|M\|_F^{-1}M$, $T_0=\id_d$
		}
		\FOR{$t=1,\dots, t_{\text{max}}$}{ 
			\STATE{$S_{t+1} \gets \frac{1}{2}S_t\left(3\id_d - T_tS_t\right)$}\\
			\STATE{$T_{t+1} \gets \frac{1}{2}\left(3\id_d - T_tS_t\right)T_t$}
		}
		\ENDFOR\\
		$ T\gets \|M\|_F^{-1/2}T_{t_{\text{max}}}$\\
		\STATE {\bfseries Output: }{$T$ (the approximate matrix inverse square root of $M$)}
	\end{algorithmic}
\end{algorithm}
The straightforward approach to computing the derivative of the matrix inverse square root involved in Proposition~\ref{prop:loss_grad} is to call a solver for continuous-time Lyapunov equations. However this route becomes an impediment for large-scale problems requiring fast matrix-vector computations on GPUs. An alternative is to leave it to current automatic differentiation software, which computes it through a singular value decomposition. This route does not leverage the structure and leads to worse estimates of  gradients (See Section~\ref{sec:training_comparison}).

Here we propose a simple and effective approach based on two intertwined Newton methods. Consider the matrix $M=\k\left(\W\W^T\right) + \epsilon \id_\f \in S_{++}^\f$. We aim to compute $M^{-1/2}$ by an iterative method.
Denote by $\lambda_1, \ldots, \lambda_\f$ the eigenvalues of $M$ and let
\[
H = \left(\begin{matrix}
0 & M \\
\id_d & 0
\end{matrix}\right).
\]
As the eigenvalues of $H$ are $(\pm \sqrt{\lambda_i})_{i=1,\ldots, \f}$, $H$ can be diagonalized as
$H = U \Lambda U^{-1}$ with $\Lambda$ the diagonal matrix of its eigenvalues. Let $\sign(H) = U \sign(\Lambda) U^{-1}$ where $\sign(\Lambda)$ is a diagonal matrix whose diagonal is the sign of the eigenvalues in $\Lambda$. Then $\sign(H)$ satisfies \citep[Theorem 5.2]{higham2008}
\[
\sign(H) =  \left(\begin{matrix}
0 & M^{1/2} \\
M^{-1/2} & 0 
\end{matrix}\right).
\]
The sign matrix of $H$ is a square root of the identity, i.e., $\sign(H)\sign(H)= \id_\f$. It can then be computed by a Newton's method starting from $X_0= H$ followed by $X_{t+1} = (X_t + X_t^{-1})/2$. Provided that $\|H\|_2\leq 1$, it converges quadratically to $\sign(H)$~\citep[Theorem 5.6]{higham2008}. Decomposing the iterates of this Newton's method on the blocks defined in $H$ give Denman and Beavers' algorithm \citep{denman1976}. This algorithm begins with $S_0=M$ and $T_0=\id_\f$ and proceeds with the iterations $S_{t+1} = (S_t + T_t^{-1})/2$ and $T_{t+1} = (T_t + S_t^{-1})/2$. The sequence $T_t$ then converges to $M^{-1/2}$.

Each iteration, however, involves the inverses of the iterates $T_t$ and $S_t$, which are expensive to compute when using a large number of filters.
We propose applying the Newton method one more time, yet now to compute $T_t^{-1}$ and $S_t^{-1}$ (sometimes called the Newton-Schulz method), starting respectively from $S_t$ and $T_t$ as initial guesses \citep{higham1997}. An experimental evaluation of this strategy when we run, say, $20$ iterations of the outer Newton method (to compute the inverse matrix square root) yet only $1$ iteration of the inner Newton method (to compute the inverse matrices) demonstrates that it is remarkably effective in practice (See Figure~\ref{fig:svd_vs_newton} in Section~\ref{sec:training_comparison}). We present the pseudocode in Algorithm~\ref{alg:intertwined_newton} for the case where one iteration of the inner Newton's method is used. Note that we first scale the matrix $M$ by its Frobenius norm to ensure convergence.

By differentiating through these iterations we can obtain the derivatives of $M^{-1/2}$ with respect to the entries of $M$. Comparing the accuracy of the gradient obtained using this algorithm to the one returned using automatic differentiation in PyTorch, we find that our approach is twice as accurate. Furthermore, the algorithm only involves matrix multiplications, which is critical to scale to large problems with GPU computing.  
Hence, this provides a better means of computing the gradient. %

\subsection{Training procedures}
\label{sec:supervised_training}

	The training of CKNs consists of two main stages: unsupervised and supervised learning. The unsupervised learning entails first initializing the filters with an unsupervised method and then fixing the filters and optimizing the ultimate layer using the full dataset. The supervised learning entails training the whole initialized architecture using stochastic estimates of the objective. 
	Here we detail the second stage, for which we propose a new approach. Algorithm~\ref{alg:supervised_learning} outlines the overall CKN training with this new method. 
	
	\begin{algorithm}[t!]         
\caption{\label{alg:supervised_learning} $\textsc{Supervised Training of CKNs}$}                         
  \begin{algorithmic}
\STATE {\bfseries Input:}{\\
- Uninitialized CKN with $L$ layers\\
 - Inputs $\F_0^{(i)}\in\mbr^{\f_0\times\p_0'}$ and labels $y^{(i)}$, $i=1,\dots, n$\\
 - Number of iterations of alternating minimization to perform, $T$ \\
}
\vspace{0.1cm}
\STATE{\textbf{Initialization:}}
\STATE{-- Perform unsupervised training of the filters $\W_1,\dots, \W_L$ using spherical $k$-means.}
\STATE{-- Compute the features $\F_L^{(i)}$ for all inputs $i$. Train the weights $\W_{L+1}$ of the classifier using a quasi-Newton method.}\\  
\vspace{0.1cm}
\STATE{\textbf{Supervised training:}}
\FOR{$j=1,\dots, T$}
\STATE{-- Compute the features $\F_L^{(i)}$ for a mini-batch of inputs.}\\  
\STATE{-- Perform one step of the Ultimate Layer Reversal method (see Algorithm~\ref{alg:ulr}).}
\ENDFOR
\STATE{-- Compute the features $\F_L^{(i)}$ for all inputs $i$. Train the weights $\W_{L+1}$ of the classifier using a quasi-Newton method.}\\  
\STATE {\bfseries Output: }{$\W_1,\dots, \W_L, \W_{L+1}$ (the optimal filters and classifier weights) }
\end{algorithmic}
\end{algorithm}

	\paragraph{Stochastic gradient optimization on manifolds}\label{sec:stoc_grad}
	A major difference between CKNs and ConvNets is the spherical constraints imposed on the inner layers. On the implementation side, this simply requires an additional projection during the gradient steps for those layers. 
	On the theoretical side it amounts to a stochastic gradient step on a manifold whose convergence to a stationary point is still ensured, provided that the classifier is not regularized but constrained. Specifically, given image-label pairs $(F_0^{(i)}, y^{(i)})_{i=1}^n$, we consider the constrained empirical risk minimization problem
	\begin{align}
	\min_{\W_1,\dots,\W_{L+1}}\quad &  \frac{1}{n} \sum_{i=1}^n  \mathcal{L}\left(y^{(i)}, \left\langle \W_{L+1}, \im_L^{(i)}\right\rangle\right) \label{eq:training_constrained}\\
	\mbox{subject to} \quad & W_\ell \in S^{d_\ell} \qquad \qquad \mbox{for $\ell=1,\ldots, L$} \nonumber \\ 
	& \|W_{L+1}\|_F \leq \lambda ,\nonumber
	\end{align}
	where $S^{d_\ell}$ is the product of Euclidean unit spheres at the $\ell$\textsuperscript{th} layer and $F_L^{(i)}$ is the output of $L$ layers of the network described by~\eqref{eq:ckn_layer}.
	Projected stochastic gradient descent draws a mini-batch $B_t$ of samples at iteration $t$, forming an estimate $f_{B_t}$ of the objective, and performs the following update: 
	\begin{align} \tag{SGO}
	W_\ell^{(t+1)} & = \Proj_{S^{d_\ell}}\left(W_\ell^{(t)} - \stepsize_{t} \nabla_{W_\ell}  f_{B_t}(W^{(t)}) \right) \qquad \qquad \mbox{for $\ell=1,\ldots, L$} \label{eq:sgo}\\
	W_{L+1}^{(t+1)} & = \Proj_{\mathcal{B}_{2,\lambda}} \left(W_{L+1}^{(t)} - \stepsize_{t} \nabla_{W_{L+1}}  f_{B_t}(W^{(t)}) \right), \nonumber
	\end{align}
	where $\mathcal{B}_{2,\lambda}$ is the Euclidean ball centered at the origin of radius $\lambda$ and $\gamma_t$ is a step size. Its convergence is stated in the following proposition, detailed in Appendix~\ref{app:sg_manifolds}.
	\begin{proposition}\label{prop:sgo_constrained}
	Assume the loss in the constrained training problem~\eqref{eq:training_constrained} and the kernel defining the network~\eqref{eq:ckn_layer} are continuously differentiable.
	Projected stochastic gradient descent with step size $\stepsize_t = c/\sqrt{T}$, where $c>0$ and $T$ is the maximum number of iterations, finds an $O(1/\sqrt{T})$-stationary point.
	\end{proposition}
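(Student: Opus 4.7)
The plan is to reduce the claim to a standard convergence result for projected stochastic gradient descent on the product of smooth compact Riemannian submanifolds (the spheres $S^{d_\ell}$) and a convex set (the Euclidean ball $\mathcal{B}_{2,\lambda}$), using compactness of the feasible set to turn the qualitative hypothesis of continuous differentiability into the quantitative smoothness constants needed by the nonconvex SGD analysis.

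First I would verify that the empirical objective $f(W)=\frac{1}{n}\sum_{i=1}^n \mathcal{L}(y^{(i)}, \langle \W_{L+1}, \F_L^{(i)}\rangle)$ has a bounded, Lipschitz gradient on the feasible set. Each layer map $\Psi_\ell$ of Equation~\eqref{eq:ckn_layer} is built from the kernel $\patchk$ (continuously differentiable by hypothesis, and in fact $C^\infty$ for the normalized kernels used in the paper), the linear operators $\E_\ell$, $\P_\ell$ and the normalization $\N_\ell$, together with the regularized matrix inverse square root $(\patchk(\W_\ell\W_\ell^T)+\epsilon\id)^{-1/2}$; the $\epsilon>0$ keeps this function smooth away from the boundary of the PSD cone. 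Spherical constraints bound each $\W_\ell$, so by induction on $\ell$ the feature maps $\F_\ell^{(i)}$ and their derivatives remain bounded uniformly over the feasible set. Composing with $\mathcal{L}$ and using $\W_{L+1}\in\mathcal{B}_{2,\lambda}$, the objective is $C^1$ on a neighborhood of the feasible set; compactness then gives a global gradient bound and a Lipschitz constant $L_f$ for $\nabla f$, and the mini-batch estimator $\nabla f_{B_t}$ is unbiased with variance bounded by some $\sigma^2$.

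Second I would apply a projected SGD convergence theorem for this product geometry. Euclidean projection onto $S^{d_\ell}$ (i.e.\ normalization) is a first-order retraction with uniformly bounded second-order deviation from the Riemannian exponential map on the unit sphere, and projection onto $\mathcal{B}_{2,\lambda}$ is the usual convex projection, firmly non-expansive. Define the generalized gradient $G(W)$ as the concatenation of the Riemannian gradients on each sphere block and the projected gradient on the ball block; then $\|G(W)\|=0$ characterizes KKT points of~\eqref{eq:training_constrained}. Combining the retraction-based descent lemma on each manifold factor with the smoothness of $f$, a computation along the lines of Ghadimi--Lan and Bonnabel yields
\begin{equation*}
\Expect f(W^{(t+1)}) \leq \Expect f(W^{(t)}) - \stepsize_t \Expect \|G(W^{(t)})\|^2 + C\stepsize_t^2 (1+\sigma^2),
\end{equation*}
for a constant $C$ absorbing $L_f$ and the retraction curvature. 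Summing from $t=1$ to $T$, using boundedness of $f$ on the compact feasible set, and choosing $\stepsize_t=c/\sqrt{T}$ gives $\min_{t\leq T}\Expect\|G(W^{(t)})\|^2=O(1/\sqrt{T})$.

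The main obstacle, handled in Appendix~\ref{app:sg_manifolds}, is establishing the retraction-based descent lemma with uniform constants across the product of spheres: quantifying how much the Euclidean projection of a stochastic step onto a sphere deviates from an exact Riemannian update, and combining this controlled error with the convex projection on the $W_{L+1}$ block. Compactness of the spheres keeps the relevant curvature constants uniform, which reduces the analysis to routine bookkeeping around existing manifold-SGD results and produces the stated $O(1/\sqrt{T})$ rate.
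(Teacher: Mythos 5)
Your proposal follows essentially the same route as the paper: Appendix~\ref{app:sg_manifolds} likewise frames~\eqref{eq:training_constrained} as optimization on the compact product manifold $S^{d_0}\times\cdots\times S^{d_{L-1}}\times\mathcal{B}_{2,\lambda}$ with the block-coordinate projection as retraction, verifies the retraction-based quadratic upper bound and bounded-gradient conditions via smoothness and compactness, and invokes a standard manifold-SGD rate (Theorem~9 of \citet{hosseini2017}, in place of your Ghadimi--Lan/Bonnabel-style derivation) to obtain the $O(1/\sqrt{T})$ stationarity guarantee. Your layer-by-layer argument for smoothness of the objective is a more explicit version of the paper's appeal to local Lipschitz and smoothness conditions on compact manifolds, so the two proofs coincide in substance.
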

In practice we use the penalized formulation to compare with classical optimization schemes for ConvNets. %
	
	\paragraph{Ultimate layer reversal} \label{sec:reversal}
	The network architectures present a discrepancy between the inner layers and the ultimate layer: the former computes a feature representation, while the latter is a simple classifier that could be optimized easily once the inner layers are fixed. This motivates us to back-propagate the gradient in the inner layers \emph{through the classification performed in the ultimate layer}. 
	Formally, consider the regularized empirical risk minimization problem~\eqref{eq:training_pb},
	\begin{align*}
		\min_{W \in \mathcal{C}, V}\quad & f(W,V) \coloneqq \frac{1}{n} \sum_{i=1}^n \mathcal{L}\left(y^{(i)}, \left\langle V, \im_L^{(i)}(W)\right\rangle\right)+\lambda \Vert \V\Vert_F^2
	\end{align*}
	where $W = (W_1,\ldots,W_L)$ denotes the parameters of the inner layers constrained on spheres in the set $\mathcal{C}$, $V = W_{L+1}$ parameterizes the last layer and $\im_L^{(i)}(W)$, $i=1,\dots, n$ are the feature representations of the images output by the network. The problem can be simplified as
	\[
	\min_{W \in \mathcal{C},V} f(W,V) = \min_{W \in \mathcal{C}}  \simpobj(W) \qquad  \mbox{where} \qquad \simpobj(W) \triangleq \min_V f(W, V) .
	\]
	Strong convexity of the classification problem ensures that the simplified problem is differentiable and its stationary points are stationary points of the original objective. This is recalled in the following proposition, which is detailed in Appendix~\ref{app:ulr}.
	\begin{proposition}\label{prop:ulr_grad}
	Assume that $f(W,V)$ is twice differentiable and that for any $W$, the partial functions $V \rightarrow f(W,V)$ are strongly convex. Then the simplified objective $\simpobj(W) = \min_V f(W, V) $ is differentiable and satisfies
	\[
	\|\nabla \simpobj(W)\|_2 = \|\nabla f(W, V^*)\|_2,
	\]
	where $V^* = \argmin_V f(W, V) $. 
	\end{proposition}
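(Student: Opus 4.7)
The plan is to combine the envelope theorem (a Danskin-type argument) with the implicit function theorem. Strong convexity of $V \mapsto f(W,V)$ for every fixed $W$ guarantees that the minimizer $V^*(W) = \argmin_V f(W,V)$ exists and is unique, so $\simpobj(W) = f(W, V^*(W))$ is well defined pointwise. The first-order optimality condition at the (unconstrained) minimizer reads $\nabla_V f(W, V^*(W)) = 0$, and this identity is what will ultimately kill the ``last-layer'' component of the gradient.

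Next I would establish differentiability of $V^*(W)$ in order to justify applying the chain rule to $\simpobj$. Consider $\Phi(W,V) \triangleq \nabla_V f(W,V)$. Since $f$ is twice differentiable, $\Phi$ is $C^1$. The Jacobian $\nabla_V \Phi(W,V) = \nabla^2_{VV} f(W,V)$ is symmetric positive definite on the whole domain by strong convexity in $V$, hence invertible. At any point $(W_0, V^*(W_0))$ we have $\Phi(W_0, V^*(W_0)) = 0$, so the implicit function theorem produces a $C^1$ map $W \mapsto V^*(W)$ locally satisfying $\Phi(W, V^*(W)) = 0$, with
\[
\nabla_W V^*(W) = -\bigl[\nabla^2_{VV} f(W, V^*(W))\bigr]^{-1} \nabla^2_{VW} f(W, V^*(W)).
\]
By uniqueness of the minimizer this local function coincides globally with $V^*(\cdot)$.

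Now I apply the chain rule to $\simpobj(W) = f(W, V^*(W))$:
\[
\nabla \simpobj(W) = \nabla_W f(W, V^*(W)) + [\nabla_W V^*(W)]^T \, \nabla_V f(W, V^*(W)).
\]
The second term vanishes because $\nabla_V f(W, V^*(W)) = 0$, yielding the envelope identity $\nabla \simpobj(W) = \nabla_W f(W, V^*(W))$. The full gradient of $f$ at $(W, V^*)$ decomposes in block form as $\nabla f(W, V^*) = \bigl(\nabla_W f(W, V^*),\, \nabla_V f(W, V^*)\bigr) = \bigl(\nabla_W f(W, V^*),\, 0\bigr)$, so taking Euclidean norms gives $\|\nabla f(W, V^*)\|_2 = \|\nabla_W f(W, V^*)\|_2 = \|\nabla \simpobj(W)\|_2$, which is the claimed equality.

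I do not anticipate a serious obstacle: the argument is a standard envelope-theorem computation. The only subtlety is justifying the use of the chain rule, which requires differentiability of $V^*(W)$; this is precisely where strong convexity (via invertibility of $\nabla^2_{VV} f$) is used to invoke the implicit function theorem. Note that in the statement only the \emph{norm} equality is claimed, which matches the intuition that although $\nabla \simpobj$ lives only in the $W$-block, the full gradient of $f$ at the partial optimum has no residual $V$-component.
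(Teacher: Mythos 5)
Your argument is correct and follows essentially the same route as the paper's proof: invoke the implicit function theorem (using invertibility of $\nabla_V^2 f$ from strong convexity) to get differentiability of $V^*(W)$, then apply the chain rule and use $\nabla_V f(W, V^*) = 0$ to reduce both sides to $\|\nabla_W f(W, V^*)\|_2$. Your write-up merely spells out the envelope-theorem cancellation and the explicit formula for $\nabla_W V^*(W)$ more fully than the paper does.
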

	Therefore if a given $W^*$ is $\epsilon$-near stationary for the simplified objective $\simpobj$, then the pair $(W^*, V^*(W))$, where $V^*(W) = \argmin_V f(W, V)$, is $\epsilon$-near stationary for the original objective $f$.
	
	\emph{Least squares loss} In the case of the least squares loss, the computations can be performed analytically, as shown in Appendix~\ref{app:ulr}. 
	However, the objective cannot be simplified on the whole dataset, since it would lose its decomposability in the samples. Instead, we apply this strategy on mini-batches. I.e., at iteration $t$, denoting $f_{B_t}$ the objective formed by a mini-batch $B_t$ of the samples, the algorithm updates the inner layers via, for $\ell =1,\ldots L$,
	\[
	W_\ell^{(t+1)} = \Proj_{S^{d_\ell}}\left(W_\ell^{(t)} - \stepsize_{t} \Proj_{S^{d_\ell}}\left(\nabla_{W_\ell} \simpobj_{B_t}(W^{(t)})\right) \right)
	\]
	where $\simpobj_{B_t}(W^{(t)}) = \min_V f_{B_t}(W^{(t)}, V) $ and we normalize the gradients by projecting them on the spheres to use a single scaling for all layers.

	\emph{Other losses}
	For other losses such as the multinomial loss, no analytic form exists for the minimization. At each iteration $t$ we therefore approximate the partial objective $g_{B_t}(\cdot;W):V \rightarrow f_{B_t}(W, V)$ on the mini-batch $B_t$ by a regularized quadratic approximation and perform the step above on the inner layers. 
The ultimate layer reversal step at iteration $t$ is detailed in Algorithm~\ref{alg:ulr}.
 The quadratic approximation $q_{g_{B_t}}^{(t)}(V; W_{1:L})$ in Step~\ref{item:quad_approx} depends on the current point $V^{(t)}$ and can be formed using the full Hessian or a diagonal approximation of the Hessian. 
	The gradient in Step~\ref{item:grad_step} is computed by back-propagating through the operations. 

	\begin{algorithm}[t!]         
\caption{\label{alg:ulr} $\textsc{Ultimate Layer Reversal Step}$}                         
  \begin{algorithmic}
\STATE {\bfseries Input:}{\\
- Mini-batch of inputs $B_t$\\
- Overall objective function $f(W,V)$\\
 - Current iterates $W^{(t)}$, $V^{(t)}$\\
- Step size $\gamma_t$\\
- Regularization parameter $\tau$\\
}
\vspace{0.1cm}
\algorithmicrepeat
\STATE{1. Approximate the classifier around current point $V^{(t)}:$   \label{item:quad_approx}
		\[
		 g_{B_t}(V;  W^{(t)})  \approx q_{g_{B_t}}^{(t)}(V; W^{(t)}) + \frac{\tau}{2} \|V-V^{(t)}\|_F^2.
		 \]}
\STATE{2. Minimize the approximation:
		\[
		\simpobj _{B_t}(W^{(t)}) = \min_V q_{g_{B_t}}^{(t)}(V; W^{(t)}) + \frac{\tau}{2} \|V-V^{(t)}\|_2^2.
		\]}
\STATE{3. Take a projected gradient step:  \label{item:grad_step}
		\[
		 W_\ell^{(t+1)}  = \Proj_{S^{d_\ell}}\left(W_\ell^{(t)}  - \stepsize_{t} \Proj_{S^{d_\ell}}\left(\nabla_{W_\ell} \simpobj_{B_t}(W^{(t)})\right) \right) \quad \mbox{for $\ell=1,\ldots, L$}.
		 \]}
\STATE{4. Update the classifier on which the approximation is taken: 
		\[
		V^{(t+1)}  = \underset{V}{\text{argmin}} \,q_{g_{B_t}}^{(t)}(V; W^{(t+1)}) + \frac{\tau}{2} \|V-V^{(t)}\|_F^2.
		\]}
\STATE {\bfseries Output: }{$W^{(t+1)}$, $V^{(t+1)}$ }
\end{algorithmic}
\end{algorithm}

\section{Experiments}
In the experiments we seek to address the following two questions:
\begin{enumerate}
\item  How well do the proposed training methods perform for CKNs?
\item Can a supervised CKN attain the same performance as its ConvNet counterpart? 
\end{enumerate}
Previous works reported that specially-designed CKNs can achieve comparable performance to ConvNets in general on MNIST and CIFAR-10 \citep{mairal2014,mairal2016}. Another set of previous works designed hybrid architectures mixing kernel-based methods and ConvNet ideas \citep{bruna2013,huang2014,dai2014,yang2015,oyallon2017,oyallon2018}. 
We are interested here in whether, given a ConvNet architecture, an analogous CKN can be designed and trained to achieve similar or superior performance. Our purely kernel-based approach stands in contrast to previous works as, for each (network, dataset) pair, we consider a ConvNet and its CKN \emph{counterpart}, hence compare them on an equal standing, for varying numbers of filters.

\subsection{Experimental details}
\label{sec:experiments_methods}
The experiments use the datasets MNIST and CIFAR-10 \citep{lecun1998a,krizhevsky2009}.  
MNIST consists of 60,000 training images and 10,000 test images of handwritten digits numbered 0-9 of size $28\times28$ pixels. In contrast, CIFAR-10 consists of 50,000 training images and 10,000 test images from 10 classes of objects of size $3\times32\times32$ pixels. %

The raw images are transformed prior to being input into the networks. Specifically, the MNIST images are standardized while the CIFAR-10 images are standardized channel-wise and then ZCA whitened on a per-image basis. %
Validation sets are created for MNIST and CIFAR-10 by randomly separating the training set into two parts such that the validation set has 10,000 images. 

The networks we consider in the experiments are LeNet-1 and LeNet-5 on MNIST \citep{lecun1998a} and All-CNN-C on CIFAR-10 \citep{springenberg2014,krizhevsky2009}. %
 LeNet-1 and LeNet-5 are prominent examples of first modern versions of ConvNets. They use convolutional layers and pooling/subsampling layers and achieved state-of-the-art performance on digit classification tasks on datasets such as MNIST.
 The ConvNets from \citet{springenberg2014}, including All-CNN-C, were the first models used to make the claim that pooling is unnecessary. All-CNN-C was one of the best-performing models on CIFAR-10 at the time of publication. %
 For mathematical descriptions of the ConvNets and their CKN counterparts, see Appendices~\ref{app:convnet_desc} and \ref{app:ckn_desc}, respectively.

Using the principles outlined in Section~\ref{sec:translation}, we translate each architecture to its CKN counterpart. The networks are in general reproduced as faithfully as possible. However, there are a few differences between the original implementations and ours. In particular, the original LeNets have an incomplete connection scheme at the third layer in which each feature map is only connected to a subset of the feature maps from the previous layer. This was included for computational reasons. In our implementation of the LeNets we find that converting the incomplete connection scheme to a complete connection scheme does not decrease performance (See Appendix~\ref{app:lenet_incomplete}). We therefore use the complete connection schemes in our ConvNet and CKN implementations. In addition, the original All-CNN-C has a global average pooling layer as the last layer. In order to have trainable unconstrained parameters in the CKN, we add a fully connected layer after the global average pooling layer in the ConvNet and CKN. Also note that we apply zero-padding at the convolutional layers that have a stride of one to maintain the spatial dimensions at those layers. Moreover, we omit the dropout layers. %
Since CKNs do not have biases, we omit the biases from the ConvNets.
Lastly, as the arc-cosine kernels are not differentiable, we switch to using the RBF kernel on the sphere for the supervised CKN implementations. The nonlinearity generated by this kernel resembles the ReLU \citep{mairal2014}. We fix the bandwidths to 0.6.\footnote{Note, however, that it is possible to train the bandwidths; see Proposition~\ref{prop:ckn_loss_grad_bw} in Appendix~\ref{app:gradient_proofs}.}

The training of the ConvNets used in the experiments is performed as follows. The initialization is performed using draws from a mean-zero random normal distribution. For the LeNets the standard deviation is set to 0.2 while for All-CNN-C the standard deviation is set to 0.3. The output features are normalized in the same way as for the CKNs, so they are centered and on average have an $\ell_2$ norm of one. The multinomial logistic loss is used and trained with SGD with momentum set to 0.9.  The batch size is set to the largest power of two that fits on the GPU when training the CKN counterpart (see Table~\ref{tab:batch_sizes} in Appendix~\ref{app:training_details_results}). 
The step size is chosen initially from the values $2^i$ for $i=-10, -9, \dots, 2$ by training for five iterations with each step size and choosing the step size yielding the lowest training loss on a separate mini-batch. The same method is used to update the step size every 100 iterations, except at subsequent updates the step size is selected from $2^{i}s$ for $i=-3,-2,\dots, 3$, where $s$ is the current step size. For All-CNN-C we monitor the training accuracy every epoch. If the accuracy decreases by more than 2\% from one epoch to the next we replace the current network parameters with those from the previous epoch and decrease the learning rate by a factor of 4. 
Cross-validation is performed over the values $2^i$ for $i=-40, -39, \dots, 0$ for the $L_2$ penalty of the multinomial logistic loss parameters. During cross-validation the optimization is performed for 1000 iterations. The final optimization using the optimal penalty is performed for 10,000 iterations. 

Now we detail the unsupervised CKN initialization. The unsupervised training of the CKNs entails approximating the kernel at each layer and then training a classifier on top. Unless otherwise specified, the kernel approximations are performed using spherical $k$-means layer-wise with 10,000 randomly sampled non-constant patches per layer, all from different images. Unless otherwise specified, when evaluating the CKN at each layer the intertwined Newton method is used. In order to achieve a high accuracy but keep the computational costs reasonable the number of outer Newton iterations is set to 20 and the number of inner Newton iterations is set to 1. The regularization of the Gram matrix on the filters is set to 0.001. After the unsupervised training the features are normalized as in \citet{mairal2014} so that they are centered and on average have an $\ell_2$ norm of one. A classifier is trained on these CKN features using the multinomial logistic loss. The loss function is optimized using L-BFGS \citep{liu1989} on all of the features with the default parameters from the Scipy implementation.  Cross-validation is performed over the values $2^i$ for $i=-40, -39, \dots, 0$ for the $L_2$ penalty of the multinomial logistic loss parameters.  Both the cross-validation and the final optimization with the optimal penalty are performed for a maximum of 1000 iterations.

Finally, we describe the supervised CKN training. The supervised training of CKNs begins with the unsupervised initialization. The multinomial logistic loss is used and trained with our ultimate layer reversal method. 
In the ultimate layer reversal method we use an approximation of the full Hessian for all but the LeNet-1 experiment with 128 filters per layer. Due to memory constraints we use a diagonal approximation to the Hessian for the LeNet-1 experiment with 128 filters per layer. The regularization parameter $\tau$ of the Hessian was selected via cross-validation and set to 0.03125 for the LeNets and to 0.0625 for All-CNN-C.
The batch size is set to the largest power of two that fits on the GPU (see Table~\ref{tab:batch_sizes} in Appendix~\ref{app:training_details_results}). The step sizes are determined in the same way as for the ConvNets, but the initial step sizes considered are $2^i$ for $i=-6, -5, \dots, 2$. 
The $L_2$ penalty of the multinomial logistic loss parameters is fixed to the initial value from the unsupervised training throughout the ULR iterations. 
After 10,000 iterations of ULR, the parameters of the loss function are once again optimized with L-BFGS for a maximum of 1000 iterations. Cross-validation over the $L_2$ penalty of the multinomial logistic loss parameters is once again performed at this final stage in the same manner as during the unsupervised initialization.
 
The code for this project was primarily written using PyTorch \citep{paszke2017} and may be found online at \url{https://github.com/cjones6/yesweckn}. FAISS \citep{johnson2017} is used during the unsupervised initialization of the CKNs. We ran the experiments on Titan Xps, Titan Vs, and Tesla V100s. The corresponding time to run the experiments on an NVIDIA Titan Xp GPU would be more than 20 days.

\subsection{Comparison of training methods}
\label{sec:training_comparison}
We commence by demonstrating the superiority of our proposed training methods described in Section~\ref{sec:training} to the standard methods.
 
\paragraph{Accuracy of the gradient computation}
 \begin{figure*}[t!]
\centering
\begin{subfigure}{.48\textwidth}
  \centering
\includegraphics[width=1.05\linewidth]{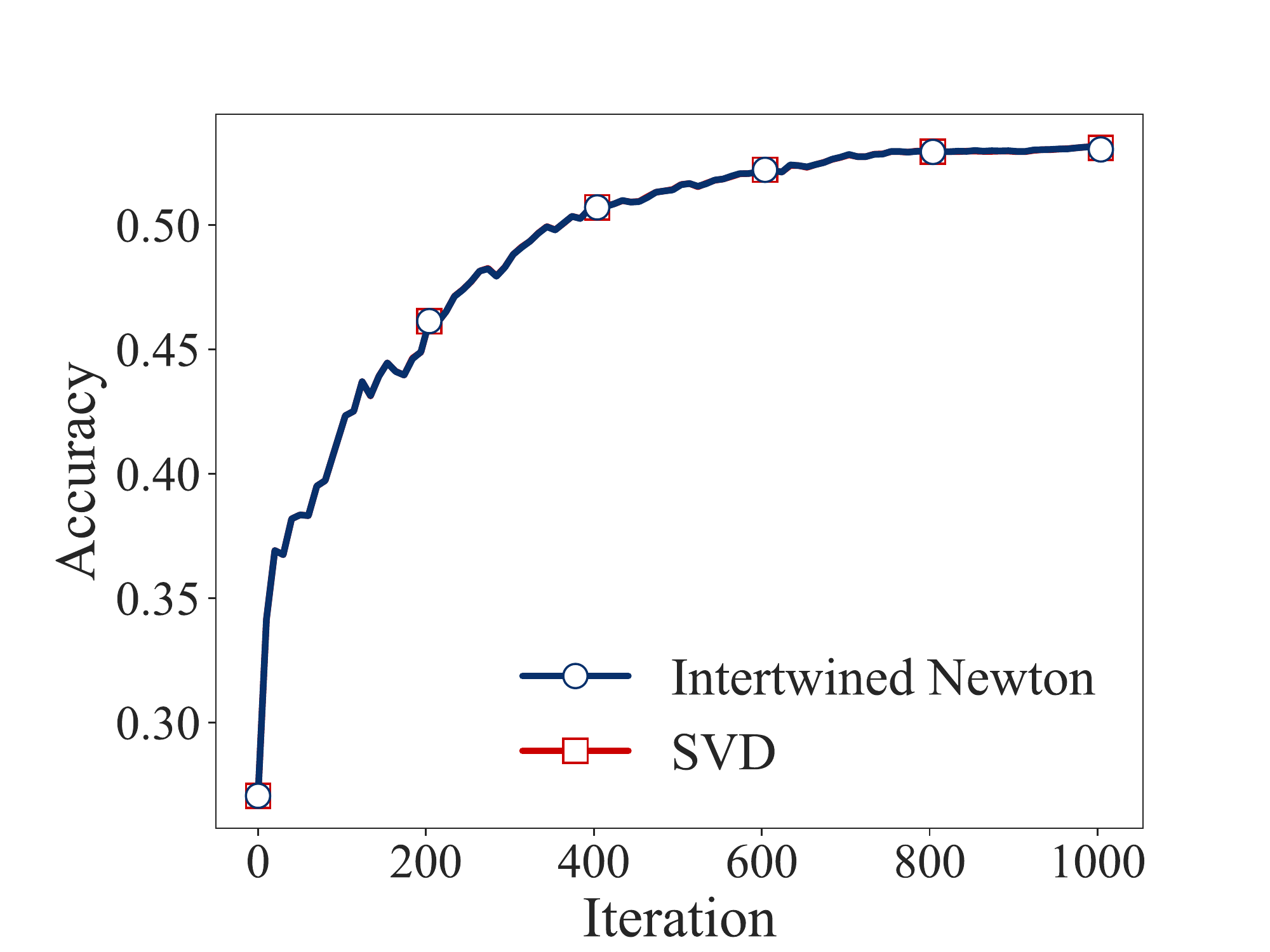}
 \caption{\label{fig:allcnn_newton8} All-CNN-C CKN on CIFAR-10 with 8 filters/layer}
\end{subfigure}\hfill
\begin{subfigure}{.48\textwidth}
  \centering
\includegraphics[width=1.05\linewidth]{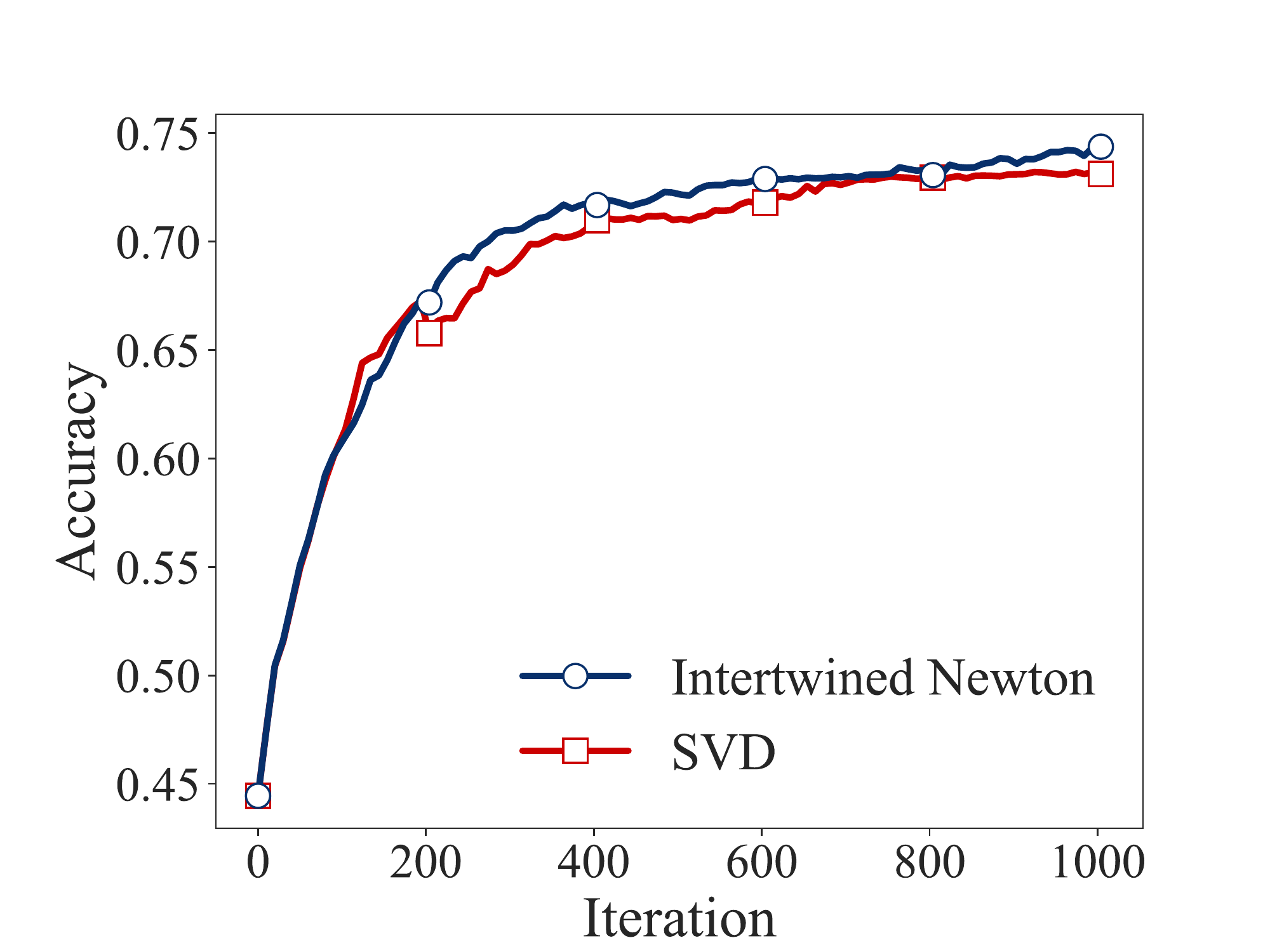}
 \caption{\label{fig:allcnn_newton128} All-CNN-C CKN on CIFAR-10 with 128 filters/layer}
\end{subfigure}\hfill
\caption{\label{fig:svd_vs_newton} Performance of CKNs when using an SVD to compute the matrix inverse square root vs. using our intertwined Newton method. We report results from using 50 iterations of the outer Newton method and one iteration of the inner Newton method.}
\end{figure*}

The straightforward way of computing the gradient of a CKN is by allowing automatic differentiation software to differentiate through SVDs. In Section~\ref{sec:intertwined_newton} we introduced an alternative approach: the intertwined Newton method.  Here we compare the two approaches when training the deepest network we consider in the experiments: the CKN counterpart to All-CNN-C. %
We compare the gradients from differentiating through the SVD and the intertwined Newton method in two ways: directly and also indirectly via the performance when training a CKN. 

First, we compare the gradients from each method to the result from using a finite difference method. We find that for the CKN counterpart to All-CNN-C on CIFAR-10 with 8 filters/layer, differentiating through 20 Newton iterations yields relative errors that are 2.5 times smaller than those from differentiating through the SVD. This supports the hypothesis that differentiating through the SVD is more numerically unstable than differentiating through Newton iterations. We moreover note that using Newton iterations allows us to control the numerical accuracy of the gradient and of the matrix inverse square root itself.

Given that the gradients from the intertwined Newton method are more accurate, we now investigate whether this makes a difference in the training. 
Figure~\ref{fig:svd_vs_newton} compares the performance of the two methods on All-CNN-C with 8 and 128 filters/layer. We set the number of outer Newton iterations to 50 and leave the number of inner Newton iterations at 1. From the plots we can see that for 8 filters/layer there is no difference in the training performance, despite the gradients for the intertwined Newton method being more accurate. However, for 128 filters/layer the intertwined Newton method begins to outperform the SVD after approximately 200 iterations. After 1000 iterations the accuracy from differentiating through the intertwined Newton method is 1.7\% better than that from differentiating through the SVD. The intertwined Newton method therefore appears to be superior for larger networks.

\paragraph{Efficiency of training methods}
 \begin{figure*}[t!]
\centering
\begin{subfigure}{.48\textwidth}
  \centering
\includegraphics[width=1.05\linewidth]{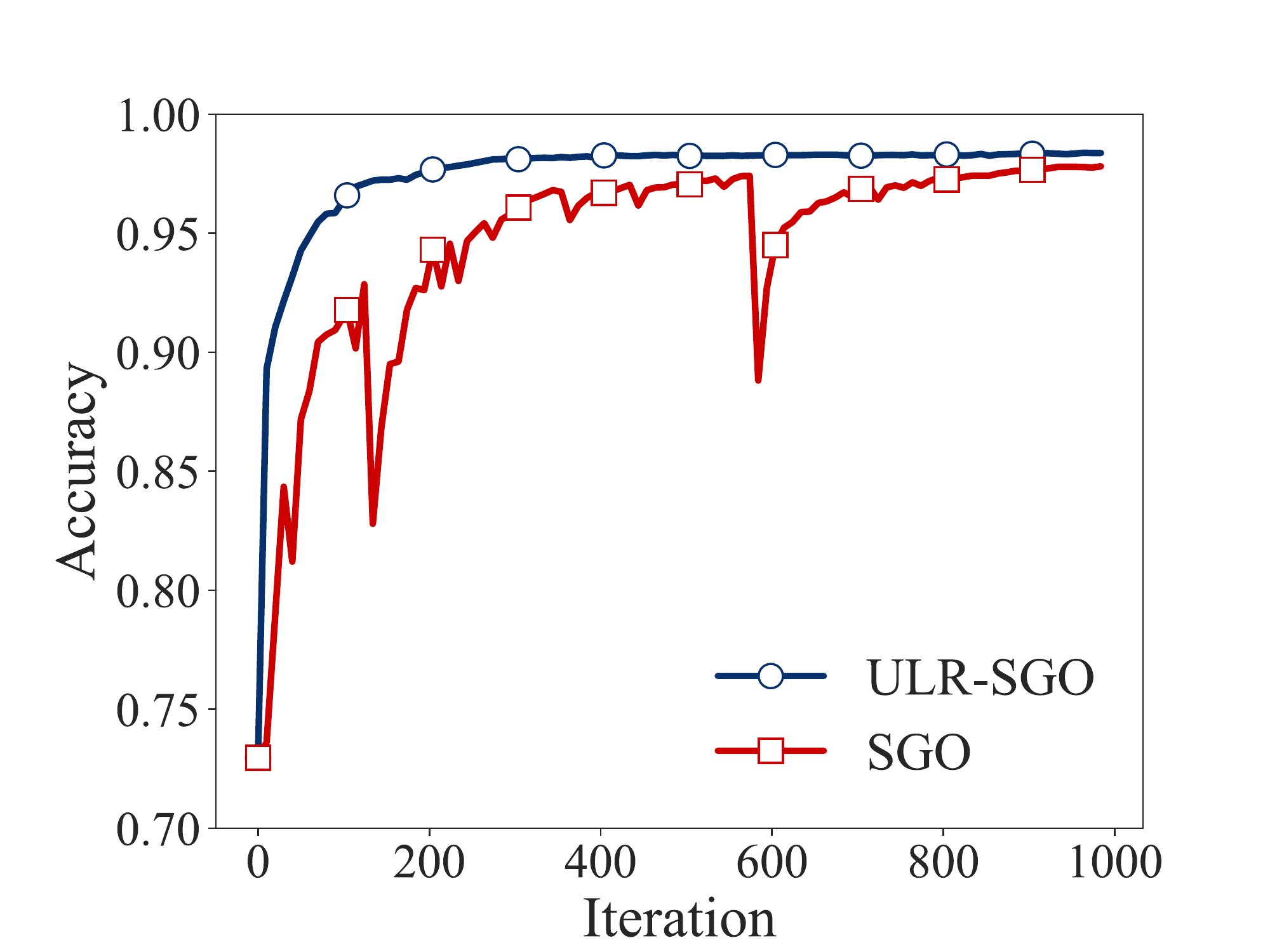}
 \caption{\label{fig:lenet5_8_ulr_sgo} LeNet-5 CKN on MNIST with 8 filters/layer}
\end{subfigure}\hfill
\begin{subfigure}{.48\textwidth}
  \centering
\includegraphics[width=1.05\linewidth]{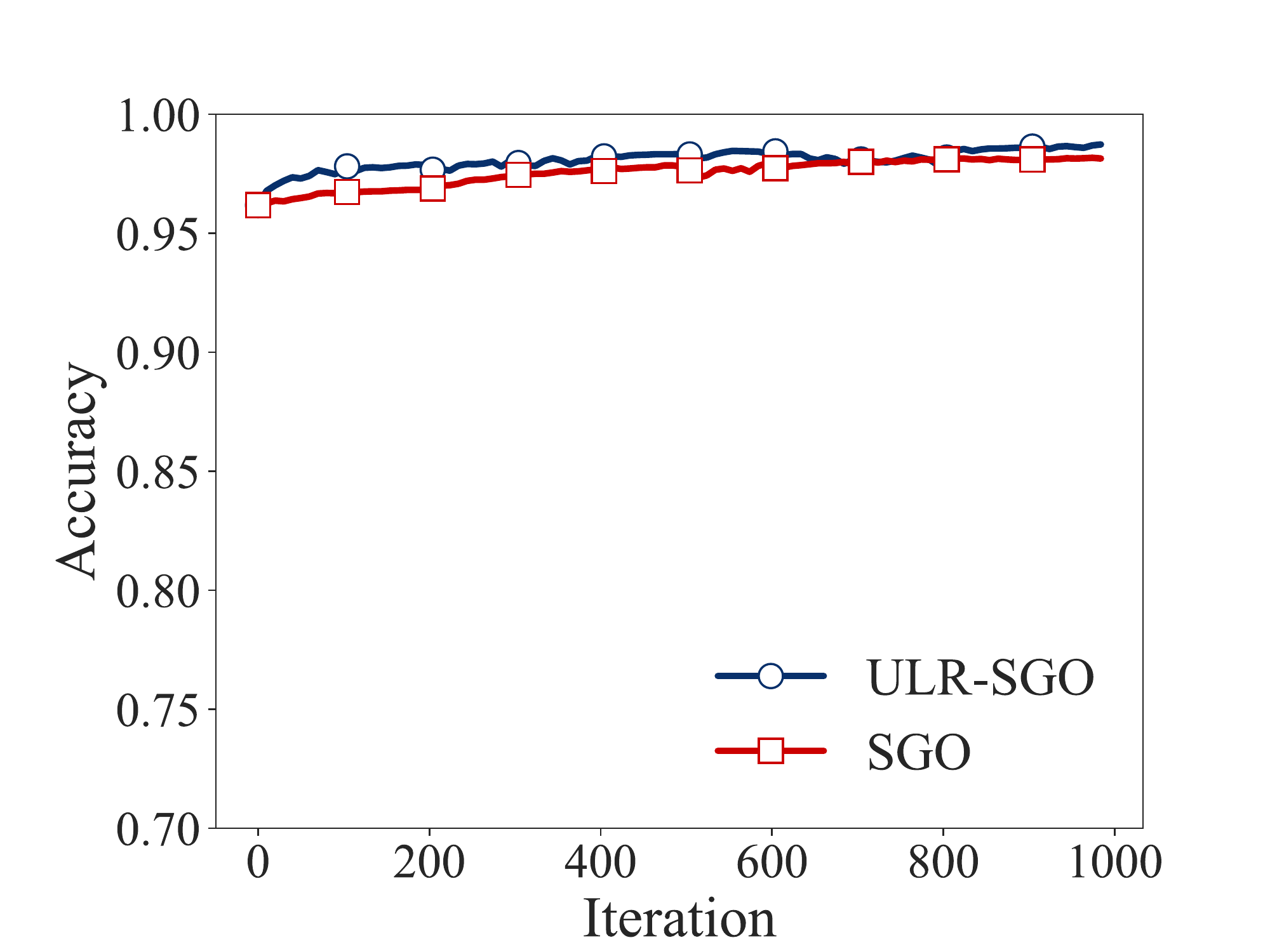}
 \caption{\label{fig:lenet5_128_ulr_sgo} LeNet-5 CKN on MNIST with 128 filters/layer}
\end{subfigure}\hfill
\begin{subfigure}{.48\textwidth}
  \centering
\includegraphics[width=1.05\linewidth]{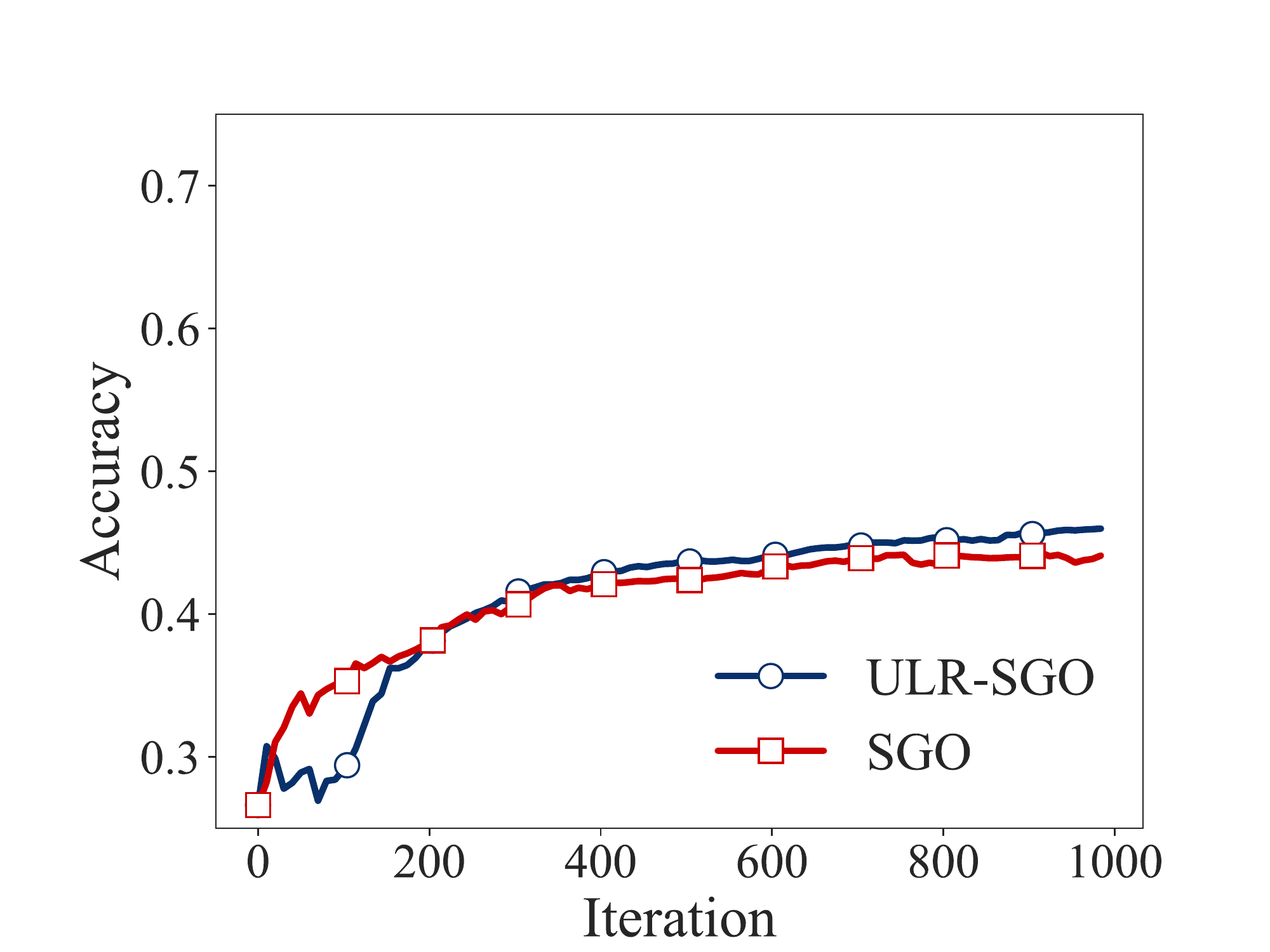}
 \caption{\label{fig:lenet5_8_ulr_sgo} All-CNN-C CKN on CIFAR-10 with 8 filters/layer}
\end{subfigure}\hfill
\begin{subfigure}{.48\textwidth}
  \centering
\includegraphics[width=1.05\linewidth]{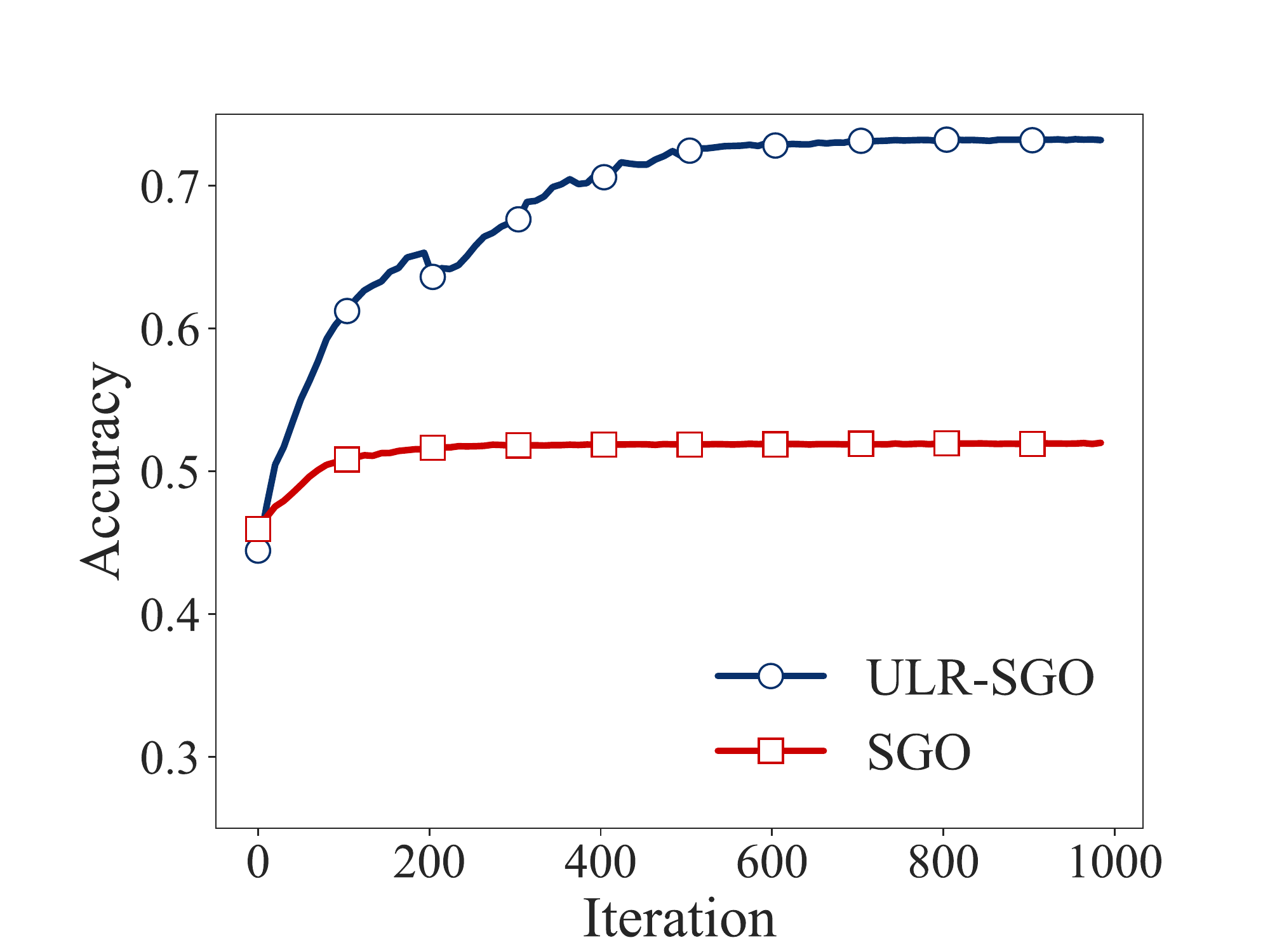}
 \caption{\label{fig:lenet5_128_ulr_sgo} All-CNN-C CKN on CIFAR-10 with 128 filters/layer}
\end{subfigure}\hfill
\caption{\label{fig:sgo_ulr} Performance of CKNs when using stochastic gradient optimization (SGO) vs. our Ultimate Layer Reversal method (ULR-SGO) in terms of accuracy vs. iteration.}
\end{figure*}

Next, we compare training CKNs using stochastic gradient optimization (SGO)  to using our proposed ultimate layer reversal method (ULR-SGO) as detailed in Section~\ref{sec:reversal}. In our SGO implementation we use the version of the optimization in which $\lambda$ is penalty parameter rather than a constraint.

Figure~\ref{fig:sgo_ulr} displays the results of the comparison for the CKN counterparts to LeNet-5 on MNIST and All-CNN-C on CIFAR-10 with 8 and 128 filters/layer. From the plots we can see that ULR-SGO is nearly always better than SGO throughout the iterations. This difference is most pronounced for the experiments in which the accuracy increased the most from the initialization: the LeNet-5 CKN with 8 filters/layer and the All-CNN-C CKN with 128 filters/layer. The final accuracy from the ULR-SGO method after 1000 iterations ranges from being 0.5\% better on the easier task of classifying MNIST digits with the LeNet-5 CKN architectures to 4\% and 40\% better on the harder task of classifying CIFAR-10 images with the All-CNN-C CKN architectures. It is also interesting to note that the ULR-SGO curve is much smoother in the case of LeNet-5 CKN with 8 filters/layer. In addition, in the final case of the All-CNN-C CKN with 128 filters/layer, the SGO method seems to have gotten stuck, whereas this was not a problem for ULR-SGO. The initial drop in performance for the All-CNN-C CKN plot with 8 filters/layer is due  to the method choosing an initial learning rate that was too large. The learning rate was corrected when it was next updated, at 100 iterations, at which point the accuracy proceeds to increase again.

While it is clear that ULR-SGO dominates SGO in terms of performance over the iterations, it is also important to ensure that this is true in terms of time. Figure~\ref{fig:sgo_ulr_time} in Appendix~\ref{app:training_details_results} provides the same plots as Figure~\ref{fig:sgo_ulr}, except that the x-axis is now time. The experiments for the LeNet-5 CKN were performed using an Nvidia Titan Xp GPU while the All-CNN-C CKN experiments were performed using an Nvidia Tesla V100 GPU.
From the plots we can see that the ULR-SGO method still outperforms the SGO method in terms of accuracy vs. time. %

\subsection{CKNs vs. ConvNets}
\label{sec:convnet_ckn_exp}

 \begin{figure*}[t!]
\centering
\begin{subfigure}{.33\textwidth}
  \centering
\includegraphics[width=1.05\linewidth]{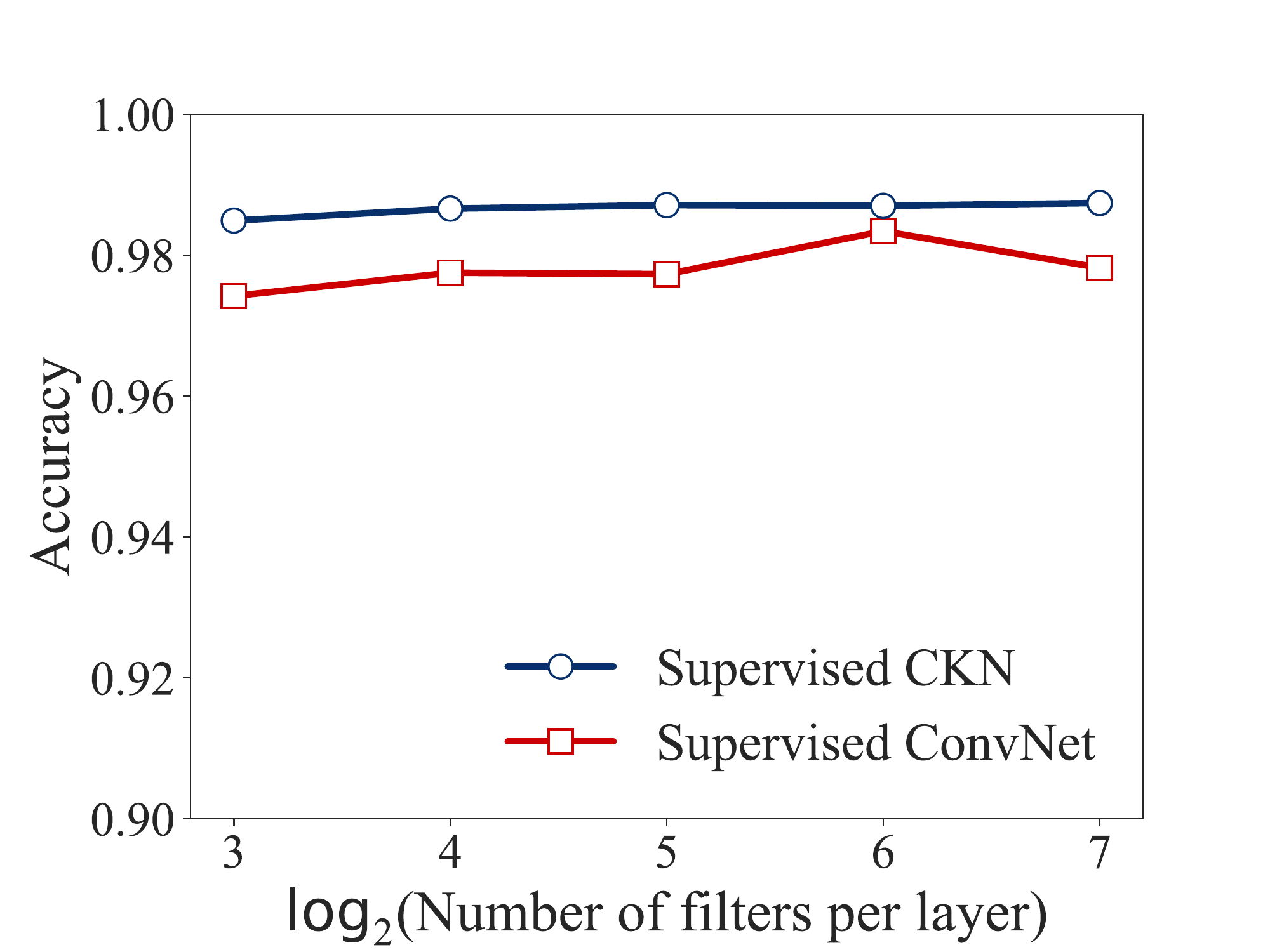}
 \caption{\label{fig:lenet1}LeNet-1 on MNIST}
\end{subfigure}\hfill
\begin{subfigure}{.33\textwidth}
  \centering
\includegraphics[width=1.05\linewidth]{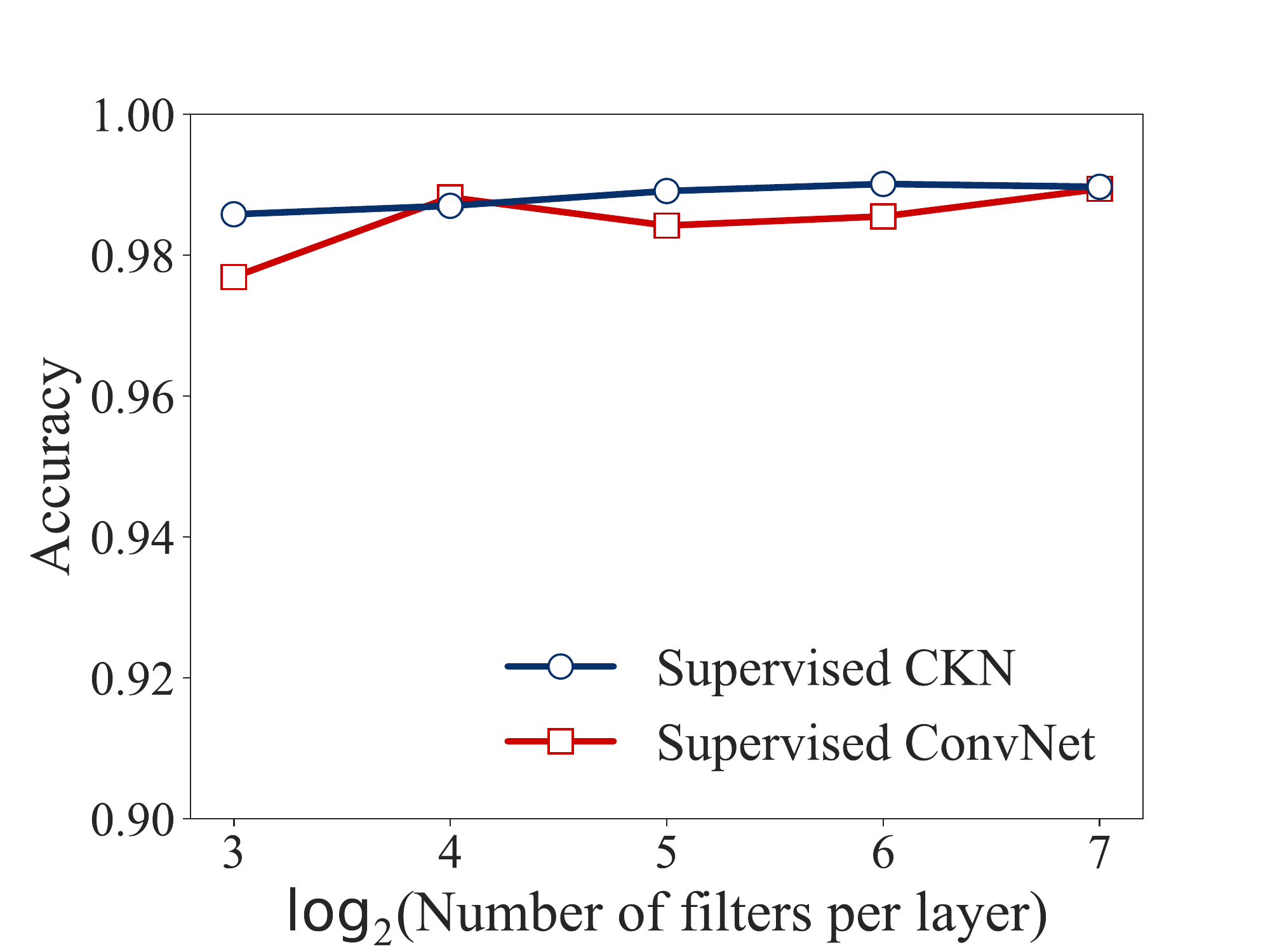}
 \caption{\label{fig:lenet5}LeNet-5 on MNIST}
\end{subfigure}\hfill
\begin{subfigure}{.33\textwidth}
  \centering
\includegraphics[width=1.05\linewidth]{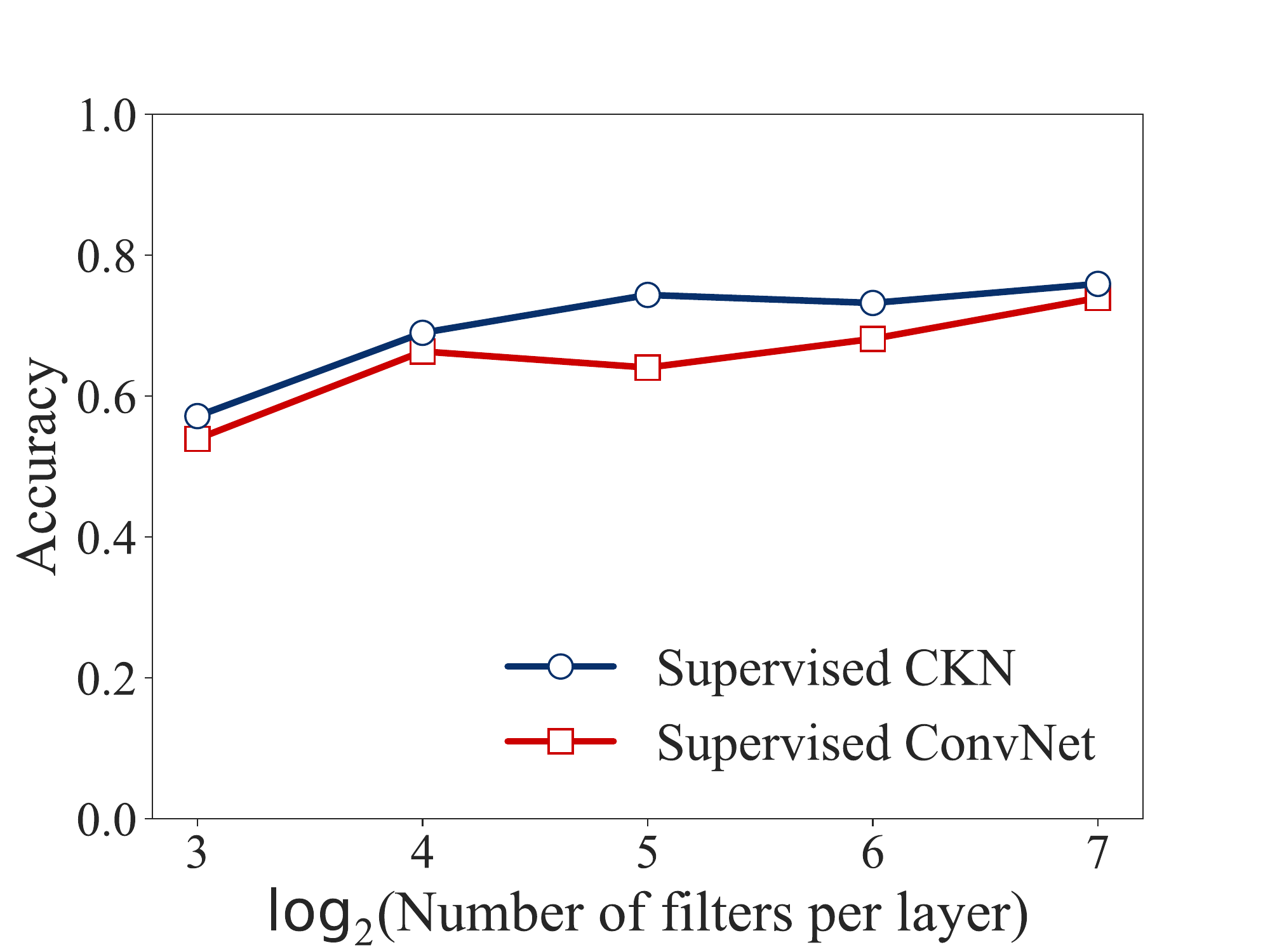}
 \caption{\label{fig:allcnn} All-CNN-C on CIFAR-10}
\end{subfigure}\hfill
\caption{\label{fig:acc_vs_nfilt} Performance of CKNs and ConvNet counterparts when varying number of filters per layer. Note that the y-axes for the LeNets begin at 0.9.}
\label{fig:acc_nfilt}
\end{figure*}

Now we turn to the comparison between CKNs and ConvNets. We perform this comparison for LeNet-1 and LeNet-5 on MNIST and for All-CNN-C on CIFAR-10. Figure~\ref{fig:acc_nfilt} displays the results when we vary the number of filters per layer by powers of two, from 8 to 128.
 Beginning with the LeNets, we see that both the CKN and ConvNet perform well on MNIST over a wide range of the number of filters per layer. The CKN outperforms the ConvNet for almost every number of filters per layer.  At best the performance of the CKN is 1\% better and at worst it is 0.1\% worse. 
The former value is large, given that the accuracy of both the CKNs and the ConvNets exceed 97\%.
The success of the CKNs continues for All-CNN-C on CIFAR-10. For All-CNN-C the CKN outperforms the ConvNet by 3-16\%. From the plot we can see that the CKN performance aligns well with the ConvNet performance toward the endpoints of the range considered. Overall, the results for the LeNets and All-CNN-C suggest that translated CKNs do perform similarly to their ConvNet counterparts. %

Recalling from Section~\ref{sec:supervised_training} that CKNs are initialized in an unsupervised manner, we also compare the performance of unsupervised CKNs to their supervised CKN and ConvNet counterparts. We explore this in Figure~\ref{fig:acc_vs_nfilt_unsup} in Appendix~\ref{app:training_details_results}. For LeNet-1 the unsupervised CKN performs extremely well, achieving at minimum 98\% of the accuracy of the corresponding supervised CKN. The performance is slightly worse for LeNet-5, with the unsupervised CKN achieving 64-97\% of the performance of the supervised CKN with the same number of filters. The relative performance is the worst for All-CNN-C, with the unsupervised performance being 44-59\% of that of the supervised CKN with the same number of filters.  Therefore, the supervised training contributes tremendously to the overall performance of the LeNet-5 CKN with a small number of filters and to the All-CNN-C CKN. These results also suggest that for more complex tasks the unsupervised CKN may require more than 16 times as many filters to achieve comparable performance to the supervised CKN and ConvNet.

\section{Conclusion}
In this work we provided a systematic study of the translation of a ConvNet to its CKN counterpart. We presented a new stochastic gradient algorithm to train a CKN in a supervised manner. When trained using this method, the CKNs we studied achieved comparable performance to their ConvNet counterparts. As with the training of any deep network, there are a number of design choices we made that could be modified. Each such choice in the ConvNet world has a counterpart in the CKN world. For example, we could perform the initialization of the filters of the ConvNet and CKN using a different method. In addition, we could use additional normalizations in the ConvNet and CKN. We leave the exploration of the effects of these alternatives to future work.

\section*{Acknowledgements}
This work was supported by NSF TRIPODS Award CCF-1740551, the program ``Learning in Machines and Brains’' of CIFAR, and faculty research awards.
\bibliography{main}
\bibliographystyle{abbrvnat_modified}

\onecolumn
\section*{Appendix}
\appendix
In this appendix we provide additional details related to the architectures, the training, and the experiments.
Specifically, we begin in Appendix~\ref{app:convnet_desc} by providing a mathematical description of the ConvNets we consider in our experiments. Two of the architectures, LeNet-1 and LeNet-5 \citep{lecun1998a}, originally had incomplete connection schemes. We compare their performance with incomplete vs. complete connection schemes in Appendix~\ref{app:lenet_incomplete}.  We then show how we translated the ConvNet architectures into CKNs in Appendix~\ref{app:ckn_desc}. Following this, we derive the CKN gradient formulas in Appendix~\ref{app:gradient_proofs}. %
We train the overall network by performing stochastic gradient steps on a manifold within our new ultimate layer reversal method. The details of these methods are contained in Appendices~\ref{app:sg_manifolds} and \ref{app:ulr}. Finally, we %
 report additional experimental details and results in Appendix~\ref{app:training_details_results}.

\appendix
\section{Mathematical description of ConvNets}
\label{app:convnet_desc}
Convolutional neural network architectures are typically described by their components, including convolutions, non-linearities, and pooling. In this section we provide what we believe to be the first mathematical formulations of several historical architectures. For a broader review of ConvNets and their components, the reader may consult \cite{rawat2017}. Unless otherwise specified, the parameters $\W_i$ discussed below are learned via backpropagation. In contrast to Section~\ref{sec:training} and Appendix~\ref{app:gradient_proofs}, here and in Section~\ref{app:ckn_desc} we describe the features at each layer of the ConvNets and CKNs using tensors for clarity of exposition.

\subsection{LeNets}
We begin with LeNet-1 and LeNet-5 \citep{lecun1995,lecun1998a}, which were among the first modern versions of a ConvNet. These networks used convolutional layers and pooling/subsampling layers. Our description of LeNet-5 here differs slightly from the original paper, as the original paper used an RBF layer as the last layer rather than a fully connected layer. LeNet-1 and LeNet-5 differ from typical modern ConvNets because the pooling is average pooling, the modules are not in the typical order, and the second convolutional layer uses an incomplete connection scheme.

The activation functions used throughout the LeNet architectures are scaled tanh functions of the form $f(x) = 1.7159 \tanh(2/3 x)$. This scaling is such that $f(1)\approx 1$ and $f(-1)\approx-1$. It was argued \citep{lecun1989,lecun1998a,lecun1998b} that these choices speed up convergence for several reasons. First, for standardized inputs $f$ will output values with variance approximately equal to one. In addition, the second derivatives of $f$ are largest in absolute value at $\pm1$. Assuming the target outputs are $\pm1$ this implies that near the optimum when the predictions are close to $\pm1$ the gradients change faster. It was noted by \citet{lecun1989} that this parameterization is for convenience and does not necessarily improve performance.

\paragraph{LeNet-1}
We begin by detailing LeNet-1. Let $\F_0\in\mbr^{1\times28\times28}$ be the initial representation of the image. Let $\W_1\in\mbr^{4\times1\times5\times5}$ and $\b_1\in\mbr^4$ be learnable parameters. The first convolutional layer convolves $\F_0$ and $\W_1$ and adds a bias term, resulting in $\F_1\in\mbr^{4\times24\times24}$:
\begin{align*}
\F_1(k, \cdot, \cdot) %
&= \F_0(1, \cdot, \cdot)\star \W_1(k, 1, \cdot, \cdot)+ \b_1(k)
\end{align*}
for $k=1,\dots, 4$ where $\star$ denotes the convolution operation.

Next, the second layer consists of average pooling with learnable parameters and subsampling, followed by the application of a nonlinearity. Let $\w_2, \b_2\in\mathbb{R}^4$ and let $e_{i}\in\mbr^{23}$ be a vector with entry 1 in element $i$ and 0 elsewhere. Define $\emat_2=(e_1, e_3, \dots, e_{23})$ and $\mathbbm{1}_{d\times d}$ to be a matrix of ones of size $d\times d$. Then the second layer computes $\F_2\in\mbr^{4\times12\times12}$ given by
\begin{align*}
\F_2(k, \cdot, \cdot) = 1.7159 \tanh\left(\frac{2}{3} \emat_2^T\left[\left(\F_1(k, \cdot, \cdot) \star \frac{1}{4}\mathbbm{1}_{2\times 2}\right)\w_2(k) + \b_2(k)\mathbbm{1}_{23\times 23} \right]\emat_2 \right),
\end{align*}
for $k=1,\dots, 4$ where $\tanh$ is understood to be applied element-wise.

The third layer is a convolutional layer with an incomplete connection scheme between the filters at the previous layer and at this layer. Define the matrix
\begin{align*}
\C_3 = \begin{bmatrix}
1 & 1 & 1 & 0 & 1 & 1 & 0 & 0 & 0 & 0 & 0 & 0 \\
0 & 1 & 1 & 1 & 1 & 1 & 0 & 0 & 0 & 0 & 0 & 0 \\
0 & 0 & 0 & 0 & 0 & 0 & 1 & 1 & 1 & 0 & 1 & 1\\
0 & 0 & 0 & 0 & 0 & 0 & 0 & 1 & 1 & 1 & 1 & 1\\
\end{bmatrix}^T
\end{align*}
to be the connection scheme.  Moreover, suppose $\W_3\in\mbr^{12\times 4\times 5\times 5}$ and $\b_3\in\mbr^{12}$. The output of layer 3 is then given by $\F_3\in\mbr^{12\times8\times8}$, with
\begin{align*}
\F_3(k, \cdot, \cdot)
&= \sum_{z=1}^4 \left\{\left[\F_2(z, \cdot, \cdot)\star \W_3(k, z, \cdot, \cdot)+ \b_3(k)\mathbbm{1}_{8\times 8}\right]\right\}\C_3(k,z).
\end{align*}
for $k=1,\dots, 12$. 

The fourth layer is analogous to the previous subsampling layer. Let $\w_4, \b_4\in\mbr^{12}$ and $\emat_4=(e_1, e_3, \dots, e_{7})$. Here $e_i\in\mbr^{7}$ with a one in element $i$ and 0 elsewhere for all $i=1,\dots, 7$. Then we obtain $\F_4\in\mbr^{12\times4\times4}$ with
\begin{align*}
\F_4(k, \cdot, \cdot) = 1.7159 \tanh\left(\frac{2}{3} \emat_4^T\left[\left(\F_3(k, \cdot, \cdot) \star \frac{1}{4}\mathbbm{1}_{2\times 2}\right)\w_4(k) + \b_4(k)\mathbbm{1}_{7\times 7} \right]\emat_4 \right),
\end{align*}
for $k=1,\dots, 12$ where $\tanh$ is again understood to be applied element-wise.

Finally, the last layer is a fully connected layer. Let $\W_5\in\mbr^{10\times12\times4\times4}$ and $\b_5\in\mbr^{10}$. Then the output is given by 
$\F_5\in\mbr^{10}$ with
\begin{align*}
\F_5(k) = \sum_{z=1}^{12} \left[\F_4(z, \cdot, \cdot)\star \W_5(k, z, \cdot, \cdot)+ \b_5(k)\right]
\end{align*}
for $k=1,\dots, 10$.

\paragraph{LeNet-5}
Now we describe LeNet-5.
 Let $\F_0\in\mbr^{1\times32\times32}$ be the initial representation of the image. Let $\W_1\in\mbr^{6\times1\times5\times5}$ and $\b_1\in\mbr^6$ be learnable parameters. The first convolutional layer convolves $\F_0$ and $\W_1$ and adds a bias term, resulting in $\F_1\in\mbr^{6\times28\times28}$:
\begin{align*}
\F_1(k, \cdot, \cdot) %
&= \F_0(1, \cdot, \cdot)\star \W_1(k, 1, \cdot, \cdot)+ \b_1(k)
\end{align*}
for $k=1,\dots, 6$.

Next, the second layer consists of average pooling with learnable parameters and subsampling, followed by the application of a nonlinearity. Let $\w_2, \b_2\in\mathbb{R}^6$ and let $e_{i}\in\mbr^{27}$ be a vector with entry 1 in element $i$ and 0 elsewhere. Define $\emat_2=(e_1, e_3, \dots, e_{27})$ and $\mathbbm{1}_{d\times d}$ to be a matrix of ones of size $d\times d$. Then the second layer computes $\F_2\in\mbr^{6\times14\times14}$ given by
\begin{align*}
\F_2(k, \cdot, \cdot) = 1.7159 \tanh\left(\frac{2}{3} \emat_2^T\left[\left(\F_1(k, \cdot, \cdot) \star \frac{1}{4}\mathbbm{1}_{2\times 2}\right)\w_2(k) + \b_2(k)\mathbbm{1}_{27\times 27} \right]\emat_2 \right),
\end{align*}
for $k=1,\dots, 6$.

The third layer is a convolutional layer with an incomplete connection scheme between the filters at the previous layer and at this layer. Define the matrix
\begin{align*}
\C_3 = \begin{bmatrix}
1 & 0 & 0 & 0 & 1 & 1 & 1 & 0 & 0 & 1 & 1 & 1 & 1 & 0 & 1 & 1\\
1 & 1 & 0 & 0 & 0 & 1 & 1 & 1 & 0 & 0 & 1 & 1 & 1 & 1 & 0 & 1\\
1 & 1 & 1 & 0 & 0 & 0 & 1 & 1 & 1 & 0 & 0 & 1 & 0 & 1 & 1 & 1\\
0 & 1 & 1 & 1 & 0 & 0 & 1 & 1 & 1 & 1 & 0 & 0 & 1 & 0 & 1 & 1\\
0 & 0 & 1 & 1 & 1 & 0 & 0 & 1 & 1 & 1 & 1 & 0 & 1 & 1 & 0 & 1\\
0 & 0 & 0 & 1 & 1 & 1 & 0 & 0 & 1 & 1 & 1 & 1 & 0 & 1 & 1 & 1\\
\end{bmatrix}^T
\end{align*}
to be the connection scheme.  Moreover, suppose $\W_3\in\mbr^{16\times 6\times 5\times 5}$ and $\b_3\in\mbr^{16}$. The output of layer 3 is then given by $\F_3\in\mbr^{16\times10\times10}$, with
\begin{align*}
\F_3(k, \cdot, \cdot)
&= \sum_{z=1}^6 \left\{\left[\F_2(z, \cdot, \cdot)\star \W_3(k, z, \cdot, \cdot)+ \b_3(k)\mathbbm{1}_{10\times 10}\right]\right\}\C_3(k,z).
\end{align*}
for $k=1,\dots, 16$. 

The fourth layer is analogous to the previous subsampling layer. Let $\w_4, \b_4\in\mbr^{16}$ and $\emat_4=(e_1, e_3, \dots, e_{9})$. Here $e_i\in\mbr^{9}$ with a 1 in element $i$ and 0 elsewhere for all $i=1,\dots, 9$. Then we obtain $\F_4\in\mbr^{16\times5\times5}$ with
\begin{align*}
\F_4(k, \cdot, \cdot) = 1.7159 \tanh\left(\frac{2}{3} \emat_4^T\left[\left(\F_3(k, \cdot, \cdot) \star \frac{1}{4}\mathbbm{1}_{2\times 2}\right)\w_4(k) + \b_4(k)\mathbbm{1}_{9\times 9} \right]\emat_4 \right),
\end{align*}
for $k=1,\dots, 16$.

The fifth layer is a fully connected layer. Let $\W_5\in\mbr^{120\times16\times5\times5}$ and $\b_5\in\mbr^{120}$. Then the output is given by 
$\F_5\in\mbr^{120}$ with
\begin{align*}
\F_5(k) = 1.7159\tanh\left(\frac{2}{3}\left(\sum_{z=1}^{16} \left[\F_4(z, \cdot, \cdot)\star \W_5(k, z, \cdot, \cdot)+ \b_5(k)\right)\right]\right)
\end{align*}
for $k=1,\dots, 120$.

The sixth layer is also a fully connected layer. Let $\W_6\in\mbr^{84\times120}$ and $\b_6\in\mbr^{84}$. Then the output is given by 
$\F_6\in\mbr^{84}$ with
\begin{align*}
\F_6 = 1.7159\tanh\left(\frac{2}{3}\W_6\F_5+ \b_6\right).
\end{align*}

Finally, the output layer is also a fully connected layer. Let $\W_7\in\mbr^{10\times84}$ and $\b_7\in\mbr^{10}$. Then the output is given by 
$\F_7\in\mbr^{10}$ with
\begin{align*}
\F_7 = \W_7\F_6+ \b_7.
\end{align*}

\subsection{All-CNN-C}
\citet{springenberg2014} were the first to make the claim that pooling is unnecessary. Below we describe the architecture of the All-CNN-C model they presented, which consists of convolutions and ReLUs.  

The input to the model is an image $\F_0\in\mbr^{3\times32\times32}$. Let $\W_1\in\mbr^{96\times3\times3\times3}$ and $\b_1\in\mbr^{96}$ and define $\Z_1\in\mbr^{32\times34}$ such that $(\Z_1)_{ij}=1$ if $j=i+1$ for $i=1,\dots, 32$ and is 0 otherwise. The first layer zero pads $\F_0$ and then convolves the result with $\W_1$ with a $1\times1$ stride. It then adds a bias term and applies a ReLU activation, resulting in $\F_1\in\mbr^{96\times32\times32}$ given by
\begin{align*}
\F_1(k, \cdot, \cdot)
&=\max\left(\sum_{z=1}^3 \left[(\Z_1^T\F_0(z, \cdot, \cdot)\Z_1)\star \W_1(k, z, \cdot, \cdot)+ \b_1(k)\mathbbm{1}_{32\times 32}\right], 0\right), 
\end{align*}
for $k=1,\dots, 96$. Here the $\max$ is understood to be applied element-wise.

The second layer is of the same form as the first layer. Let $\W_2\in\mbr^{96\times96\times3\times3}$ and $\b_2\in\mbr^{96}$ and define $\Z_2\in\mbr^{32\times34}$ such that $(\Z_2)_{ij}=1$ if $j=i+1$ for $i=1,\dots, 32$ and is 0 otherwise. The second layer outputs $\F_2\in\mbr^{96\times32\times32}$ given by
\begin{align*}
\F_2(k, \cdot, \cdot)
&=\max\left(\sum_{z=1}^{96} \left[(\Z_2^T\F_1(z, \cdot, \cdot)\Z_2)\star \W_2(k, z, \cdot, \cdot)+ \b_2(k)\mathbbm{1}_{32\times 32}\right], 0\right), 
\end{align*}
for $k=1,\dots, 96$.

The third layer is a convolutional layer with a $2\times2$ stride. This layer acts as a replacement for a max pooling layer. Let $\W_3\in\mbr^{96\times96\times3\times3}$ and $\b_3\in\mbr^{96}$ and define $\emat_{3}=(e_1, e_3, e_5, \dots, e_{29})$ where $e_i\in\mbr^{30}$ is a vector with 1 in element $i$ and 0 elsewhere. The third layer outputs $\F_3\in\mbr^{96\times15\times15}$ given by
\begin{align*}
\F_3(k, \cdot, \cdot)
&=\max\left(\emat_{3}^T\left\{\sum_{z=1}^{96} \left[\F_2(z, \cdot, \cdot)\star \W_3(k, z, \cdot, \cdot)+ \b_3(k)\mathbbm{1}_{30\times 30}\right]\right\}\emat_{3}, 0\right), 
\end{align*}
for $k=1,\dots, 96$.

The fourth layer returns to being a convolutional layer with a $1\times1$ stride, but has 192 filters. Let $\W_4\in\mbr^{192\times96\times3\times3}$ and $\b_4\in\mbr^{192}$ and define $\Z_4\in\mbr^{15\times17}$ such that $(\Z_4)_{ij}=1$ if $j=i+1$ for $i=1,\dots, 15$ and is 0 otherwise. The fourth layer outputs $\F_4\in\mbr^{192\times15\times15}$ given by
\begin{align*}
\F_4(k, \cdot, \cdot)
&=\max\left(\sum_{z=1}^{96} \left[(\Z_4^T\F_3(z, \cdot, \cdot)\Z_4)\star \W_4(k, z, \cdot, \cdot)+ \b_4(k)\mathbbm{1}_{15\times 15}\right], 0\right), 
\end{align*}
for $k=1,\dots, 192$.

The fifth layer is similar to the fourth layer. Let $\W_5\in\mbr^{192\times192\times3\times3}$ and $\b_5\in\mbr^{192}$ and define $\Z_5\in\mbr^{15\times17}$ such that $(\Z_5)_{ij}=1$ if $j=i+1$ for $i=1,\dots, 15$ and is 0 otherwise. The fifth layer outputs $\F_5\in\mbr^{192\times15\times15}$ given by
\begin{align*}
\F_5(k, \cdot, \cdot)
&=\max\left(\sum_{z=1}^{192} \left[(\Z_5^T\F_4(z, \cdot, \cdot)\Z_5)\star \W_5(k, z, \cdot, \cdot)+ \b_5(k)\mathbbm{1}_{15\times 15}\right], 0\right), 
\end{align*}
for $k=1,\dots, 192$.

The sixth layer is similar to the third layer, except it has 192 filters. Let $\W_6\in\mbr^{192\times192\times3\times3}$ and $\b_6\in\mbr^{192}$ and define $\emat_{6}=(e_1, e_3, e_5, \dots, e_{13})$ where $e_i\in\mbr^{13}$ is a vector with 1 in element $i$ and 0 elsewhere. The sixth layer outputs $\F_6\in\mbr^{192\times7\times7}$ given by
\begin{align*}
\F_6(k, \cdot, \cdot)
&=\max\left(\emat_{6}^T\left\{\sum_{z=1}^{192} \left[\F_5(z, \cdot, \cdot)\star \W_6(k, z, \cdot, \cdot)+ \b_6(k)\mathbbm{1}_{13\times 13}\right]\right\}\emat_{6}, 0\right), 
\end{align*}
for $k=1,\dots, 192$.

The seventh layer is once again like the fifth layer. Let $\W_7\in\mbr^{192\times192\times3\times3}$ and $\b_7\in\mbr^{192}$ and define $\Z_7\in\mbr^{7\times9}$ such that $(\Z_7)_{ij}=1$ if $j=i+1$ for $i=1,\dots, 7$ and is 0 otherwise. The seventh layer outputs $\F_7\in\mbr^{192\times7\times7}$ given by
\begin{align*}
\F_7(k, \cdot, \cdot)
&=\max\left(\sum_{z=1}^{192} \left[(\Z_7^T\F_6(z, \cdot, \cdot)\Z_7)\star \W_7(k, z, \cdot, \cdot)+ \b_7(k)\mathbbm{1}_{7\times 7}\right], 0\right), 
\end{align*}
for $k=1,\dots, 192$.

The eighth layer has $1\times1$ convolutions rather than $3\times3$ convolutions. Let $\W_8\in\mbr^{192\times192}$ and $\b_8\in\mbr^{192}$. The eighth layer outputs $\F_8\in\mbr^{192\times7\times7}$ given by
\begin{align*}
\F_8(k, \cdot, \cdot)
&=\max\left(\sum_{z=1}^{192} \left[\F_7(z, \cdot, \cdot)\star \W_8(k, z)+ \b_8(k)\mathbbm{1}_{7\times 7}\right], 0\right),
\end{align*}
for $k=1,\dots, 192$.

The ninth layer again has $1\times1$ convolutions but has only ten filters. Let $\W_9\in\mbr^{10\times192}$ and $\b_9\in\mbr^{10}$. The ninth layer outputs $\F_9\in\mbr^{10\times7\times7}$ given by
\begin{align*}
\F_9(k, \cdot, \cdot)
&=\max\left(\sum_{z=1}^{192} \left[\F_8(z, \cdot, \cdot)\star \W_9(k, z)+ \b_9(k)\mathbbm{1}_{7\times 7}\right], 0\right),
\end{align*}
for $k=1,\dots, 10$.

Next, the tenth layer is a global average pooling layer. In this layer, all of the pixels in a given feature map are averaged. The result is given by
\begin{align*}
\F_{10}(k) = \F_9(k, \cdot, \cdot) \star \frac{1}{49}\mathbbm{1}_{7\times 7}.
\end{align*} 

In the original work, a softmax function is applied to the last layer. In this work, however, we will include a fully connected layer between the tenth layer and the softmax function so that the analogous CKN will have trainable parameters. Defining $\W_{11}\in\mbr^{10\times10}$ and $\b_{11}\in\mbr^{10}$, the output of our eleventh layer is thus
\begin{align*}
\F_{11} = \W_{10}\F_{10}+\b_{10}.
\end{align*}

\section{LeNet incomplete connection scheme}
\label{app:lenet_incomplete}
The LeNet architectures included incomplete connection schemes at the C3 (second convolutional) layer. With these schemes, feature maps in the C3 convolutional layers were only connected to certain feature maps at the previous layer.  This was primarily used for computational reasons, but was also argued to break symmetry in the network \citep{lecun1998a}. 

Training the LeNet-1 and LeNet-5 models for 100 epochs, we found that architectures with the complete connection schemes do just as well as or better than the architectures with the incomplete connection schemes. Table~\ref{tab:lenet_errors} reports the final accuracies on the test set of MNIST. Figure~\ref{fig:lenet_c3_comparison} demonstrates that the learning proceeds similarly regardless of whether a complete or incomplete connection scheme is used.
Note that our LeNet-1 architecture outperforms the original LeNet-1 result (98.3\%), which was on $16\times16$ images with padding. Moreover, with 1000 epochs our implementation of LeNet-5 with the incomplete connection scheme achieved an accuracy of 99.00\%, which is within the reported uncertainty of 0.1\% of the result in \cite{lecun1998a}. As there is little benefit to using the incomplete connection schemes, we use the models with complete connection schemes in this paper.

 \begin{table}[t]
 \centering
  \caption{\label{tab:lenet_errors} Accuracy of the LeNet ConvNets with incomplete and complete connection schemes at the C3 layers. The ConvNets were trained for 100 epochs.}
\begin{tabular}{c|ccc}
 & Incomplete scheme  & Complete scheme \\
\hline
LeNet-1 & 0.9854 & 0.9855  \\
LeNet-5 & 0.9868 & 0.9912  \\
\bottomrule
\end{tabular}
\end{table}

 \begin{figure}[t!]
\centering
\begin{subfigure}{.48\textwidth}
  \centering
\includegraphics[width=1\linewidth]{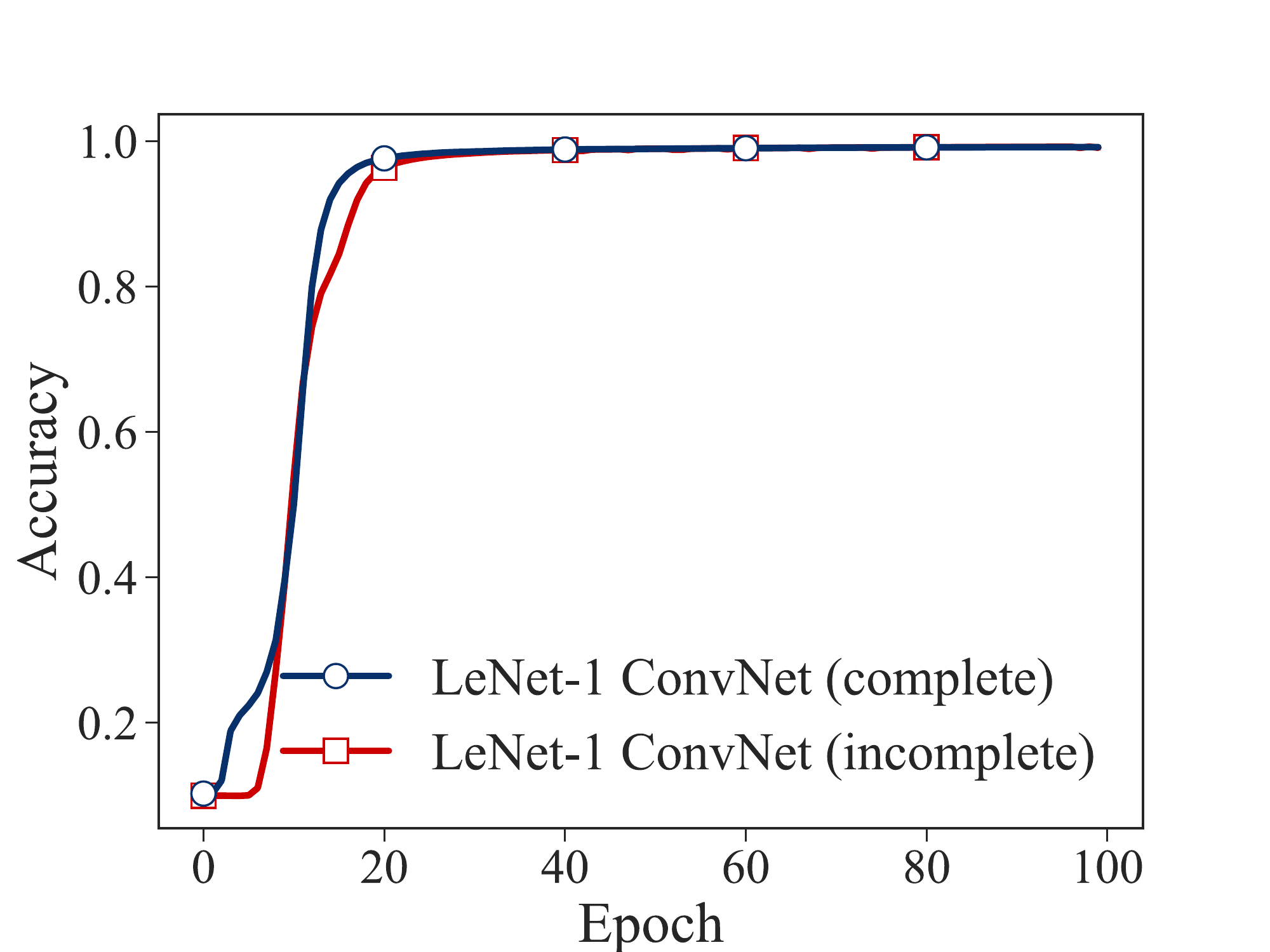}
\caption{LeNet-1}
\end{subfigure}\hfill
\begin{subfigure}{.48\textwidth}
  \centering
\includegraphics[width=1\linewidth]{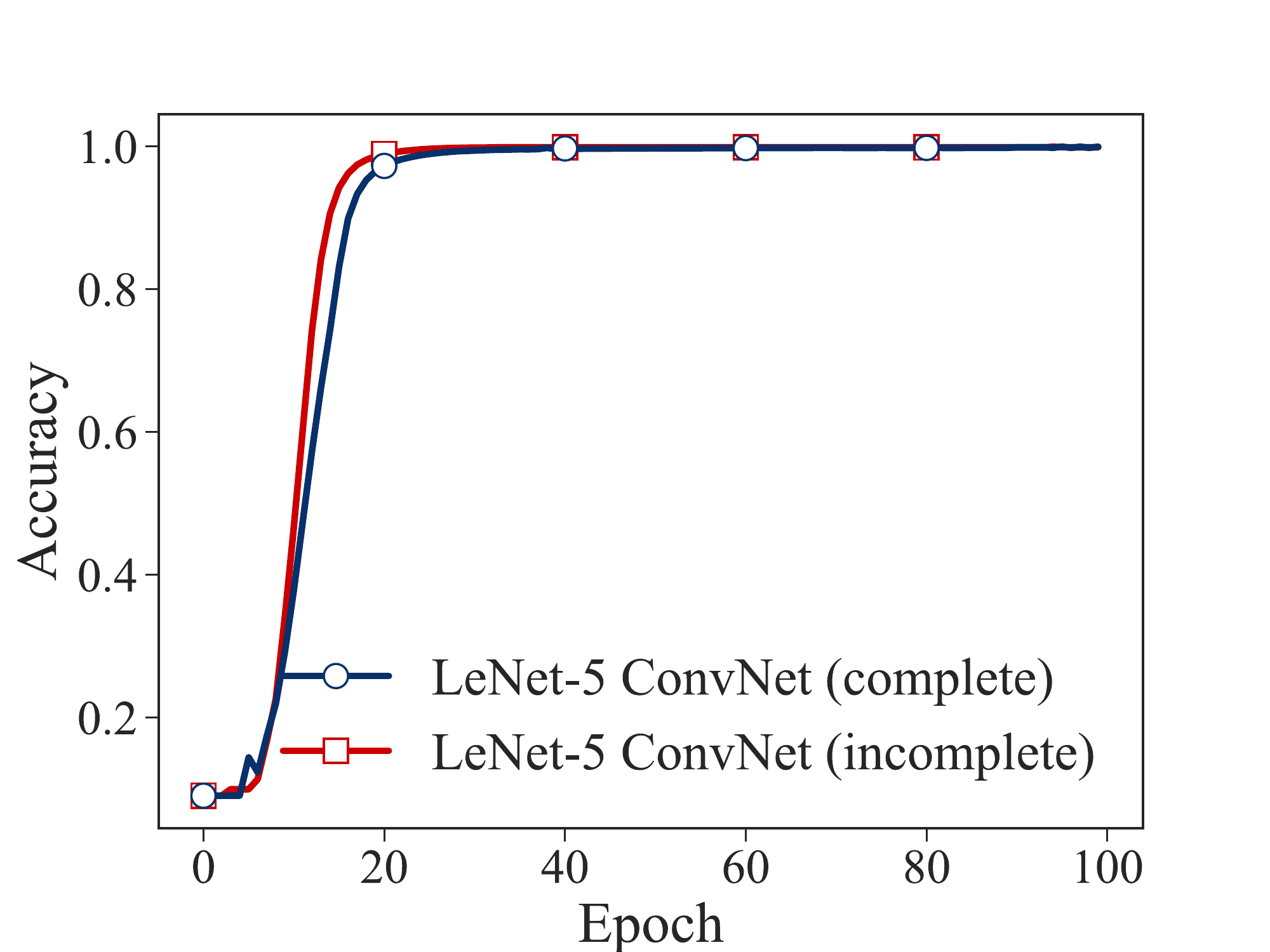}
\caption{LeNet-5}
\end{subfigure}
\caption{\label{fig:lenet_c3_comparison} Accuracy for complete and incomplete connections schemes at the C3 layer of the LeNet architectures on the training set of MNIST as a function of the number of epochs.}
\label{fig:lenet_acc_iter}
\end{figure}
\section{CKN counterparts to ConvNets}
\label{app:ckn_desc}
In this section we describe in detail the CKN counterparts to the ConvNets from Section~\ref{app:convnet_desc}. Unless otherwise specified, the filters $\W_i$ discussed below are initially chosen in our experiments via spherical $k$-means and then later trained using backpropagation. For clarity of the exposition we set the regularization parameter $\epsilon=0$ at each layer. Pictorial representations of the architectures may be found following their descriptions.

\subsection{LeNets} 
As noted in Section~\ref{sec:translation}, a linear activation function corresponds to a linear kernel and a convolution followed by the tanh activation is similar to the arc-cosine kernel of order zero. Moreover, recall from Section~\ref{sec:translation} that we may approximate kernels by projecting onto a subspace. The dimension of this projection corresponds to the number of filters.  We use these facts to define the CKN counterparts to LeNet-1 and LeNet-5. 

As noted in Section~\ref{app:lenet_incomplete}, the incomplete connection schemes in the LeNets were included for computational considerations. As the corresponding networks with complete connection schemes perform similarly or better, we translate the LeNets with a complete connection scheme.

\paragraph{LeNet-1}
Let $\F_0\in\mbr^{1\times28\times28}$ denote the initial representation of an image. The first layer is the counterpart to a convolutional layer and consists of applying a linear kernel and projecting onto a subspace. Let $\W_1\in\mbr^{4\times1\times5\times5}$. %
For $k=1,\dots, 4$, let
\begin{equation*}
\F'_1(k, \cdot, \cdot) = F_0(1, \cdot, \cdot)\star \W_1(k, 1, \cdot, \cdot),
\end{equation*}
where $\star$ denotes the convolution operation.
Then the output of the first layer is given by $\F_1\in\mbr^{4\times24\times24}$ with 
\begin{equation*}
\F_1(\cdot, i, j) = \left(\sum_{m,n=1}^5 \W_1(\cdot, 1, m, n)\W_1(\cdot, 1, m, n)^T\right)^{-1/2}\F'_1(\cdot, i, j)
\end{equation*}
for $i,j=1,\dots, 24$.

Next, the second layer in the ConvNet performs average pooling and subsampling with learnable weights and then applies a pointwise nonlinearity ($\tanh$). The corresponding CKN pools and subsamples and then applies the feature map of an arc-cosine kernel on $1\times1$ patches. Define $\emat_{2}=(e_1, e_3, e_5, \dots, e_{23})$ where $e_i\in\mbr^{23}$ is a vector with 1 in element $i$ and 0 elsewhere.  The pooling and subsampling result in $\F'_2\in\mbr^{4\times12\times12}$ given by
\begin{equation*}
\F'_2(k, \cdot, \cdot) = \emat_{2}^T\left(\F_1(k, \cdot, \cdot)\star\frac{1}{4}\mathbbm{1}_{2\times2}\right)\emat_{2}
\end{equation*}
for $k=1,\dots, 4$.
Next, let $\W_2\in\mbr^{4\times4}$ be the identity matrix and let $\k_2:\mbr^4\times\mbr^4\to\mbr$ be the arc-cosine kernel of order zero. 
The output of the second layer is then $\F_2\in\mbr^{4\times12\times12}$ given by
\begin{equation*}
\F_2(\cdot, i, j) = \left(\left[\k_2\left(\W_2(m, \cdot), \W_2(n, \cdot)\right)\right]_{m,n=1}^{4}\right)^{-1/2}\left[\k_2\left(\F'_2(\cdot, i, j), \W_2(m, \cdot)\right)\right]_{m=1}^4
\end{equation*}
for $i,j=1,\dots, 12$.

The third layer in LeNet-1 is again a convolutional layer. Here we use a complete connection scheme since for the ConvNet we found that empirically a complete connection scheme outperforms an incomplete connection scheme. Therefore, this layer again consists of applying a linear kernel and projecting onto a subspace. 
Let $\W_3\in\mbr^{12\times4\times5\times5}$. 
For $k=1,\dots, 12$, let
\begin{equation*}
\F'_3(k, \cdot, \cdot) = \sum_{z=1}^4  \F_2(z, \cdot, \cdot)\star \W_3(k, z, \cdot, \cdot).
\end{equation*}
Then the output of the third layer is given by $\F_3\in\mbr^{12\times8\times8}$ with 
\begin{equation*}
\F_3(\cdot, i, j) = \left(\sum_{m=1}^4\sum_{n,p=1}^5 \W_3(\cdot, m, n, p)\W_3(\cdot, m, n, p)^T\right)^{-1/2}\F'_3(\cdot, i, j).
\end{equation*}

The fourth layer is similar to the second layer. The CKN pools and subsamples and then applies an arc-cosine kernel on $1\times1$ patches. Define $\emat_{4}=(e_1, e_3, e_5,  e_{7})$ where $e_i\in\mbr^{7}$ is a vector with 1 in element $i$ and 0 elsewhere.  The pooling and subsampling result in $\F'_4\in\mbr^{12\times4\times4}$ given by
\begin{equation*}
\F'_4(k, \cdot, \cdot) = \emat_{4}^T\left(\F_3(k, \cdot, \cdot)\star\frac{1}{4}\mathbbm{1}_{2\times2}\right)\emat_{4}
\end{equation*}
for $k=1,\dots, 4$.
Next, let $\W_4\in\mbr^{12\times12}$ be the identity matrix and let $\k_4:\mbr^{12}\times\mbr^{12}\to\mbr$ be the arc-cosine kernel of order zero. 
The output of the fourth layer is then $\F_4\in\mbr^{12\times4\times4}$ given by
\begin{equation*}
\F_4(\cdot, i, j) = \left(\left[\k_4\left(\W_4(m, \cdot), \W_4(n, \cdot)\right)\right]_{m,n=1}^{12}\right)^{-1/2}\left[\k_4\left(\F'_3(\cdot, i, j), \W_4(m, \cdot)\right)\right]_{m=1}^{12}
\end{equation*}
for $i,j=1,\dots, 4$.
The output from this layer is the set of features provided to a classifier.

\begin{figure}
\vspace*{-3cm}
\hspace*{-2.5cm}
\begin{center}
\includegraphics[scale=1.0,trim={1cm 5cm 0cm 1cm},clip]{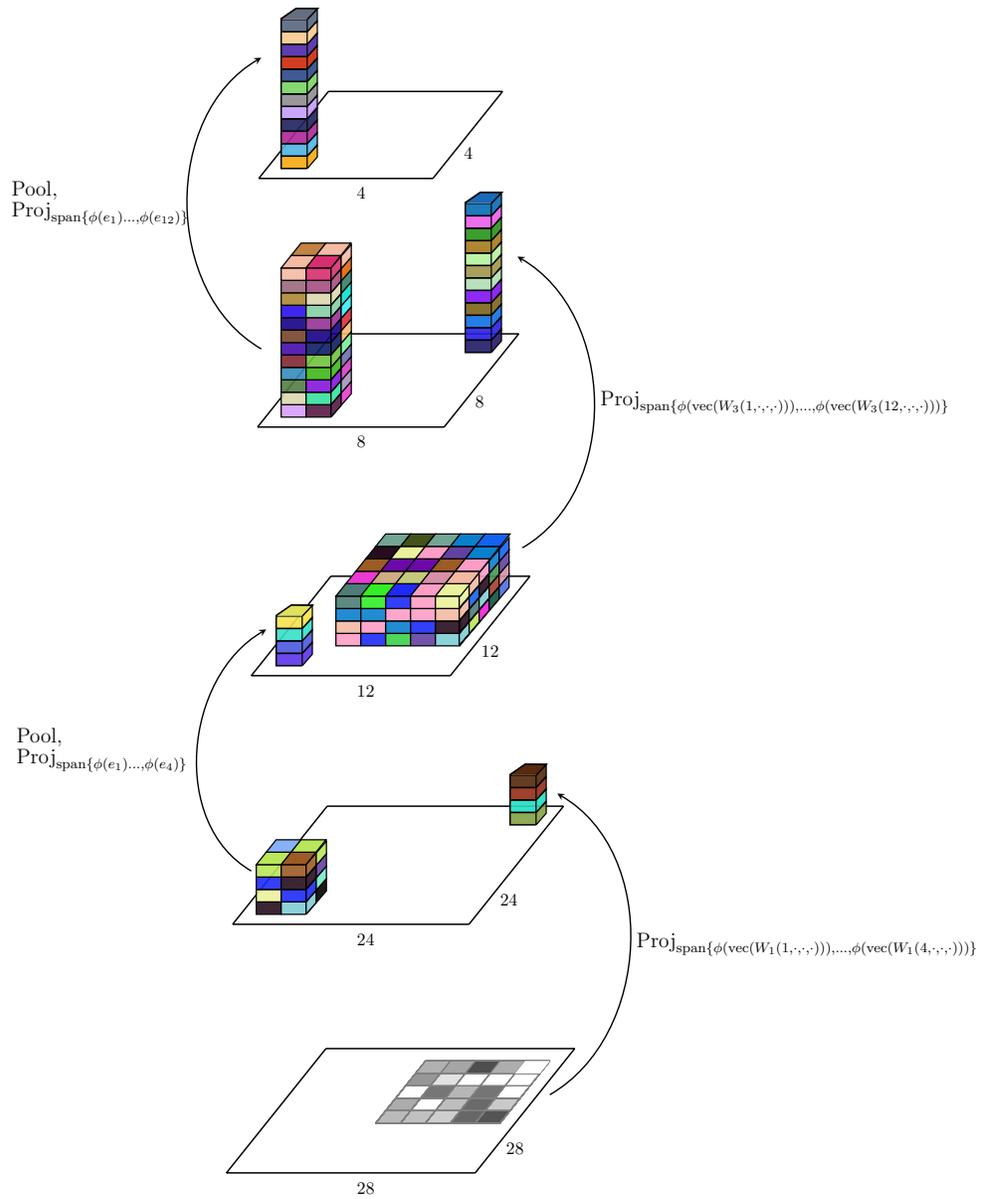}
\end{center}
\caption{LeNet-1 architecture. The dimensions of the stacks of blocks are the dimensions of the filters at each layer.  The height of each stack denotes the number of filters. The arrows indicate how a block gets transformed into the block at the next layer. The numbers on the sides of the parallelograms indicate the spatial dimensions of the feature representations at each layer. }%
\end{figure}

\paragraph{LeNet-5}
The CKN counterpart of LeNet-5 is similar to that of LeNet-1.
Let $\F_0\in\mbr^{1\times32\times32}$ denote the initial representation of an image. The first layer is the counterpart to a convolutional layer and consists of applying a linear kernel and projecting onto a subspace. Let $\W_1\in\mbr^{6\times1\times5\times5}$. %
For $k=1,\dots, 6$, let
\begin{equation*}
\F'_1(k, \cdot, \cdot) = F_0(1, \cdot, \cdot)\star \W_1(k, 1, \cdot, \cdot).
\end{equation*}
Then the output of the first layer is given by $\F_1\in\mbr^{6\times28\times28}$ with 
\begin{equation*}
\F_1(\cdot, i, j) = \left(\sum_{m,n=1}^5 \W_1(\cdot, 1, m, n)\W_1(\cdot, 1, m, n)^T\right)^{-1/2}\F'_1(\cdot, i, j)
\end{equation*}
for $i,j=1,\dots, 28$.

Next, the second layer in the ConvNet performs average pooling and subsampling with learnable weights and then applies a pointwise nonlinearity ($\tanh$). The corresponding CKN pools and subsamples and then applies an arc-cosine kernel on $1\times1$ patches. Define $\emat_{2}=(e_1, e_3, e_5, \dots, e_{27})$ where $e_i\in\mbr^{27}$ is a vector with 1 in element $i$ and 0 elsewhere.  The pooling and subsampling result in $\F'_2\in\mbr^{6\times14\times14}$ given by
\begin{equation*}
\F'_2(k, \cdot, \cdot) = \emat_{2}^T\left(\F_1(k, \cdot, \cdot)\star\frac{1}{4}\mathbbm{1}_{2\times2}\right)\emat_{2}
\end{equation*}
for $k=1,\dots, 6$.
Next, let $W_2\in\mbr^{6\times6}$ be the identity matrix and let $\k_2:\mbr^6\times\mbr^6\to\mbr$ be the arc-cosine kernel of order zero. 
The output of the second layer is then $\F_2\in\mbr^{6\times14\times14}$ given by
\begin{equation*}
\F_2(\cdot, i, j) = \left(\left[\k_2\left(\W_2(m, \cdot), \W_2(n, \cdot)\right)\right]_{m,n=1}^{6}\right)^{-1/2}\left[\k_2\left(\F'_2(\cdot, i, j), W_2(m, \cdot)\right)\right]_{m=1}^6
\end{equation*}
for $i,j=1,\dots, 14$.

The third layer in LeNet-5 is again a convolutional layer. Here we use a complete connection scheme since for the ConvNet we found that empirically a complete connection scheme outperforms an incomplete connection scheme. Therefore, this layer again consists of applying a linear kernel and projecting onto a subspace. 
For $k=1,\dots, 16$, let
\begin{equation*}
\F'_3(k, \cdot, \cdot) = \sum_{z=1}^{6}  \F_2(z, \cdot, \cdot)\star \W_3(k, z, \cdot, \cdot).
\end{equation*}
Then the output of the third layer is given by $\F_3\in\mbr^{16\times10\times10}$ with 
\begin{equation*}
\F_3(\cdot, i, j) = \left(\sum_{m=1}^6\sum_{n,p=1}^5 \W_3(\cdot, m, n, p)\W_3(\cdot, m, n, p)^T\right)^{-1/2}\F'_3(\cdot, i, j).
\end{equation*}

The fourth layer is similar to the second layer. The CKN pools and subsamples and then applies an arc-cosine kernel on $1\times1$ patches. Define $\emat_{4}=(e_1, e_3, e_5,  e_{7})$ where $e_i\in\mbr^{7}$ is a vector with 1 in element $i$ and 0 elsewhere.  The pooling and subsampling result in $\F'_4\in\mbr^{16\times5\times5}$ given by
\begin{equation*}
\F'_4(k, \cdot, \cdot) = \emat_{4}^T\left(\F_3(k, \cdot, \cdot)\star\frac{1}{4}\mathbbm{1}_{2\times2}\right)\emat_{4}
\end{equation*}
for $k=1,\dots, 4$.
Next, let $W_4\in\mbr^{16\times16}$ be the identity matrix and let $\k_4:\mbr^{12}\times\mbr^{12}\to\mbr$ be the arc-cosine kernel of order zero. 
The output of the fourth layer is then $\F_4\in\mbr^{16\times5\times5}$ given by
\begin{equation*}
\F_4(\cdot, i, j) = \left(\left[\k_4\left(\W_4(m, \cdot), \W_4(n, \cdot)\right)\right]_{m,n=1}^{16}\right)^{-1/2}\left[\k_4\left(\F'_4(\cdot, i, j), \W_4(m, \cdot)\right)\right]_{m=1}^{16}
\end{equation*}
for $i,j=1,\dots, 5$.

The fifth layer is a fully connected layer. Let $\W_5\in\mbr^{120\times16\times5\times5}$ and let $\k_5:\mbr^{16}\times\mbr^{16}\to\mbr$ be the arc-cosine kernel of order zero.
 Then the output of this layer is given by $\F_5\in\mbr^{120}$ given by
\begin{equation*}
\F_5 = \left(\left[\k_5\left(\vec(\W_5(m, \cdot, \cdot, \cdot)), \vec(\W_5(n, \cdot, \cdot, \cdot))\right)\right]_{m,n=1}^{120}\right)^{-1/2}\left[\k_5\left(\vec(\F_4), \vec(\W_5(m, \cdot, \cdot, \cdot))\right)\right]_{m=1}^{120}.
\end{equation*}

Finally, the sixth layer is also a fully connected layer. Let $\W_6\in\mbr^{84\times120}$ and let $\k_6:\mbr^{120}\times\mbr^{120}\to\mbr$ be the arc-cosine kernel of order zero. Then the output is given by $\F_6\in\mbr^{84}$ with
\begin{equation*}
\F_6 = \left(\left[\k_6\left(\W_6(m, \cdot), \W_6(n, \cdot)\right)\right]_{m,n=1}^{84}\right)^{-1/2}\left[\k_6\left(\F_5, \W_6(m, \cdot)\right)\right]_{m=1}^{84}.
\end{equation*}
The output from this layer is the set of features provided to a classifier.

\begin{figure}
\vspace*{-3cm}
\hspace*{-2.5cm}
\begin{center}
\includegraphics[scale=1.0,trim={1cm 5cm 0cm 1cm},clip]{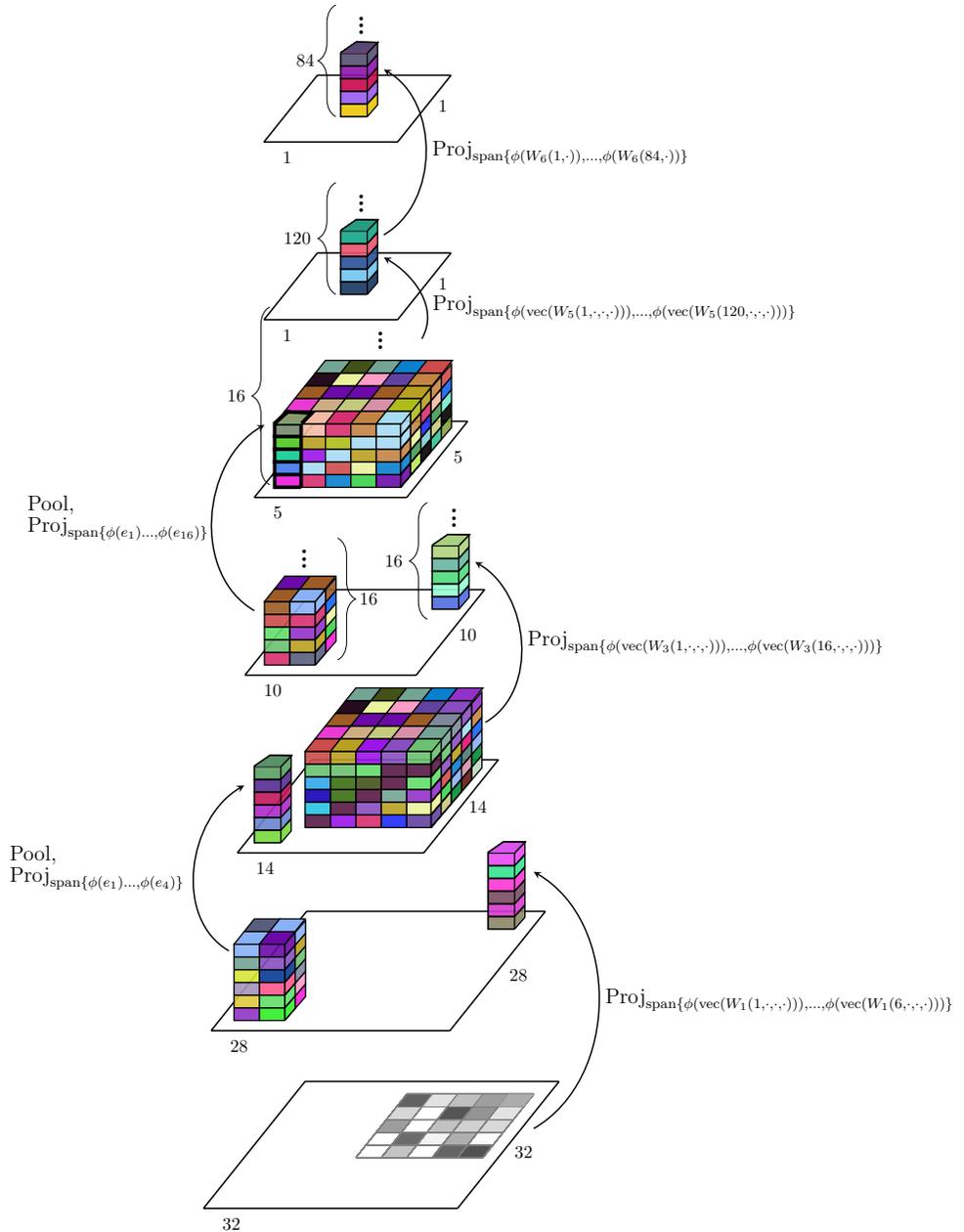}
\end{center}
\caption{LeNet-5 architecture. The dimensions of the stacks of blocks are the dimensions of the filters at each layer. The numbers next to curly brackets indicate the number of filters at each layer when the number of filters is not the same as the height of the stack of blocks. The arrows indicate how one block gets transformed into the block at the next layer. The numbers on the sides of the parallelograms indicate the spatial dimensions of the feature representations at each layer. }%
\end{figure}

\subsection{All-CNN-C}
For the CKN counterpart of All-CNN-C we use the fact that a convolution followed by ReLU activation corresponds to an arc-cosine kernel of order 1.

The input to the model is an image $\F_0\in\mbr^{3\times32\times32}$. Let $\W_1\in\mbr^{96\times3\times3\times3}$. The initial layer consists of projecting onto a subspace spanned by feature maps $\phi(\vec(W_{1, \cdot, \cdot, \cdot})), \dots, \phi(\vec(W_{96, \cdot, \cdot, \cdot}))$  from the arc-cosine kernel on normalized patches and then multiplying by the norms of the patches. Define $\Z_1\in\mbr^{32\times34}$ such that $(\Z_1)_{ij}=1$ if $j=i+1$ for $i=1,\dots, 32$ and is $0$ otherwise.
Let $\N_1$ be a matrix containing the squared norms of $3\times3\times3$ patches with a $1\times1$ stride:
\begin{equation*}
\N_1=\sum_{k=1}^3\left\{\left[(\Z_1^T\F_0(k, \cdot, \cdot)\Z_1)\odot(\Z_1^T\F_0(k, \cdot, \cdot)\Z_1)\right]\star\mathbbm{1}_{3\times3}\right\},
\end{equation*}
where $\odot$ is the Hadamard product.
Let $\k:\mbr\to\mbr$ be defined as $\k(\langle x, y\rangle) = \frac{1}{\pi}\sin(\cos^{-1}\left(\langle x, y\rangle\right)) + (\pi-\theta)\langle x, y\rangle$. Define
\begin{equation*}
\F'_1(k, \cdot, \cdot) = \sum_{z=1}^3\left[(\Z_1^T\F_0(z, \cdot, \cdot)\Z_1)\star \W_1(k, z, \cdot, \cdot)\right]
\end{equation*}
for $k=1,\dots, 96$.
The output from the projection is then given by $\F_1\in\mbr^{96\times32\times32}$ with
\begin{equation*}
\F_1(\cdot, i, j) = \N_1(i,j)^{1/2}\k\left(\sum_{m=1}^{3}\sum_{n,p=1}^3 \W_1(\cdot, m, n, p)\W_1(\cdot, m, n, p)^T\right)^{-1/2}\k\left(\N_1(i,j)^{-1/2}\F'_1(\cdot, i, j) \right),
\end{equation*}
for $i,j=1,\dots, 32$ where $\k$ is understood to be applied element-wise.

The second layer is of the same form as the first layer. Let $\W_2\in\mbr^{96\times96\times3\times3}$ and define $\Z_2\in\mbr^{32\times34}$ such that $(\Z_2)_{ij}=1$ if $j=i+1$  for $i=1,\dots, 32$ and is $0$ otherwise.
Let $\N_2$ be a matrix containing the squared norms of $96\times3\times3$ patches with a $1\times1$ stride:
\begin{equation*}
\N_2=\sum_{k=1}^{96}\left\{\left[(\Z_2^T\F_1(k, \cdot, \cdot)\Z_2)\odot(\Z_2^T\F_1(k, \cdot, \cdot)\Z_2)\right]\star\mathbbm{1}_{3\times3}\right\}.
\end{equation*}
Define
\begin{equation*}
\F'_2(k, \cdot, \cdot) = \sum_{z=1}^{96}\left[(\Z_2^T\F_1(z, \cdot, \cdot)\Z_2)\star \W_2(k, z, \cdot, \cdot)\right]
\end{equation*}
for $k=1,\dots, 96$.
The output from the projection is then given by $\F_2\in\mbr^{96\times32\times32}$ with
\begin{equation*}
\F_2(\cdot, i, j) = \N_2(i,j)^{1/2}\k\left(\sum_{m=1}^{96}\sum_{n,p=1}^3 \W_2(\cdot, m, n, p)\W_2(\cdot, m, n, p)^T\right)^{-1/2}\k\left(\N_2(i,j)^{-1/2}\F'_2(\cdot, i, j) \right),
\end{equation*}
for $i,j=1,\dots, 32$.

The third layer is similar to the previous layer, but subsamples. Let $\W_3\in\mbr^{96\times96\times3\times3}$ and define $\emat_{3}=(e_1, e_3, e_5, \dots, e_{31})$ where $e_i\in\mbr^{32}$ is a vector with 1 in element $i$ and 0 elsewhere. %
Let $\N_3$ be a matrix containing the squared norms of $96\times3\times3$ patches with a $2\times2$ stride:
\begin{equation*}
\N_3=\sum_{k=1}^{96}\emat_{3}^T\left\{\left[\F_1(k, \cdot, \cdot)\odot\F_1(k, \cdot, \cdot)\right]\star\mathbbm{1}_{3\times3}\right\}\emat_{3}.
\end{equation*}
Define
\begin{equation*}
\F'_3(k, \cdot, \cdot) = \sum_{z=1}^{96}\emat_{3}^T\left[\F_2(z, \cdot, \cdot)\star \W_3(k, z, \cdot, \cdot)\right]\emat_{3}
\end{equation*}
for $k=1,\dots, 96$.
The output from the projection is then given by $\F_3\in\mbr^{96\times15\times15}$ with
\begin{equation*}
\F_3(\cdot, i, j) = \N_3(i,j)^{1/2}\k\left(\sum_{m=1}^{96}\sum_{n,p=1}^3 \W_3(\cdot, m, n, p)\W_3(\cdot, m, n, p)^T\right)^{-1/2}\k\left(\N_3(i,j)^{-1/2}\F'_3(\cdot, i, j) \right),
\end{equation*}
for $i,j=1,\dots, 15$.

The fourth layer is of the same form as the second layer, but has 192 filters. Let $\W_4\in\mbr^{192\times96\times3\times3}$ and define $\Z_4\in\mbr^{15\times17}$ such that $(\Z_4)_{ij}=1$ if $j=i+1$  for $i=1,\dots, 15$ and is $0$ otherwise..
Let $\N_4$ be a matrix containing the squared norms of $96\times3\times3$ patches with a $1\times1$ stride:
\begin{equation*}
\N_4=\sum_{k=1}^{96}\left\{\left[(\Z_4^T\F_3(k, \cdot, \cdot)\Z_4)\odot(\Z_4^T\F_3(k, \cdot, \cdot)\Z_4)\right]\star\mathbbm{1}_{3\times3}\right\}.
\end{equation*}
Define
\begin{equation*}
\F'_4(k, \cdot, \cdot) = \sum_{z=1}^{96}\left[(\Z_4^T\F_3(z, \cdot, \cdot)\Z_4)\star \W_4(k, z, \cdot, \cdot)\right]
\end{equation*}
for $k=1,\dots, 192$.
The output from the projection is then given by $\F_4\in\mbr^{192\times15\times15}$ with
\begin{equation*}
\F_4(\cdot, i, j) = \N_4(i,j)^{1/2}\k\left(\sum_{m=1}^{96}\sum_{n,p=1}^3 \W_4(\cdot, m, n, p)\W_4(\cdot, m, n, p)^T\right)^{-1/2}\k\left(\N_4(i,j)^{-1/2}\F'_4(\cdot, i, j) \right),
\end{equation*}
for $i,j=1,\dots, 15$.

The fifth layer is analogous to the fourth layer. Let $\W_5\in\mbr^{192\times192\times3\times3}$ and define $\Z_5\in\mbr^{15\times17}$ such that $(\Z_5)_{ij}=1$ if $j=i+1$  for $i=1,\dots, 15$ and is $0$ otherwise..
Let $\N_5$ be a matrix containing the squared norms of $192\times3\times3$ patches with a $1\times1$ stride:
\begin{equation*}
\N_5=\sum_{k=1}^{192}\left\{\left[(\Z_5^T\F_4(k, \cdot, \cdot)\Z_5)\odot(\Z_5^T\F_4(k, \cdot, \cdot)\Z_5)\right]\star\mathbbm{1}_{3\times3}\right\}.
\end{equation*}
Define
\begin{equation*}
\F'_5(k, \cdot, \cdot) = \sum_{z=1}^{192}\left[(\Z_5^T\F_4(z, \cdot, \cdot)\Z_5)\star \W_5(k, z, \cdot, \cdot)\right]
\end{equation*}
for $k=1,\dots, 192$.
The output from the projection is then given by $\F_5\in\mbr^{192\times15\times15}$ with
\begin{equation*}
\F_5(\cdot, i, j) = \N_5(i,j)^{1/2}\k\left(\sum_{m=1}^{192}\sum_{n,p=1}^3 \W_5(\cdot, m, n, p)\W_5(\cdot, m, n, p)^T\right)^{-1/2}\k\left(\N_5(i,j)^{-1/2}\F'_5(\cdot, i, j) \right),
\end{equation*}
for $i,j=1,\dots, 15$.

The sixth layer is similar to the third layer. Let $\W_6\in\mbr^{192\times192\times3\times3}$ and define $\emat_{6}=(e_1, e_3, e_5, \dots, e_{13})$ where $e_i\in\mbr^{14}$ is a vector with 1 in element $i$ and 0 elsewhere. %
Let $\N_6$ be a matrix containing the squared norms of $192\times3\times3$ patches with a $2\times2$ stride:
\begin{equation*}
\N_6=\sum_{k=1}^{192}\emat_{6}^T\left\{\left[\F_5(k, \cdot, \cdot)\odot\F_5(k, \cdot, \cdot)\right]\star\mathbbm{1}_{3\times3}\right\}\emat_{6}.
\end{equation*}
Define
\begin{equation*}
\F'_6(k, \cdot, \cdot) = \sum_{z=1}^{192}\emat_{6}^T\left[\F_5(z, \cdot, \cdot)\star \W_6(k, z, \cdot, \cdot)\right]\emat_{6}
\end{equation*}
for $k=1,\dots, 192$.
The output from the projection is then given by $\F_6\in\mbr^{192\times7\times7}$ with
\begin{equation*}
\F_6(\cdot, i, j) = \N_6(i,j)^{1/2}\k\left(\sum_{m=1}^{192}\sum_{n,p=1}^3 \W_6(\cdot, m, n, p)\W_6(\cdot, m, n, p)^T\right)^{-1/2}\k\left(\N_6(i,j)^{-1/2}\F'_6(\cdot, i, j) \right),
\end{equation*}
for $i,j=1,\dots, 7$.

The seventh layer is analogous to the fifth layer. Let $\W_7\in\mbr^{192\times192\times3\times3}$ and define $\Z_7\in\mbr^{7\times9}$ such that $(\Z_7)_{ij}=1$ if $j=i+1$  for $i=1,\dots, 7$ and is $0$ otherwise..
Let $\N_7$ be a matrix containing the squared norms of $192\times3\times3$ patches with a $1\times1$ stride:
\begin{equation*}
\N_7=\sum_{k=1}^{192}\left\{\left[(\Z_7^T\F_6(k, \cdot, \cdot)\Z_7)\odot(\Z_7^T\F_6(k, \cdot, \cdot)\Z_7)\right]\star\mathbbm{1}_{3\times3}\right\}.
\end{equation*}
Define
\begin{equation*}
\F'_7(k, \cdot, \cdot) = \sum_{z=1}^{192}\left[(\Z_7^T\F_6(z, \cdot, \cdot)\Z_7)\star \W_7(k, z, \cdot, \cdot)\right]
\end{equation*}
for $k=1,\dots, 192$.
The output from the projection is then given by $\F_7\in\mbr^{96\times7\times7}$ with
\begin{equation*}
\F_7(\cdot, i, j) = \N_7(i,j)^{1/2}\k\left(\sum_{m=1}^{192}\sum_{n,p=1}^3 \W_7(\cdot, m, n, p)\W_7(\cdot, m, n, p)^T\right)^{-1/2}\k\left(\N_7(i,j)^{-1/2}\F'_7(\cdot, i, j) \right),
\end{equation*}
for $i,j=1,\dots, 7$.

The eighth layer switches to $1\times1$ convolutions. Let $\W_8\in\mbr^{192\times192}$ and let $\N_8$ be a matrix containing the squared norms of $192\times1\times1$ patches with a $1\times1$ stride: 
\begin{equation*}
\N_8=\sum_{k=1}^{192}\left[\F_7(k, \cdot, \cdot)\odot\F_7(k, \cdot, \cdot)\right].
\end{equation*}
Define
\begin{equation*}
\F'_8(k, \cdot, \cdot) = \sum_{z=1}^{192}\left[\F_7(z, \cdot, \cdot)\star \W_8(k, z)\right]
\end{equation*}
for $k=1,\dots, 192$.
The output from the projection is then given by $\F_8\in\mbr^{192\times7\times7}$ with
\begin{equation*}
\F_8(\cdot, i, j) = \N_8(i,j)^{1/2}\k\left(\sum_{m=1}^{192}\W_8(\cdot, m)\W_8(\cdot, m)^T\right)^{-1/2}\k\left(\N_8(i,j)^{-1/2}\F'_8(\cdot, i, j) \right),
\end{equation*}
for $i,j=1,\dots, 7$.

The ninth layer again has $1\times1$ convolutions, but with 10 filters. Let $\W_9\in\mbr^{10\times192}$  and let $\N_9$ be a matrix containing the squared norms of $192\times1\times1$ patches with a $1\times1$ stride: 
\begin{equation*}
\N_9=\sum_{k=1}^{192}\left[\F_8(k, \cdot, \cdot)\odot\F_8(k, \cdot, \cdot)\right]
\end{equation*}
Define
\begin{equation*}
\F'_9(k, \cdot, \cdot) = \sum_{z=1}^{192}\left[\F_8(z, \cdot, \cdot)\star \W_9(k, z)\right]
\end{equation*}
for $k=1,\dots, 10$.
The output from the projection is then given by $\F_9\in\mbr^{10\times7\times7}$ with
\begin{equation*}
\F_9(\cdot, i, j) = \N_9(i,j)^{1/2}\k\left(\sum_{m=1}^{192}\W_9(\cdot, m)\W_9(\cdot, m)^T\right)^{-1/2}\k\left(\N_9(i,j)^{-1/2}\F'_9(\cdot, i, j) \right),
\end{equation*}
for $i,j=1,\dots, 7$.

The tenth layer performs pooling across the entire feature maps. The output is $\F_{10}\in\mbr^{10}$ given by
\begin{equation*}
\F_{10}(k) = \F_9(k, \cdot, \cdot)\star\frac{1}{49}\mathbbm{1}_{7\times7}
\end{equation*}
for $k=1,\dots, 10$. The output from this layer is the set of features provided to a classifier.

\begin{figure}
\vspace*{-3cm}
\hspace*{-2.5cm}
\includegraphics[scale=1.0,trim={0cm 5cm 1cm 1cm},clip]{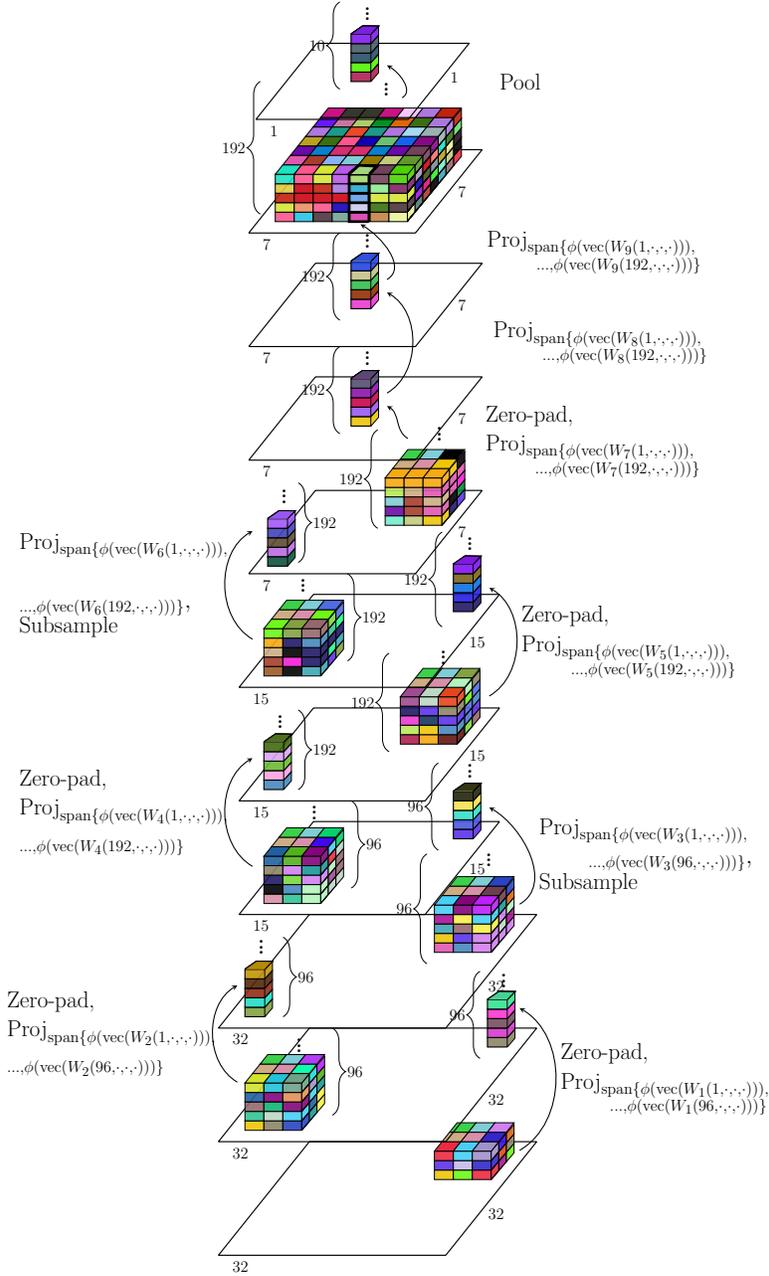}
\caption{All-CNN-C architecture. The dimensions of the stacks of blocks are the dimensions of the filters at each layer. The numbers next to curly brackets indicate the number of filters at each layer when the number of filters is not the same as the height of the stack of blocks. The arrows indicate how one block gets transformed into the block at the next layer. The numbers on the sides of the parallelograms indicate the spatial dimensions of the feature representations at each layer.}
\end{figure}

\section{CKN gradient}
\label{app:gradient_proofs}
In this section we derive the gradient of the loss function with respect to the filters for a CKN. Along the way we compute the gradient for a single layer of the CKN with respect to both the filters and the inputs. Following this, for the case when the kernels are RBF kernels on the sphere we compute the gradient of the loss function with respect to the bandwidths of the kernels.

\subsection{Notations}\label{ssec:notations_ckn_grad}
\paragraph{Linear algebra}
 We denote $\id_d$ the identity matrix in $\reals^{d\times d}$. For a matrix $A\in \reals^{d \times n} =(a_1, \ldots, a_n)$, we denote $\vec(A) = (a_1;\ldots;a_n) \in \reals^{dn}$ the concatenation of its columns. Given two dimensions $d, n$, we denote $T_{d,n} \in \reals^{dn \times dn}$ the linear operator such that for a matrix $A\in \reals^{d\times n}$, $\vec(A^T) = T_{d,n} \vec(A)$.
Given matrices $A_1, \ldots, A_n$, we denote $\prod_{i=1}^{n} A_i = A_n A_{n-1} \ldots A_1$, i.e., the multiplication is performed from right to left in increasing order of the indices.
 
We denote the set of all positive definite matrices of size $n\times n$ by $S^n_{++}$. We assume all matrices have real-valued entries and use the notation $A^{1/2}$ to denote the square root of a positive semi-definite matrix. That is, if $A=UDU^T$ is the eigendecomposition of $A$ with $U$ orthonormal and $D$ diagonal, then $A^{1/2}=UD^{1/2}U^T$.

\paragraph{Derivatives}
For a multivariate function $f: \reals^d \rightarrow \reals^n$, we denote by
$\nabla f(x) = \left(\frac{\partial f_i(x)}{\partial x_j}\right)_{i=1,\ldots,n, j=1,\ldots, d} \in \reals^{n \times d}$ its Jacobian at $x \in \reals^{d}$, where $f_i(x)$ is the $i$\textsuperscript{th} coordinate of $f(x)$. For a function $f: \reals^p \times \reals^q \rightarrow \reals^n$, denoted $f(x,y)$ for $x\in \reals^p$, $y \in \reals^q$, we denote by $\nabla_x f(x,y)$ its partial Jacobian with respect to the variable $x$ at a point $(x,y) \in \reals^p \times \reals^q$, i.e., $\nabla_x f(x,y) = \left(\frac{\partial f_i(x, y)}{\partial x_j}\right)_{i=1,\ldots,n, j=1,\ldots, p} \in \reals^{n \times p}$ and similarly $\nabla_y f(x,y) = \left(\frac{\partial f_i(x, y)}{\partial y_j}\right)_{i=1,\ldots,n, j=1,\ldots, q} \in \reals^{n \times q}$.

For a multivariate matrix function $\Psi: \reals^{m \times n} \times \reals^{p \times q} \rightarrow \reals^{p^+ \times q^+}$, denoted $\Psi(W,F)\in \reals^{p^+ \times q^+}$ for $W\in \reals^{m \times n}$, $F\in \reals^{p \times q}$, we denote by $\vec(\Psi):  \reals^{m n} \times \reals^{p q} \rightarrow \reals^{p^+  q^+}$ its vectorized counterpart such that $\vec(\Psi(W,F)) = \vec (\Psi)(\vec(W), \vec(F))$ for any $W\in \reals^{m \times n}$, $F\in \reals^{p \times q}$. We then denote by $\nabla_{\vec(W)} \vec(\Psi)(\vec(W), \vec(F)) \in \reals^{p^+ q^+ \times mn}$ the partial Jacobian of the vectorized counterpart of $\Psi$ with respect to its first argument $\vec(W)$, and analogously for $\nabla_{\vec(F)} \vec(\Psi)(\vec(W), \vec(F)) \in \reals^{p^+ q^+ \times pq}$.
In the following when $W,F$ are clear from the context we denote simply $\nabla_{\vec(W)} \vec(\Psi) = \nabla_{\vec(W)} \vec(\Psi)(\vec(W), \vec(F))$ and  $\nabla_{\vec(F)} \vec(\Psi) = \nabla_{\vec(F)} \vec(\Psi)(\vec(W), \vec(F))$.

\subsection{Detailed CKN layer}
The output of a CKN layer was informally presented in Section~\ref{sec:ckn_formulation}. Now we precisely describe each component of a layer and its corresponding dimension.

At layers $\ell=1,\dots, L$, let $\f_\ell$ be the number of filters, $\s_\ell$ be the total size of the patches, $\p_\ell$ be the number of patches prior to pooling, and $\p_\ell'$ be the number of patches after pooling. Now at each layer $\ell$ let
$\F_\ell\in\mbr^{\f_\ell\times\p_\ell'}$ denote the features, $W_\ell \in\mbr^{\f_\ell\times \s_\ell}$ denote the filters, and $\P_\ell\in\mbr^{\p_\ell\times\p_\ell'}$ denote the pooling matrix. 

Define the \emph{patch extraction function} at layer $\ell$, $\E_\ell:\mbr^{\f_{\ell-1}\times\p_{\ell-1}'}\to\mbr^{\s_\ell\times\p_\ell}$, by 
\begin{equation}\label{eq:patch_extract}
E_\ell(X) = \sum_{i=1}^{\s_\ell/f_{\ell-1}}E_{\ell1i}XE_{\ell2i}
\end{equation}
with $E_{\ell1i}\in \mbr^{\s_\ell\times \f_{\ell-1}}$ and $E_{\ell2i}\in\mbr^{\p'_{\ell-1}\times p_\ell}$ for $i=1,\dots, \s_\ell/\f_{\ell-1}$. This function takes as input the feature representation of a whole image from layer $\ell-1$ with size $f_{\ell-1} \times p'_{\ell-1}$  and outputs its $p_\ell$ patches put in columns of size $s_\ell$ of a matrix $E_\ell(X)\in \reals^{\s_\ell\times\p_\ell}$.
  
Define the \emph{patch normalization} function, $\tilde N_\ell: \mbr^{\s_\ell\times\p_\ell} \to\mbr^{\p_\ell\times\p_\ell}$, by 
\begin{equation}  \label{eq:patch_norm}
\tilde N_\ell(X)=\left[(X^TX)\odot \id_{p_\ell}\right]^{1/2}.
\end{equation}
This function outputs a square matrix whose diagonal contains the norms of the patches.
Finally we denote by
\begin{equation*}
N_\ell(X) = \tilde N_\ell(E_\ell(X))
\end{equation*}
the composition of the two previous operations.

The output of the convolutional kernel layer is then
\begin{align}\label{eq:ckn_layer_detailed}
F_\ell = \Psi_\ell(F_{\ell-1}, W_\ell) \coloneqq\left(k\left(W_\ell W_\ell^T\right)+\epsilon \id_{f_\ell}\right)^{-1/2}k\left(W_\ell\E_\ell(F_{\ell-1})N_\ell(\F_{\ell-1})^{-1}\right)N_\ell(\F_{\ell-1})P_\ell,
\end{align}
where $\epsilon>0$ and $k:\mbr\to\mbr$ is a differentiable dot product kernel understood to be applied element-wise. In the following we denote it compactly by 
\[
\Psi_\ell(F_{\ell-1}, W_\ell) \coloneqq A(W_\ell) B(W_\ell, F_{\ell-1}) N_\ell(\F_{\ell-1})P_\ell,
\]
where 
\begin{align}
	A(W_\ell) & \coloneqq \left(k\left(W_\ell W_\ell^T\right)+\epsilon \id_{f_\ell}\right)^{-1/2} \label{eq:ckn_layer_shortcuts}\\
	B(W_\ell, F_{\ell-1}) &  \coloneqq k\left(W_\ell\E_\ell(F_{\ell-1})N_\ell(\F_{\ell-1})^{-1}\right). \nonumber
\end{align}

Recall the objective on which we want to apply first order optimization schemes. Given a set of images $\im^{(1)}_0,\dots, \im^{(n)}_0$ with corresponding labels $y^{(1)},\dots, y^{(n)}$, we
consider a loss $\mathcal{L}$ with a linear classifier parameterized by $W_{L+1}$, leading to the optimization problem 
\begin{align*}
\min_{\W_1,\dots,\W_{L+1}} \quad & \frac{1}{n} \sum_{i=1}^n \mathcal{L}\left(y^{(i)}, \left\langle \W_{L+1}, \im_L^{(i)}\right\rangle\right)+\lambda \Vert \W_{L+1}\Vert_F^2 \\
\mbox{subject to} \quad & W_\ell \in S^{d_\ell} \quad \mbox{for $\ell=1,\ldots, L$}
\end{align*}
where $S^{d_\ell}=\prod_{j=1}^{\f_\ell} \mbs^{\s_\ell-1}$ is the Cartesian product of Euclidean unit spheres in $\reals^{\s_\ell}$ and $F_L^{(i)}$ is the output of the $L$th layer of the network described by~\eqref{eq:ckn_layer_detailed} applied to the $i$th image. 
We henceforth assume that $\mcl$ is differentiable with respect to its second argument.

\subsection{CKN gradient}
The gradient of the loss with respect to the weights of the network is given by the chain rule, as recalled in the following proposition. The key elements are then the derivative of a layer with respect to its inputs and its weights, which are detailed by a series of lemmas.
\begin{proposition_unnumbered}[\ref{prop:loss_grad}]
	Let  $\mcl(y, \langle \W_{L+1}, \F_L\rangle)$ be the loss incurred by an image-label sample $(F_0, y)$, where 
$F_L$ is the output of $L$ layers of the network described by ~\eqref{eq:ckn_layer} and $W_{L+1}$ parameterizes the linear classifier. 
Then the 
gradient of the loss 
with respect to the inner weights $W_{\ell}$, $1\leq\ell\leq L$ is given by
\begin{align*}
\nabla_{\vec(W_{\ell})} \mcl\left(y, \left\langle \W_{L+1}, \F_L\right\rangle\right) =
\mcl'\vec(W_{L+1})^T\left[\prod_{\ell '=\ell+1}^{L} \nabla_{\vec\left(F_{\ell'-1}\right)} \vec\left(\Psi_{\ell'}\right)\right]
\times\nabla_{\vec(W_{\ell})} \vec\left(\Psi_{\ell}\right),
\end{align*}
where $\mcl'=\frac{\partial \mcl(\bar y, \hat y)}{\partial \hat y }|_{(\bar y, \hat y) = (y,\left\langle \W_{L+1}, \F_L\right\rangle)} $,  $\nabla_{\vec(F_{\ell'-1})} \vec(\Psi_{\ell'})$ is detailed in Proposition~\ref{prop:2layer_grad_part1}, and $\nabla_{\vec(W_{\ell})} \vec(\Psi_{\ell})$ is detailed in Proposition~\ref{prop:1layer_grad}.
\end{proposition_unnumbered}
\subsubsection{Layer derivative with respect to its weights}
The proof of the derivative of a CKN layer with respect to its weights is based on decomposing the gradient computations into the following lemmas.
\begin{lemma}
	\label{lemma:matrix_sqrt}
	Define the function $F:S^n_{++}\to\mbr^{n\times n}$ by $F(A)=A^{1/2}$.
	Then for a positive definite matrix $A\in\mbr^{n\times n}$ and a matrix $H\in\mbr^{n\times n}$ such that $A+H$ is positive definite we have
	\begin{align*}
	\vec(F(A+H)) = \vec(F(A)) +  \left(\id_n \otimes A^{1/2}+A^{1/2}\otimes \id_n\right)^{-1}\vec(H) + o(\|H\|_F).
	\end{align*}
\end{lemma}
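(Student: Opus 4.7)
The plan is to reduce the problem to a Sylvester (Lyapunov) equation by squaring the identity defining the matrix square root, and then vectorize using the standard Kronecker identity $\vec(XYZ) = (Z^T\otimes X)\vec(Y)$. Write $X = A^{1/2}$ and let $\Delta \coloneqq (A+H)^{1/2} - A^{1/2}$, so $F(A+H) = X + \Delta$. Squaring both sides of $(X+\Delta)^2 = A+H$ and using $X^2 = A$, I obtain the key identity
\begin{equation*}
X\Delta + \Delta X = H - \Delta^2.
\end{equation*}

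Next I would vectorize. Since $X$ is symmetric (it is the principal square root of a symmetric positive definite matrix), $\vec(X\Delta) = (\id_n \otimes X)\vec(\Delta)$ and $\vec(\Delta X) = (X \otimes \id_n)\vec(\Delta)$, so the Sylvester equation becomes
\begin{equation*}
\bigl(\id_n \otimes A^{1/2} + A^{1/2} \otimes \id_n\bigr)\vec(\Delta) = \vec(H) - \vec(\Delta^2).
\end{equation*}
Because $A^{1/2} \in S^n_{++}$, its eigenvalues $\mu_1,\dots,\mu_n$ are strictly positive and the eigenvalues of the operator $\id_n \otimes A^{1/2} + A^{1/2}\otimes\id_n$ are the sums $\mu_i + \mu_j > 0$, so this operator is invertible with a bounded inverse whose norm I can bound by $1/(2\mu_{\min})$.

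The main technical obstacle, and the only nontrivial step, is to control $\vec(\Delta^2)$ as an $o(\|H\|_F)$ remainder. For this I would first establish $\|\Delta\|_F = O(\|H\|_F)$ as $H \to 0$. Using the continuity of the matrix square root on $S^n_{++}$, $\Delta \to 0$ as $H\to 0$; applying the bounded inverse above to the displayed equation gives
\begin{equation*}
\|\Delta\|_F \leq \frac{1}{2\mu_{\min}}\bigl(\|H\|_F + \|\Delta\|_F^2\bigr),
\end{equation*}
and for $\|H\|_F$ small enough the quadratic term $\|\Delta\|_F^2$ can be absorbed, yielding $\|\Delta\|_F \leq C\|H\|_F$ for some constant $C$ depending only on $A$. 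Consequently $\|\Delta^2\|_F \leq \|\Delta\|_F^2 = O(\|H\|_F^2) = o(\|H\|_F)$.

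Finally I would assemble the pieces: apply the inverse Lyapunov operator to both sides of the vectorized equation to get
\begin{equation*}
\vec(\Delta) = \bigl(\id_n \otimes A^{1/2} + A^{1/2}\otimes \id_n\bigr)^{-1}\vec(H) + r,
\end{equation*}
where $\|r\|_2 \leq (2\mu_{\min})^{-1}\|\Delta^2\|_F = o(\|H\|_F)$. Substituting $\vec(F(A+H)) = \vec(F(A)) + \vec(\Delta)$ yields the claimed expansion.
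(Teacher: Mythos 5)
Your proof is correct, but it organizes the error analysis differently from the paper. Both arguments start from the same place — squaring $(A^{1/2}+\Delta)^2 = A+H$ and vectorizing a Sylvester/Lyapunov identity via $\vec(XYZ)=(Z^T\otimes X)\vec(Y)$ — but you keep the \emph{unperturbed} operator $\id_n\otimes A^{1/2}+A^{1/2}\otimes\id_n$ and push all the $H$-dependence into the quadratic remainder $\Delta^2$, which you then control by a bootstrap: continuity of the square root makes $\Delta$ small, the a priori bound $\|\Delta\|_F\le(2\mu_{\min})^{-1}(\|H\|_F+\|\Delta\|_F^2)$ absorbs the quadratic term, and $\|\Delta\|_F=O(\|H\|_F)$ then gives $\|\Delta^2\|_F=o(\|H\|_F)$. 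The paper instead uses the exact identity $H=A^{1/2}C+C(A+H)^{1/2}$, which is \emph{linear} in the increment $C$ but whose Lyapunov operator depends on $H$; the work there goes into expanding the inverse of the perturbed Kronecker-sum operator around the unperturbed one. Your route buys a cleaner error analysis — no perturbation of an operator inverse, a one-line absorption argument, and an explicit constant $1/\mu_{\min}$ in the $O(\|H\|_F)$ bound — at the cost of having to first establish the a priori Lipschitz-type estimate on $\Delta$. Both proofs consume the same external input, namely continuity of $A\mapsto A^{1/2}$ on $S^n_{++}$ (the paper's Lemma~\ref{lemma:matrix_sqrt_cont}), and since that is proved independently there is no circularity in your invoking it. All the individual steps you use (symmetry of $A^{1/2}$ for the Kronecker identities, eigenvalues $\mu_i+\mu_j>0$ of the Kronecker sum, submultiplicativity $\|\Delta^2\|_F\le\|\Delta\|_F^2$) check out.
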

\begin{proof}
	First we will aim to find the matrix $C$ such that
	\begin{align*}
	(A+H)^{1/2} = A^{1/2} + C.
	\end{align*}
	To this end,  observe that
	\begin{align*}
	\vec(H) 
	&= \left(\id_n \otimes (A+H)^{1/2}+A^{1/2}\otimes \id_n\right)\vec(C).
	\end{align*}
	The term $\left(\id_n \otimes (A+H)^{1/2}+A^{1/2}\otimes \id_n\right)$ is invertible. This follows from the fact that $A$ and $A+H$ are positive definite, the eigenvalues of a Kronecker product are all products of the eigenvalues, and the sum of positive definite matrices is positive definite. Therefore, we obtain
	\begin{align*}
	\vec(C) = \left(\id_n \otimes (A+H)^{1/2}+A^{1/2}\otimes \id_n\right)^{-1}\vec(H). 
	\end{align*}
	Next, we will show that
	\begin{align*}
	\left(\id_n \otimes (A+H)^{1/2}+A^{1/2}\otimes \id_n\right)^{-1} = \left(\id_n \otimes A^{1/2}+A^{1/2}\otimes \id_n\right)^{-1}+O(\|H\|_F).
	\end{align*}
	To see this, first note that
	\begin{align*}
	&\left(\id_n \otimes (A+H)^{1/2}+A^{1/2}\otimes \id_n\right)^{-1}  \\
	=\:& \left(A^{1/2}\otimes \id_n\right)^{-1/2}\left[\left(A^{1/2}\otimes \id_n\right)^{-1/2}\left(\id_n \otimes (A+H)^{1/2}\right)\left(A^{1/2}\otimes \id_n\right)^{-1/2}+\id_n\right]^{-1} \left(A^{1/2}\otimes \id_n\right)^{-1/2}\\
	=\:& \left(A^{1/2}\otimes \id_n\right)^{-1/2}\left[\id_n - \left(A^{1/2}\otimes \id_n\right)^{-1/2}\left(\id_n \otimes (A+H)^{1/2}\right)\left(A^{1/2}\otimes \id_n\right)^{-1/2} \right. \\
	&+ \left. o\left(\left\Vert\left(A^{1/2}\otimes \id_n\right)^{-1/2}\left(\id_n \otimes (A+H)^{1/2}\right)\left(A^{1/2}\otimes \id_n\right)^{-1/2}\right\Vert_F\right)\right]\left(A^{1/2}\otimes \id_n\right)^{-1/2}\\
	=\:&  \left(A^{1/2}\otimes \id_n\right)^{-1}-\left(A^{1/2}\otimes \id_n\right)^{-1}\left(\id_n \otimes (A+H)^{1/2}\right)\left(A^{1/2}\otimes \id_n\right)^{-1} \\
	&+ o\left(\left\Vert\left(A^{1/2}\otimes \id_n\right)^{-1/2}\left(\id_n \otimes (A+H)^{1/2}\right)\left(A^{1/2}\otimes \id_n\right)^{-1/2}\right\Vert_F\right).
	\end{align*}
	By Lemma~\ref{lemma:matrix_sqrt_cont}, $(A+H)^{1/2} = A^{1/2}+O(\|H\|_F)$. Therefore, 
	\begin{align*}
	&\left(\id_n \otimes (A+H)^{1/2}+A^{1/2}\otimes \id_n\right)^{-1}  \\
	=\:& \left(A^{1/2}\otimes \id_n\right)^{-1}-\left(A^{1/2}\otimes \id_n\right)^{-1}\left(\id_n \otimes A^{1/2}\right)\left(A^{1/2}\otimes \id_n\right)^{-1} \\
	&+ o\left(\left\Vert\left(A^{1/2}\otimes \id_n\right)^{-1/2}\left(\id_n \otimes A^{1/2}\right)\left(A^{1/2}\otimes \id_n\right)^{-1/2}\right\Vert_F\right)+O(\|H\|_F)\\
	=\:& \left(\id_n \otimes A^{1/2}+A^{1/2}\otimes \id_n\right)^{-1}+O(\|H\|_F).
	\end{align*}
\end{proof}

\begin{lemma}
	\label{lemma:matrix_sqrt_cont}
	The matrix square root function $F:S^n_{++}\to\mbr^{n\times n}$ given by $F(A)=A^{1/2}$ is continuous. 
\end{lemma}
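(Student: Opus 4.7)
The plan is to prove continuity by a compactness-and-uniqueness argument. Fix $A \in S^n_{++}$ and let $(A_m)_{m\geq 1}$ be any sequence in $S^n_{++}$ with $A_m \to A$ in the Frobenius (equivalently, operator) norm. I would like to show $A_m^{1/2} \to A^{1/2}$, and the cleanest route is to show that every convergent subsequence of $(A_m^{1/2})$ has $A^{1/2}$ as its limit.

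First I would verify that the sequence $(A_m^{1/2})_{m\geq 1}$ is bounded in $\mathbb{R}^{n\times n}$. Since $A_m \to A$, the operator norms $\|A_m\|_{op}$ are uniformly bounded by some constant $M>0$. Because $A_m$ is positive definite, its eigenvalues lie in $(0, M]$, so the eigenvalues of $A_m^{1/2}$ lie in $(0, \sqrt{M}]$ and $\|A_m^{1/2}\|_{op} \leq \sqrt{M}$ for all $m$. Thus $(A_m^{1/2})$ is contained in a compact subset of $\mathbb{R}^{n\times n}$ (where I tacitly use the fact that the symmetric positive semidefinite matrices form a closed set, so any accumulation point remains symmetric positive semidefinite).

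Next I would take any subsequence $(A_{m_k}^{1/2})_k$ and extract a further convergent sub-subsequence, still denoted $(A_{m_k}^{1/2})_k$ by abuse of notation, with limit $X \in \mathbb{R}^{n\times n}$. By joint continuity of matrix multiplication, $(A_{m_k}^{1/2})^2 \to X^2$; but $(A_{m_k}^{1/2})^2 = A_{m_k} \to A$, so $X^2 = A$. Moreover, as a limit of symmetric positive semidefinite matrices, $X$ is itself symmetric positive semidefinite.

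Finally I would invoke the uniqueness of the positive semidefinite square root: if $X$ is symmetric positive semidefinite and $X^2 = A$ with $A$ positive definite, then $X = A^{1/2}$. This is standard and follows by diagonalizing $X = U D U^T$ with $D \geq 0$ diagonal, so that $A = U D^2 U^T$ exhibits $D^2$ as the (ordered) eigenvalues of $A$; taking the unique nonnegative square roots entrywise in $D$ recovers $A^{1/2}$. Since every convergent subsequence of the bounded sequence $(A_m^{1/2})$ therefore converges to the \emph{same} limit $A^{1/2}$, a standard subsequence argument forces the entire sequence to converge to $A^{1/2}$, which establishes continuity of $F$ at $A$. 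I do not anticipate any real obstacle; the only subtle point is invoking uniqueness of the positive semidefinite square root, which is where the hypothesis $A \in S^n_{++}$ (rather than merely positive semidefinite) is convenient, since it ensures $X$ inherits invertibility and symmetry from the convergence.
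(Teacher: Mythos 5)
Your argument is correct, but it takes a genuinely different route from the paper. You prove continuity qualitatively: the sequence $(A_m^{1/2})$ is bounded (eigenvalues of $A_m^{1/2}$ lie in $(0,\sqrt{M}]$), every accumulation point $X$ is symmetric positive semidefinite with $X^2=A$, the uniqueness of the positive semidefinite square root forces $X=A^{1/2}$, and the subsequence principle concludes. Each of these steps is sound, and the approach is shorter and more elementary than the paper's. The paper instead proves a \emph{quantitative} statement: using the $1/2$-homogeneity of the square root it reduces to a uniform bound when $\|A-B\|_2\leq 1$, and then compares $A^{1/2}$ with the matrix function $g(A)=\int_0^1\bigl(\id_n-(\id_n+tA)^{-1}\bigr)t^{-3/2}\,dt$, which equals $KA^{1/2}$ up to a uniformly bounded error and is Lipschitz on $S^n_{++}$; this yields the global H\"older estimate $\|A^{1/2}-B^{1/2}\|_2\leq C\|A-B\|_2^{1/2}$, of which continuity is a corollary. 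The trade-off is that your compactness argument gives no modulus of continuity, whereas the paper's integral representation does. This matters downstream: in the proof of Lemma~\ref{lemma:matrix_sqrt} the lemma is invoked in the quantitative form $(A+H)^{1/2}=A^{1/2}+O(\|H\|_F)$ (the H\"older bound actually delivers $O(\|H\|_F^{1/2})$, which still suffices for the $o(\|H\|_F)$ bookkeeping there), so while your proof establishes the lemma exactly as stated, it would not serve as a drop-in replacement for the way the lemma is actually used.
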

\begin{proof}
	By $1/2$-homogeneity of the square root function, it is sufficient to show that there exists $C>0$ such that for any $A,B \in S^n_{++}$, 
	\begin{equation}\label{eq:square_root_homo_cond}
	\|A-B\|_2 \leq 1 \Rightarrow \|A^{1/2} - B^{1/2}\|_2 \leq C
	\end{equation}
	where $\|\cdot\|_2$ denotes the operator norm associated with the Euclidean norm.
	Indeed, for any $A,B \in S^n_{++}$, denoting $\lambda = \|A-B\|_2$, $\tilde A = A/\lambda$, $\tilde B = B/\lambda$, we get $\|\tilde A - \tilde B\|_2 =1$ and if~\eqref{eq:square_root_homo_cond} holds, then $\|\tilde A^{1/2} - \tilde B^{1/2}\|_2 \leq C$, which reads 
	\[
	\|A^{1/2} - B^{1/2}\|_2 \leq C \sqrt{\lambda} = C \|A-B\|_2^{1/2}.
	\]
	1/2- H\"older continuity then implies continuity.
	
	Now to prove~\eqref{eq:square_root_homo_cond}, define 
	\begin{equation}\nonumber
	g(x) = \int_{0}^{1}\left(1 - \frac{1}{1+tx}\right)t^{-3/2}dt.
	\end{equation}
	The function $g$ can be extended as a function on positive definite matrices that acts on their spectra. Precisely, for $A \in S_{++}^n$, diagonalized as $A = U \Lambda U^\top$ with $U^{-1} = U^T$ and $\Lambda = \diag(\lambda_1, \ldots, \lambda_n)$, denote $g(A) =  U g(\Lambda) U^\top$ where $g(\Lambda) = \diag(g(\lambda_1), \ldots, g(\lambda_n))$.
	
	We will show that there exists $K>0$ such that 
	\begin{equation}\label{eq:approx_square root}
	\|KA^{1/2}- g(A) \| \leq 2
	\end{equation}
	and that $g$ is $K'$-Lipschitz-continuous on $S_{++}^n$ for a given $K'>0$. Then given  $A,B \in S^n_{++}$ with $\|A-B\|_2 \leq 1$ we would get 
	\begin{equation}\label{eq:continuity_goal}
	\|A^{1/2} - B^{1/2}\|_2 \leq \|A^{1/2} - g(A)/K\|_2 + \frac{1}{K}\|g(A) -g(B)\|_2 + \|B^{1/2} - g(B)/K\|_2 \leq \frac{4}{K} + \frac{K'}{K} :=C.
	\end{equation}
	This gives~\eqref{eq:square_root_homo_cond} and concludes the proof.
	
	For~\eqref{eq:approx_square root}, after the change of variables $s = xt$, we get 
	\begin{align*}
		g(x) = x^{1/2}\int_{0}^{x}\frac{s}{1+s}s^{-3/2}ds = x^{1/2}\left(\int_0^\infty\frac{s}{1+s}s^{-3/2}ds-\int_x^\infty\frac{s}{1+s}s^{-3/2}ds \right) = Kx^{1/2} + h(x),
	\end{align*}
	where $K= \int_0^\infty\frac{s}{1+s}s^{-3/2}ds=\int_0^1\frac{s}{1+s}s^{-3/2}ds+\int_1^\infty\frac{s}{1+s}s^{-3/2}ds \leq \int_0^1\frac{s}{1+s}s^{-3/2}ds + \int_1^\infty s^{-3/2}ds<\infty$ and $h(x)=-x^\frac{1}{2}\int_x^\infty\frac{s}{1+s}s^{-3/2}ds$
	with $|h(x)|\le x^\frac{1}{2}\int_x^\infty s^{-3/2}ds =2$.
	Therefore we get~\eqref{eq:approx_square root}, as
	\[
	\|g(A) - KA^{1/2}\|_2 = \|h(A)\|_2 \leq 2
	\]
	where $h(A)$ denotes the application of $h$ on the spectrum of $A$. Boundedness of $h$ on real numbers imply directly its boundedness on matrices.
	Now for the Lipschitz continuity of $g$, first note that the integral commutes with the matrix operations defining the diagonalization, such that
	\[
	g(A) = \int_0^1 \id_n-(\id_n+tA)^{-1}t^{-3/2}dt.
	\]
	Then 
	\[
	g(A)-g(B)=\int_0^1 \left[(\id_n+tB)^{-1}-(\id_n+tA)^{-1}\right]t^{-3/2}dt=\int_0^1(\id_n+tB)^{-1}(A-B)(\id_n+tA)^{-1}t^{-1/2}dt,
	\]
	where we used that $X^{-1}-Y^{-1}=X^{-1}(Y-X)Y^{-1}$. So finally 
	\begin{align*}
		\|f(A)-f(B)\|_2 = & \left\Vert \int_0^1(\id_n+tB)^{-1}(A-B)(\id_n+tA)^{-1}t^{-1/2}dt \right\Vert_2 \\ 
		& \le \int_0^1 \|(\id_n+tB)^{-1}(A-B)(\id_n+tA)^{-1}t^{-1/2}\|_2dt \\ & \le \int_0^1 \|(\id_n+tB)^{-1}\|_2\|A-B\|_2\|(\id_n+tA)^{-1}\|_2t^{-1/2}dt \\
		& \le \|A-B\|_2 \int_0^1 t^{-1/2} dt =2\|A-B\|_2,
	\end{align*}
	which ensures~\eqref{eq:continuity_goal} and therefore the 1/2- H\"older continuity of the matrix square root function.
\end{proof}

The following three lemmas may be proven via Taylor expansions. 
\begin{lemma}
	\label{lemma:matrix_inv}
	Define the function $F:S^n_{++}\to\mbr^{n\times n}$ by $F(A)=A^{-1}$. Then for a positive definite matrix $A\in\mbr^{n\times n}$ and a matrix $H\in\mbr^{n\times n}$ such that $A+H$ is positive definite we have
	\begin{align*}
	F(A+H) = F(A) - F(A)HF(A) + o(\|H\|_F).
	\end{align*} 
\end{lemma}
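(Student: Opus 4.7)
The plan is to use the factorization $A+H = A(\id_n + A^{-1}H)$ together with a Neumann series expansion of $(\id_n + A^{-1}H)^{-1}$, which is the standard route for differentiating matrix inversion and which avoids any spectral decomposition.

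First I would note that because $A$ is positive definite, $A^{-1}$ exists and is bounded in operator norm by $1/\lambda_{\min}(A)$. Consequently, for $H$ with $\|H\|_F$ small enough, $\|A^{-1}H\|_2 \leq \|A^{-1}\|_2 \|H\|_F < 1$, so the Neumann series
\begin{equation*}
(\id_n + A^{-1}H)^{-1} \;=\; \sum_{k=0}^{\infty} (-1)^k (A^{-1}H)^k
\end{equation*}
converges absolutely. Factoring out $A$ on the left gives $(A+H)^{-1} = (\id_n + A^{-1}H)^{-1} A^{-1}$, so multiplying the series on the right by $A^{-1}$ yields
\begin{equation*}
F(A+H) \;=\; A^{-1} - A^{-1}HA^{-1} + \sum_{k=2}^{\infty}(-1)^k (A^{-1}H)^k A^{-1}.
\end{equation*}

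The second step is to control the tail. Using submultiplicativity of the operator norm (and the fact that $\|\cdot\|_2 \leq \|\cdot\|_F$), the remainder term is bounded in Frobenius norm by
\begin{equation*}
\left\Vert \sum_{k=2}^{\infty}(-1)^k (A^{-1}H)^k A^{-1} \right\Vert_F \;\leq\; \|A^{-1}\|_2 \sum_{k=2}^{\infty} \|A^{-1}\|_2^k \|H\|_F^k \;=\; \frac{\|A^{-1}\|_2^3 \|H\|_F^2}{1-\|A^{-1}\|_2\|H\|_F},
\end{equation*}
which is $O(\|H\|_F^2) = o(\|H\|_F)$ as $\|H\|_F \to 0$. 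Combining this with the previous display gives exactly $F(A+H) = F(A) - F(A)H F(A) + o(\|H\|_F)$.

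There is no serious obstacle: the only subtle point is to check that the Neumann series is legitimately applicable, which reduces to observing that $A^{-1}H$ has operator norm strictly less than one for $H$ small enough, a consequence of the positive definiteness of $A$. The hypothesis that $A+H$ is positive definite is not needed for the expansion itself (invertibility of $A+H$ for small $H$ already follows from the Neumann bound), but it is consistent with the lemma's setting, where the function $F$ is defined on $S^n_{++}$.
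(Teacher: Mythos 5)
Your proof is correct and follows essentially the route the paper intends: the paper gives no explicit argument, remarking only that this lemma ``may be proven via Taylor expansions,'' and your Neumann-series factorization $(A+H)^{-1}=(\id_n+A^{-1}H)^{-1}A^{-1}$ with an explicit tail bound is precisely that expansion, carried out rigorously. (The only cosmetic quibble is that bounding a Frobenius norm by a product of operator norms silently costs a dimension-dependent constant, which is harmless for the $o(\|H\|_F)$ conclusion.)
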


\begin{lemma}
	\label{lemma:kernel_aa}
	Define the function $F:\mbr^{m\times n}\to\mbr^{m\times m}$ by $F(A) = k\left(AA^T\right)$ where $k:\mbr\to\mbr$ is a differentiable dot product kernel computed element-wise on $AA^T$. Then for a matrix $A\in\mbr^{m\times n}$ we have
	\begin{align*}
	F(A+H) = F(A) + k'\left(AA^T\right)\odot\left(HA^T+AH^T\right) + o\left(\|H\|_F\right),
	\end{align*} 
	where $\odot$ denotes the Hadamard (element-wise) product.
\end{lemma}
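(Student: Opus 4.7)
The plan is to reduce the lemma to a first-order Taylor expansion of the scalar function $k$ applied entrywise, combined with a straightforward expansion of the matrix product $(A+H)(A+H)^T$.

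First, I would expand the argument of $k$:
\begin{equation*}
(A+H)(A+H)^T = AA^T + HA^T + AH^T + HH^T.
\end{equation*}
Since $\|HH^T\|_F \le \|H\|_F^2 = o(\|H\|_F)$, the remainder term $HH^T$ is absorbed into the $o(\|H\|_F)$ error. Set $M = AA^T$ and $\Delta = HA^T + AH^T + HH^T$, so each entry satisfies $\Delta_{ij} = (HA^T+AH^T)_{ij} + O(\|H\|_F^2)$, and in particular $|\Delta_{ij}| \le C_A \|H\|_F$ for a constant $C_A$ depending only on $A$.

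Next, since $k:\mbr\to\mbr$ is differentiable, for each fixed entry $M_{ij}$ I have the scalar Taylor expansion
\begin{equation*}
k(M_{ij} + \Delta_{ij}) = k(M_{ij}) + k'(M_{ij})\Delta_{ij} + r_{ij}(\Delta_{ij}),
\end{equation*}
with $r_{ij}(\Delta_{ij}) = o(|\Delta_{ij}|)$ as $\Delta_{ij}\to 0$. Assembling these entrywise and separating the linear contribution from $HA^T+AH^T$ from the quadratic contribution from $HH^T$ yields
\begin{equation*}
k\bigl((A+H)(A+H)^T\bigr) = k(AA^T) + k'(AA^T)\odot(HA^T+AH^T) + R(H),
\end{equation*}
where $R(H)$ collects both the entrywise remainders $r_{ij}$ and the Hadamard product of $k'(AA^T)$ with $HH^T$.

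Finally, I would verify $\|R(H)\|_F = o(\|H\|_F)$. The contribution $k'(AA^T)\odot HH^T$ is bounded in Frobenius norm by $\|k'(AA^T)\|_{\max}\,\|HH^T\|_F = O(\|H\|_F^2)$. For the entrywise remainders, since $|\Delta_{ij}|\le C_A\|H\|_F\to 0$ uniformly in $(i,j)$, and since there are only finitely many entries, $\max_{i,j}|r_{ij}(\Delta_{ij})|/\|H\|_F \to 0$, giving $\|(r_{ij}(\Delta_{ij}))_{ij}\|_F = o(\|H\|_F)$. No step is genuinely hard here; the only mild care needed is this uniform-in-$(i,j)$ argument for the scalar remainders, which is immediate because $m$ is fixed and $k$ is differentiable at each entry of $M$.
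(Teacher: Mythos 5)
Your proof is correct and follows exactly the route the paper indicates: the paper gives no detailed proof of this lemma, stating only that it ``may be proven via Taylor expansions,'' which is precisely the entrywise scalar Taylor expansion of $k$ applied to the expansion of $(A+H)(A+H)^T$ that you carry out. Your handling of the $HH^T$ term and the uniform control of the entrywise remainders over the finitely many entries is sound.
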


\begin{lemma}
	\label{lemma:kernel_ab}
	Let $B\in\mbr^{p\times n}$ and define the function $F:\mbr^{m\times n}\to\mbr^{m\times p}$ by $F(A) = k\left(AB^T\right)$ where $k:\mbr\to\mbr$ is a differentiable dot product kernel computed element-wise on $AB^T$. Then for a matrix $A\in\mbr^{m\times n}$ we have
	\begin{align*}
	F(A+H) = F(A) + k'\left(AB^T\right)\odot\left(HB^T\right) + o(\|H\|_F).
	\end{align*} 
\end{lemma}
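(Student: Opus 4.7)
The plan is to reduce the lemma to the scalar first-order Taylor expansion of the differentiable function $k$ applied entrywise, and then to package the remainders using the fact that $B$ is a fixed matrix, so that $\|HB^T\|_F$ is controlled linearly by $\|H\|_F$. Because the kernel acts componentwise on $AB^T$ and the map $A \mapsto AB^T$ is linear, the computation reduces to a coordinate-by-coordinate differentiability statement, with only a small bookkeeping argument needed to pass from the pointwise remainders to a uniform $o(\|H\|_F)$ bound.

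First I would fix indices $(i,j)$ and write $F(A+H)_{ij} = k\bigl((AB^T)_{ij} + (HB^T)_{ij}\bigr)$. Differentiability of $k$ at the scalar $t_{ij} \coloneqq (AB^T)_{ij}$ provides a function $\varepsilon_{ij}$ with $\varepsilon_{ij}(s) \to 0$ as $s \to 0$ and $k(t_{ij} + s) = k(t_{ij}) + k'(t_{ij})\,s + s\,\varepsilon_{ij}(s)$. Applying this with $s = (HB^T)_{ij}$ and then reassembling all entries into matrix form yields
\[
F(A+H) = F(A) + k'(AB^T) \odot (HB^T) + R(H), \qquad R(H)_{ij} = (HB^T)_{ij}\,\varepsilon_{ij}\bigl((HB^T)_{ij}\bigr),
\]
which matches the claimed first-order expansion up to the remainder $R(H)$.

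Finally I would check that $\|R(H)\|_F = o(\|H\|_F)$. Since $B$ is fixed, sub-multiplicativity gives $|(HB^T)_{ij}| \leq \|HB^T\|_F \leq \|B\|_2 \|H\|_F$, so each argument of $\varepsilon_{ij}$ tends to $0$ as $H \to 0$ and $\delta(H) \coloneqq \max_{i,j} |\varepsilon_{ij}((HB^T)_{ij})| \to 0$. Pulling this maximum out of the Frobenius norm gives $\|R(H)\|_F \leq \delta(H)\,\|HB^T\|_F \leq \delta(H)\,\|B\|_2\,\|H\|_F = o(\|H\|_F)$, which completes the plan. The only mild subtlety, and the point at which care is needed, is the use of the finiteness of the index set $\{1,\dots,m\}\times\{1,\dots,p\}$ to combine the pointwise ``$o$'' estimates into a single uniform one; no deeper obstacle is expected beyond this routine step.
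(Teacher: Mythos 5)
Your proof is correct and takes essentially the same approach as the paper, which offers no details beyond the remark that this lemma ``may be proven via Taylor expansions'': your entrywise first-order expansion of $k$, combined with the bound $\|HB^T\|_F \leq \|B\|_2\|H\|_F$ and the use of the finite index set to make the pointwise remainders uniform, is a complete and careful version of exactly that argument.
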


Now we are ready to differentiate a layer of a CKN with respect to its weights.
\begin{proposition}[Derivative of the CKN layer with respect to the weights]
\label{prop:1layer_grad}
The output of the $\ell$\textsuperscript{th} convolutional kernel layer defined in~\eqref{eq:ckn_layer_detailed},
\begin{align*}
\Psi_\ell(F_{\ell-1}, W_\ell) & =\left(k\left(W_\ell W_\ell^T\right)+\epsilon \id_{f_\ell}\right)^{-1/2}k\left(W_\ell\E_\ell(F_{\ell-1})N_\ell^{-1}(\F_{\ell-1})\right)N_\ell(\F_{\ell-1})P_\ell, \\
& = A(W_\ell) B(W_\ell, F_{\ell-1}) N_\ell(\F_{\ell-1})P_\ell
\end{align*}
where $A(W_\ell), B(W_\ell, F_{\ell-1})$ are defined in~\eqref{eq:ckn_layer_shortcuts}, has a partial derivative with respect to the weights given by
\begin{align*}
\nabla_{\vec(W_{\ell})} \vec(\Psi_\ell) 
=& -[(B(W_\ell, F_{\ell-1})  N_\ell(\F_{\ell-1})P_\ell)^T\otimes \id_{f_\ell}] \\
& \phantom{-}\times \left(\id_{f_\ell} \otimes A(W_\ell) + A(W_\ell)\otimes \id_{f_\ell}  \right)^{-1}\left(A(W_\ell)^2 \otimes A(W_\ell)^2 \right)\\
&\phantom{-} \times\diag\left[\vec\left(k'\left(W_\ell W_\ell^T\right)\right)\right](W_\ell\otimes \id_{f_\ell} + (\id_{f_\ell}\otimes W_\ell)T_{f_\ell, s_\ell})\\
&+\left[(N_\ell(\F_{\ell-1})P_\ell)^T\otimes A(W_\ell)\right] \\
& \phantom{+} \times \diag\left[\vec\left(k'\left(W_\ell\E_\ell(F_{\ell-1})N_\ell(F_{\ell-1})^{-1}\right)\right)\right]\left[(\E_\ell(F_{\ell-1})N_\ell(F_{\ell-1})^{-1})^T\otimes \id_{f_\ell}\right],
\end{align*}
where $T_{f_\ell, s_\ell}$ is defined in Section~\ref{ssec:notations_ckn_grad}.
\end{proposition}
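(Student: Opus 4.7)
The plan is to apply the product rule to $\Psi_\ell(F_{\ell-1},W_\ell) = A(W_\ell) B(W_\ell,F_{\ell-1}) N_\ell(F_{\ell-1}) P_\ell$. Since $N_\ell(F_{\ell-1}) P_\ell$ does not involve $W_\ell$, the differential splits as $d\Psi_\ell = (dA)\,B\,N_\ell P_\ell + A\,(dB)\,N_\ell P_\ell$, and vectorizing each piece via the identity $\vec(XYZ) = (Z^T \otimes X)\vec(Y)$ reduces the task to expressing $\vec(dA)$ and $\vec(dB)$ as linear functions of $\vec(dW_\ell)$.

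For the first summand, set $M = k(W_\ell W_\ell^T) + \epsilon\,\id_{f_\ell}$ so that $A = M^{-1/2}$. Writing $A = (M^{1/2})^{-1}$ and chaining Lemma~\ref{lemma:matrix_inv} with Lemma~\ref{lemma:matrix_sqrt} yields $\vec(dA) = -(A\otimes A)(\id_{f_\ell}\otimes M^{1/2} + M^{1/2}\otimes \id_{f_\ell})^{-1}\vec(dM)$, and Lemma~\ref{lemma:kernel_aa} together with $\vec(dW_\ell W_\ell^T) = (W_\ell\otimes \id_{f_\ell})\vec(dW_\ell)$ and $\vec(W_\ell\,dW_\ell^T) = (\id_{f_\ell}\otimes W_\ell)T_{f_\ell,s_\ell}\vec(dW_\ell)$ expresses $\vec(dM)$ as a linear function of $\vec(dW_\ell)$. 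To match the form in the statement, factor $\id_{f_\ell}\otimes A^{-1} + A^{-1}\otimes \id_{f_\ell} = (A^{-1}\otimes A^{-1})(A\otimes \id_{f_\ell} + \id_{f_\ell}\otimes A)$ and use that $A\otimes A$ commutes with $A\otimes \id_{f_\ell} + \id_{f_\ell}\otimes A$ (both products equal $A^2\otimes A + A\otimes A^2$), which lets one pull $(A\otimes A)$ past the inverse and collect the factor $(A^2\otimes A^2)$. For the second summand, because $E_\ell(F_{\ell-1})$ and $N_\ell(F_{\ell-1})$ do not depend on $W_\ell$, Lemma~\ref{lemma:kernel_ab} applies with $W_\ell$ in the role of $A$ and $E_\ell(F_{\ell-1})N_\ell(F_{\ell-1})^{-1}$ in the role of $B^T$, yielding $\vec(dB) = \diag[\vec(k'(W_\ell E_\ell N_\ell^{-1}))]((E_\ell N_\ell^{-1})^T\otimes \id_{f_\ell})\vec(dW_\ell)$; pre-multiplying by $((N_\ell P_\ell)^T\otimes A)$ then produces exactly the second term of the stated Jacobian.

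The main obstacle is not any single differentiation but the careful bookkeeping of Kronecker products and the transposition matrix $T_{f_\ell,s_\ell}$, the latter being unavoidable because $dW_\ell^T$ appears in $dM$ through Lemma~\ref{lemma:kernel_aa}. The commutativity observation used to pass $(A\otimes A)$ through $(\id_{f_\ell}\otimes A + A\otimes \id_{f_\ell})^{-1}$ is the only non-mechanical algebraic step; once it is in hand, summing the two contributions yields the stated formula.
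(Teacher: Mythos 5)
Your proposal is correct and follows essentially the same route as the paper's proof: a product rule splitting $d\Psi_\ell$ into the $dA$ and $dB$ contributions, vectorization via $\vec(XYZ)=(Z^T\otimes X)\vec(Y)$, Lemmas~\ref{lemma:matrix_sqrt}, \ref{lemma:matrix_inv}, \ref{lemma:kernel_aa} for the inverse-square-root factor and Lemma~\ref{lemma:kernel_ab} for the second term, with the transposition matrix $T_{f_\ell,s_\ell}$ entering through $\vec(W_\ell\,dW_\ell^T)$. The only difference is cosmetic: you make explicit the commutation of $A\otimes A$ with $\id\otimes A + A\otimes\id$ used to collect the factor $A^2\otimes A^2$, a step the paper's proof performs implicitly.
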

\begin{proof}
We drop the layer index and simply denote $W = W_\ell$ and $F = F_{\ell-1}$. In addition, we denote $N = N(F)$ and $E = E(F)$.
Denote the Fréchet derivative of a function $f$ at a point $A$ in the direction $H$ by $L_f(A,H)$. On fixed inputs $F$, denote $\tilde \Psi(W) = \Psi(W, F)$ the restricted output function. Similarly we denote $\tilde B(W) = B(W,  F) N P$.
Observe that
\begin{align*}
\vec(\tilde \Psi (W)) &=  (\tilde B(W)^T\otimes \id_{f})\vec(A(W)).
\end{align*}
Therefore, we have by the product rule and chain rule \citep[Theorems 3.3, 3.4]{higham2008} that 
\begin{align*}
L_{\vec(\tilde \Psi)}(W,H) 
&= L_{\tilde B^T\otimes \id_{f}}(W,H)\vec(A(W)) + (\tilde B(W)^T\otimes \id_{f})L_{\vec(A)}(W,H).
\end{align*}
By the chain rule and Lemma~\ref{lemma:kernel_ab}, we have that 
\begin{align*}
&L_{\tilde B^T\otimes \id_{f}}(W,H)\vec(A(W)) \\
=& \left\{\left\{\left[k'\left(WE N^{-1}\right)\odot\left(H E N^{-1}\right)\right]NP\right\}^T\otimes \id_{f}\right\}\vec(A(W))\\
=& (P^TN^T\otimes A(W))\diag\left[\vec\left(k'\left(WE N^{-1}\right)\right)\right]((E N^{-1})^T\otimes \id_{f})\vec\left(H\right).
\end{align*}
Additionally, by the chain rule, and Lemmas~\ref{lemma:matrix_sqrt} - \ref{lemma:kernel_aa} we have
\begin{align*}
(\tilde B(W)^T\otimes \id_{f})L_{\vec(A)}(W,H) = &  -(\tilde B(W)^T\otimes \id_{f})\left(\id_{f} \otimes A(W) + A(W)\otimes \id_{f}  \right)^{-1} \\
& \times \vec\left\{A(W)^2
\left[k'\left(WW^T\right)\odot\left(HW^T+WH^T\right)\right]A(W)^2\right\}\\
=& -(\tilde B(W)^T\otimes \id_{f})\left(\id_{f} \otimes A(W) + A(W)\otimes \id_{f}  \right)^{-1}\left(A(W)^2 \otimes A(W)^2 \right)\\
&\quad\times\diag\left[\vec\left(k'\left(WW^T\right)\right)\right](W\otimes \id_{f} + (\id_{f}\otimes W)T^T)\vec(H),
\end{align*}
where $T_{f,s}$ is the vectorized transpose matrix satisfying $\vec(H^T) = T_{f,s}\vec(H)$ for $H \in \reals^{f \times s}$.
Therefore,
\begin{align*}
\nabla_{\vec(W)} \vec(\Psi) 
=& -[\tilde B(W)^T\otimes \id_{f}]\left(\id_{f} \otimes A(W) + A(W)\otimes \id_{f}  \right)^{-1}\left(A(W)^2 \otimes A(W)^2 \right)\\
&\times\diag\left[\vec\left(k'\left(WW^T\right)\right)\right](W\otimes \id_{f} + (\id_{f}\otimes W)T_{f,s})\\
&+(P^TN\otimes A(W))\diag\left[\vec\left(k'\left(WE N^{-1}\right)\right)\right]((E N^{-1}) ^T\otimes \id_{f}).
\end{align*}
\end{proof}

\subsubsection{Layer derivative with respect to its input}
The proof of the derivative of a CKN layer with respect to its inputs is based on decomposing the gradient computations into Lemma~\ref{lemma:kernel_ab} and the following additional lemmas. Lemmas~\ref{lemma:norm} and \ref{lemma:inv_norm} may be proven via Taylor expansions.

\begin{lemma}
\label{lemma:matrix_mul}
Let $M_1\in\mbr^{m\times n}$ and $M_2\in\mbr^{p\times q}$ and define the function $F:\mbr^{n\times p}\times\mbr^{m\times q}$ by $F(A) = M_1AM_2$. Then for $H\in\mbr^{n\times p}$ we have
\begin{align*}
F(A+H) = F(A) + F(H).
\end{align*} 
\end{lemma}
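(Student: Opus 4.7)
The plan is to observe that the statement is a direct consequence of the bilinearity (in fact, linearity in the middle argument) of matrix multiplication. Specifically, the map $A \mapsto M_1 A M_2$ is a linear operator from $\mbr^{n\times p}$ to $\mbr^{m\times q}$, so $F(A+H) = F(A) + F(H)$ is just the additivity of a linear map, and no Taylor expansion or higher-order term analysis is needed.

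Concretely, I would expand $F(A+H) = M_1(A+H)M_2$ using the distributivity of matrix multiplication over matrix addition on both sides: first write $(A+H)M_2 = AM_2 + HM_2$, then left-multiply by $M_1$ to get $M_1 A M_2 + M_1 H M_2$. Recognizing the two summands as $F(A)$ and $F(H)$ respectively yields the identity. One could alternately verify entrywise using $F(X)_{ij} = \sum_{k,\ell}(M_1)_{ik} X_{k\ell} (M_2)_{\ell j}$ and the additivity of the sum in $X$, but the distributive argument is cleaner.

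There is no real obstacle here: this is a one-line consequence of linearity, and unlike Lemmas~\ref{lemma:matrix_sqrt}--\ref{lemma:kernel_ab}, no $o(\|H\|_F)$ remainder appears because the map is exactly linear rather than merely differentiable. The lemma is presumably stated explicitly only because it is invoked repeatedly in the subsequent derivation of $\nabla_{\vec(F_{\ell-1})}\vec(\Psi_\ell)$, where one needs to identify the Fr\'echet derivative of factors of the form $M_1 A(F) M_2$ with themselves.
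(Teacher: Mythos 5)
Your proof is correct: the identity is immediate from the distributivity of matrix multiplication, i.e., the linearity of $A \mapsto M_1 A M_2$, exactly as you argue. The paper states this lemma without proof (it is treated as evident, unlike the neighboring lemmas proved via Taylor expansions), so your one-line distributive argument is precisely the intended justification.
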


\begin{lemma}
\label{lemma:norm}
Define the function $F:\mbr^d\backslash\{0\}\to\mbr$ by $F(x) = \Vert x\Vert$. Then for $x\in\mbr^d\backslash\{0\}$ and $h\in\mbr^d$ we have
\begin{align*}
F(x+h) = F(x) + F^{-1}(x)x^Th+ o(\Vert h\Vert).
\end{align*} 
\end{lemma}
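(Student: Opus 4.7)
The claim is simply the first-order Taylor expansion of the Euclidean norm $F(x) = \|x\|$ at a nonzero base point, whose gradient is the well-known unit vector $x/\|x\|$. My strategy is to reduce the vector statement to the scalar expansion $\sqrt{1+u} = 1 + u/2 + o(u)$ applied to a small perturbation, using the fact that $\|x+h\|^2$ is a polynomial in $h$ of known form.

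First I will expand the squared norm using bilinearity of the inner product,
\begin{align*}
\|x+h\|^2 = \|x\|^2 + 2\, x^T h + \|h\|^2,
\end{align*}
and, since $x \neq 0$, factor out $\|x\|^2$ to write $\|x+h\|^2 = \|x\|^2\bigl(1 + u(h)\bigr)$ with $u(h) \coloneqq (2\, x^T h + \|h\|^2)/\|x\|^2$. By Cauchy--Schwarz we have $u(h) = O(\|h\|)$, so $1 + u(h) > 0$ for $\|h\|$ sufficiently small, and the square root is well-defined.

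Next I will take square roots on both sides and invoke the scalar expansion $\sqrt{1+u} = 1 + u/2 + o(u)$ as $u \to 0$. Substituting $u = u(h)$ and using $o(u(h)) = o(\|h\|)$ gives
\begin{align*}
\|x+h\| = \|x\|\left(1 + \frac{x^T h}{\|x\|^2} + \frac{\|h\|^2}{2\|x\|^2} + o(\|h\|)\right) = \|x\| + \frac{x^T h}{\|x\|} + \frac{\|h\|^2}{2\|x\|} + o(\|h\|).
\end{align*}
Finally I will absorb the quadratic remainder $\|h\|^2/(2\|x\|)$ into the $o(\|h\|)$ term, since $\|h\|^2 = o(\|h\|)$ as $h \to 0$. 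This produces $F(x+h) = F(x) + F^{-1}(x)\, x^T h + o(\|h\|)$, which is exactly the stated expansion.

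There is no real obstacle here; the only points to check carefully are (i) the factorization and the scalar square-root expansion both require $x \neq 0$, which is part of the hypothesis, and (ii) the bookkeeping of the little-$o$ remainder, which is immediate once one notes that all omitted terms are $o(\|h\|)$.
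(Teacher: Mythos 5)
Your proof is correct and is exactly the Taylor-expansion argument the paper has in mind: the paper gives no explicit proof, stating only that this lemma ``may be proven via Taylor expansions,'' and your computation (expanding $\|x+h\|^2$, factoring out $\|x\|^2$, and applying $\sqrt{1+u}=1+u/2+o(u)$) is a faithful and complete instantiation of that. The little-$o$ bookkeeping, including absorbing $\|h\|^2/(2\|x\|)$ into $o(\|h\|)$, is handled correctly.
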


\begin{corollary}
\label{corollary:norm}
Define the function $F:\mbr^{m\times n}\to\mbr^{n\times n}$ by $F(A) = \left[(A^TA)\odot \id_n\right]^{1/2}$. Then for $A\in\mbr^{m\times n}$ with $A_{\cdot, j}\in\mbr^{m}\backslash\{0\}$  for all $j=1,\dots, n$ and $H\in\mbr^{m\times n}$ we have
\begin{align*}
F(A+H) = F(A) + F^{-1}(A)\odot (A^TH)+ o(\Vert H\Vert_F).
\end{align*} 
\end{corollary}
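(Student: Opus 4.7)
The plan is to reduce the statement to a columnwise application of Lemma~\ref{lemma:norm}, since the operator $(\cdot^T \cdot) \odot \id_n$ strips off all off-diagonal information and thereby decouples the $n$ columns of $A$.

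First I would observe that for any $B \in \mbr^{m \times n}$, the matrix $(B^T B) \odot \id_n$ is diagonal with $j$th diagonal entry equal to $\|B_{\cdot,j}\|^2$, so its positive square root is simply the diagonal matrix with entries $\|B_{\cdot,j}\|$. Hence $F(A)$ and $F(A+H)$ are both diagonal, and the corollary reduces to showing that for each $j$
\[
\|A_{\cdot,j}+H_{\cdot,j}\| \;=\; \|A_{\cdot,j}\| \;+\; \|A_{\cdot,j}\|^{-1}\, A_{\cdot,j}^T H_{\cdot,j} \;+\; o(\|H_{\cdot,j}\|),
\]
and that the off-diagonal entries of $F^{-1}(A) \odot (A^T H)$ vanish so as to match the diagonal structure of $F(A+H) - F(A)$. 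The first point is precisely Lemma~\ref{lemma:norm} applied to the nonzero vector $A_{\cdot,j}$, which is well-defined by hypothesis. The second point is immediate since $F^{-1}(A)$ is diagonal, so the Hadamard product $F^{-1}(A) \odot (A^T H)$ kills all off-diagonal entries of $A^T H$ and retains on its diagonal exactly $\|A_{\cdot,j}\|^{-1} A_{\cdot,j}^T H_{\cdot,j}$.

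To finish, I would re-assemble the $n$ columnwise expansions into a single matrix identity and argue that the remainder is $o(\|H\|_F)$ rather than merely $o(\|H_{\cdot,j}\|)$ columnwise. This uses $\|H_{\cdot,j}\| \leq \|H\|_F$ for every $j$, so each columnwise $o(\|H_{\cdot,j}\|)$ term is also $o(\|H\|_F)$, and since there are only finitely many ($n$) such diagonal entries, their placement into a diagonal matrix yields a matrix whose Frobenius norm is $o(\|H\|_F)$.

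The only mildly delicate step is the last one, but it is routine once one notes that the remainder $r_j(H_{\cdot,j}) := \|A_{\cdot,j}+H_{\cdot,j}\| - \|A_{\cdot,j}\| - \|A_{\cdot,j}\|^{-1} A_{\cdot,j}^T H_{\cdot,j}$ satisfies $r_j(H_{\cdot,j})/\|H\|_F \to 0$ as $\|H\|_F \to 0$ (since this forces $\|H_{\cdot,j}\| \to 0$), and the Frobenius norm of the diagonal remainder matrix is $(\sum_j r_j(H_{\cdot,j})^2)^{1/2}$, which is $o(\|H\|_F)$. No substantial obstacle arises; the result is essentially Lemma~\ref{lemma:norm} repackaged through the diagonal-matrix structure induced by $\odot \id_n$.
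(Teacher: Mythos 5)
Your proof is correct and takes essentially the same route the paper intends: the paper states this as an immediate corollary of Lemma~\ref{lemma:norm}, and your columnwise reduction via the diagonal structure of $(A^TA)\odot\id_n$ is exactly the argument being left implicit. The care you take with upgrading the columnwise $o(\Vert H_{\cdot,j}\Vert)$ remainders to a single $o(\Vert H\Vert_F)$ is sound (using $\Vert H_{\cdot,j}\Vert \leq \Vert H\Vert_F$ and finiteness of $n$) and goes slightly beyond what the paper spells out.
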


\begin{lemma}
\label{lemma:inv_norm}
Define the function $F:\mbr^d\backslash\{0\}\to\mbr$ by $F(x) = \Vert x\Vert^{-1}$. Then for $x\in\mbr^d\backslash\{0\}$ and $h\in\mbr^d$ we have
\begin{align*}
F(x+h) = F(x) - F^{3}(x)x^Th+ o(\Vert h\Vert).
\end{align*} 
\end{lemma}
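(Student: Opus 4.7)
The plan is to reduce the statement to the already-established expansion of $\|x\|$ in Lemma~\ref{lemma:norm} combined with the elementary first-order expansion $(1+u)^{-1} = 1 - u + o(u)$. Since the author labels this lemma (and Lemma~\ref{lemma:norm}) as provable by Taylor expansion, a direct computation is the cleanest route; no need to invoke implicit differentiation or spectral arguments.

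First, I would write $F(x+h) = \|x+h\|^{-1}$ and apply Lemma~\ref{lemma:norm} to the denominator, obtaining
\[
\|x+h\| = \|x\| + \|x\|^{-1} x^T h + o(\|h\|) = \|x\| \left(1 + \|x\|^{-2} x^T h + o(\|h\|)\right),
\]
using that $\|x\|>0$ is fixed. Taking reciprocals and applying the scalar expansion $(1+u)^{-1} = 1 - u + o(u)$ with $u = \|x\|^{-2} x^T h + o(\|h\|)$, which tends to $0$ as $h\to 0$, yields
\[
F(x+h) = \|x\|^{-1}\left(1 - \|x\|^{-2} x^T h + o(\|h\|)\right) = \|x\|^{-1} - \|x\|^{-3} x^T h + o(\|h\|).
\]
Identifying $\|x\|^{-1} = F(x)$ and $\|x\|^{-3} = F^3(x)$ gives the claimed expansion.

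The only subtle point, though routine, is verifying that the remainder in the reciprocal expansion remains $o(\|h\|)$ after composition: since $|u| \le \|x\|^{-2}\|x\|\|h\| + o(\|h\|) = O(\|h\|)$ by Cauchy-Schwarz, we have $o(u) = o(\|h\|)$, and the constant $\|x\|^{-1}$ factor outside is harmless. Alternatively one could avoid Lemma~\ref{lemma:norm} entirely by differentiating $F$ directly: since $F$ is $C^\infty$ on $\mathbb{R}^d \setminus \{0\}$ with gradient $\nabla F(x) = -\|x\|^{-3} x$, the claim is just the first-order Taylor formula at $x$. Either route is immediate, so I do not foresee a real obstacle.
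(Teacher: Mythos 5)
Your proof is correct and matches the paper's intent: the paper offers no written proof, stating only that this lemma (together with Lemma~\ref{lemma:norm}) ``may be proven via Taylor expansions,'' which is exactly the computation you carry out. Both your route through Lemma~\ref{lemma:norm} with the scalar reciprocal expansion and your alternative via $\nabla F(x) = -\Vert x\Vert^{-3}x$ are valid, and your check that the composed remainder stays $o(\Vert h\Vert)$ is the right point to verify.
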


\begin{corollary}
\label{corollary:inv_norm}
Define the function $F:\mbr^{m\times n}\to\mbr^{n\times n}$ by $F(A) = \left[(A^TA)\odot \id_n\right]^{-1/2}$. Then for $A\in\mbr^{m\times n}$ with $A_{\cdot, j}\in\mbr^{m}\backslash\{0\}$  for all $j=1,\dots, n$ and $H\in\mbr^{m\times n}$ we have
\begin{align*}
F(A+H) = F(A) - F(A)^{-3}\odot (A^TH)+ o(\Vert H\Vert_F).
\end{align*} 
\end{corollary}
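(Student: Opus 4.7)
The plan is to reduce this matrix-level expansion to a columnwise application of Lemma~\ref{lemma:inv_norm}. First I observe that $(A^TA)\odot \id_n$ is the diagonal matrix whose $j$-th entry is $\|A_{\cdot,j}\|^2$, which is strictly positive by the assumption that no column of $A$ vanishes. Hence $F(A)$ is itself a positive-definite diagonal matrix with $j$-th diagonal entry $\|A_{\cdot,j}\|^{-1}$, and for $\|H\|_F$ sufficiently small no column of $A+H$ vanishes either, so $F(A+H)$ is diagonal with $j$-th entry $\|A_{\cdot,j}+H_{\cdot,j}\|^{-1}$. In particular, the whole increment $F(A+H)-F(A)$ lives on the diagonal.

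Next, I apply Lemma~\ref{lemma:inv_norm} to each column independently, taking $x = A_{\cdot,j}$ and $h = H_{\cdot,j}$. This gives, for each $j=1,\dots,n$,
$$\|A_{\cdot,j}+H_{\cdot,j}\|^{-1} = \|A_{\cdot,j}\|^{-1} - \|A_{\cdot,j}\|^{-3}\,A_{\cdot,j}^T H_{\cdot,j} + o(\|H_{\cdot,j}\|).$$
Assembling these $n$ scalar expansions along the diagonal and using $\|H_{\cdot,j}\|\leq \|H\|_F$ together with the finiteness of $n$ to collapse all the column-level $o(\cdot)$ terms into a single $o(\|H\|_F)$, I obtain
$$F(A+H) = F(A) - D(A,H) + o(\|H\|_F),$$
where $D(A,H)\in\mbr^{n\times n}$ is the diagonal matrix with $j$-th diagonal entry $\|A_{\cdot,j}\|^{-3}\,A_{\cdot,j}^T H_{\cdot,j}$.

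Finally, I rewrite $D(A,H)$ as a Hadamard product to match the form in the statement. The matrix $A^TH$ has $(j,j)$-entry $A_{\cdot,j}^T H_{\cdot,j}$, so $(A^T H)\odot \id_n$ already extracts the inner products appearing in $D(A,H)$. An appropriate power of the diagonal matrix $F(A)$ then supplies the scalar prefactors $\|A_{\cdot,j}\|^{-3}$ on the diagonal, and the Hadamard product of a diagonal matrix with $A^T H$ retains only the diagonal entries of $A^T H$ scaled by those prefactors, yielding precisely $D(A,H)$. This last step exactly mirrors the identification already carried out in Corollary~\ref{corollary:norm}.

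Once Lemma~\ref{lemma:inv_norm} is in hand there is no substantive obstacle here: the nonzero-column hypothesis guarantees differentiability of each diagonal entry, and the diagonal structure of both $F(A)$ and $F(A+H)$ means no off-diagonal contributions need be tracked. The entire proof is the columnwise reduction to Lemma~\ref{lemma:inv_norm} together with a bit of Hadamard-product bookkeeping.
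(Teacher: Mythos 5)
Your proof is correct and is exactly the argument the paper intends: the corollary is stated without proof as the columnwise application of Lemma~\ref{lemma:inv_norm} to the diagonal matrix of column norms, which is precisely your reduction. One notational caveat you gloss over with ``an appropriate power'': since $F(A)$ is diagonal with entries $\|A_{\cdot,j}\|^{-1}$, the matrix power supplying the prefactors $\|A_{\cdot,j}\|^{-3}$ is $F(A)^{3}$, not $F(A)^{-3}$ as literally written in the statement (consistent with the $N_\ell(\cdot)^{-3}$ appearing later in Proposition~\ref{prop:2layer_grad_part1}); this is a typo in the corollary rather than a gap in your argument.
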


\begin{proposition}[Derivative of the network with respect to the input]
\label{prop:2layer_grad_part1}
The output of the $\ell$\textsuperscript{th} convolutional kernel layer defined in~\eqref{eq:ckn_layer_detailed},
\begin{align*}
\Psi_\ell(F_{\ell-1}, W_\ell) & =\left(k\left(W_\ell W_\ell^T\right)+\epsilon \id_{f_\ell}\right)^{-1/2}k\left(W_\ell\E_\ell(F_{\ell-1})N_\ell(\F_{\ell-1})^{-1}\right)N_\ell(\F_{\ell-1})P_\ell, \\
& = A(W_\ell) B(W_\ell, F_{\ell-1}) N_\ell(\F_{\ell-1})P_\ell
\end{align*}
where $A(W_\ell), B(W_\ell, F_{\ell-1})$ are defined in~\eqref{eq:ckn_layer_shortcuts}, has a partial derivative with respect to the inputs given by
\begin{align*}
&\nabla_{\vec(F_{\ell-1})} \vec(\Psi_\ell) \\
=& \Bigg\{\bigg(\left[P_\ell^T N_\ell(F_{\ell-1})\otimes A(W_\ell)\right]\diag\left[\vec\left(k'(W_\ell\E_\ell(F_{\ell-1})N_\ell(\F_{\ell-1})^{-1}\right)\right] \\
&\phantom{\Bigg\{\bigg(}\times \Big[ \left[N_\ell(\F_{\ell-1})^{-1} \otimes W_\ell\right]
 \\
&\phantom{\Bigg\{\bigg(} \phantom{\times\{} 
- \left[\id_{p_\ell}\otimes (W_\ell\E_\ell(F_{\ell-1}))\right]\diag\left[\vec\left(N_\ell(\F_{\ell-1})^{-3}\right)\right]\times\left[\id_{p_\ell} \otimes\E_\ell(F_{\ell-1})^T\right]\Big]\bigg)\\
&+ \left[P_\ell^T \otimes \left(A(W_\ell)B(W_\ell, F_{\ell-1})\right)\right]\diag\left[\vec\left(N_\ell(\F_{\ell-1})^{-1}\right)\right]\times \left[\id_{p_\ell}\otimes \E_\ell(F_{\ell-1})^T\right]\Bigg\} \times \Sigma_\ell.
\end{align*}
where $\Sigma_\ell = 
\sum_{i=1}^{\s_\ell/f_{\ell-1}}\left(\E_{\ell2i}^T\otimes \E_{\ell1i}\right)$.
\end{proposition}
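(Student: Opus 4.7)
Since $A(W_\ell)$ does not depend on $F_{\ell-1}$, the plan is to treat it as a constant and differentiate the triple product $B(W_\ell, F_{\ell-1})\,N_\ell(F_{\ell-1})\,P_\ell$ by the product rule, then translate the resulting matrix differential into the Jacobian with respect to $\vec(F_{\ell-1})$. I would introduce the shorthand $E := E_\ell(F_{\ell-1})$ and $N := N_\ell(F_{\ell-1})$, compute the two Jacobians $\nabla_{\vec(F_{\ell-1})}\vec(B)$ and $\nabla_{\vec(F_{\ell-1})}\vec(N)$ separately, and assemble them at the end.

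First, applying the identity $\vec(AXB) = (B^T\otimes A)\vec(X)$ to each contribution of the product rule gives
\[
\nabla_{\vec(F_{\ell-1})}\vec(\Psi_\ell) = \bigl(P_\ell^T N \otimes A(W_\ell)\bigr)\nabla_{\vec(F_{\ell-1})}\vec(B) + \bigl(P_\ell^T \otimes A(W_\ell)B\bigr)\nabla_{\vec(F_{\ell-1})}\vec(N),
\]
which matches the two top-level summands in the claimed formula. It remains to derive the two inner Jacobians and factor out a common $\Sigma_\ell$ on the right.

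Second, for $\vec(B) = \vec(k(W_\ell E N^{-1}))$, Lemma~\ref{lemma:kernel_ab} contributes the factor $\diag[\vec(k'(W_\ell E N^{-1}))]$ through the elementwise kernel, after which I would apply the product rule to the argument $W_\ell E N^{-1}$. The $dE$ piece uses the linearity of $E_\ell$ in~\eqref{eq:patch_extract}, whose Jacobian is exactly $\Sigma_\ell = \sum_i (E_{\ell 2 i}^T\otimes E_{\ell 1 i})$; combined with the left factor $W_\ell$ and right factor $N^{-1}$, it yields $(N^{-1}\otimes W_\ell)\,\Sigma_\ell$. The $dN^{-1}$ piece uses Corollary~\ref{corollary:inv_norm} to get $dN^{-1} = -N^{-3}\odot(E^T dE)$; converting this Hadamard product into the matrix $\diag[\vec(N^{-3})]$ and applying the vectorization identity once more produces $-[\id_{p_\ell}\otimes(W_\ell E)]\diag[\vec(N^{-3})][\id_{p_\ell}\otimes E^T]\,\Sigma_\ell$. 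These two pieces match the bracketed difference in the statement. Third, for $\vec(N)$, Corollary~\ref{corollary:norm} gives $dN = N^{-1}\odot(E^T dE)$, and the same Hadamard-to-$\diag$ conversion together with $\nabla_{\vec(F_{\ell-1})}\vec(E) = \Sigma_\ell$ yields $\nabla_{\vec(F_{\ell-1})}\vec(N) = \diag[\vec(N^{-1})][\id_{p_\ell}\otimes E^T]\,\Sigma_\ell$. Substituting both Jacobians into the top-level expression and pulling $\Sigma_\ell$ out on the right recovers the stated formula.

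The main obstacle is the Kronecker bookkeeping: converting each elementwise product into the proper $\diag[\vec(\cdot)]$ block, correctly ordering the $(M^T\otimes \cdot)$ versus $(\cdot \otimes M)$ factors that arise from left-multiplication and right-multiplication under $\vec(\cdot)$, and verifying that the two chain-rule branches through $E_\ell$ inside $B$ and $N$ can be combined into a single common right-factor $\Sigma_\ell$ as claimed. Once these bookkeeping steps are done carefully, the terms line up exactly with the expression in the proposition.
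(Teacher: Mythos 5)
Your proposal is correct and follows essentially the same route as the paper's proof: the same product-rule decomposition into the $B$-branch and the $N$-branch, the same use of Lemma~\ref{lemma:kernel_ab} and Corollaries~\ref{corollary:norm} and~\ref{corollary:inv_norm}, and the same factoring of the patch-extraction Jacobian $\Sigma_\ell$ on the right. The only cosmetic difference is that the paper writes the Kronecker factors with explicit transposes (e.g., $(N(F)P)^T\otimes A(W)$), which coincide with yours because $N_\ell(F_{\ell-1})$ is diagonal.
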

\begin{proof}
For simplicity in the proof we drop the layer index, denoting e.g., $W = W_\ell$ for the weights and $F = F_{\ell-1}$ for the input of this layer.
Denote the Fréchet derivative of a function $f$ at a point $A$ in the direction $H$ by $L_f(A,H)$. For fixed weights $W$, denote by $\tilde \Psi(F) = \Psi(W, F)$ the restricted output function.
Recall that $N(F) = \tilde N(E(F))$ where $\tilde N$ and $E$ are defined respectively in~\eqref{eq:patch_extract},~\eqref{eq:patch_norm}.
Furthermore, define $\Sigma \coloneqq\sum_{i=1}^{\s/f}\left(\E_{\ell2i}^T\otimes \E_{\ell1i}\right) $.
We have by the product rule and chain rule \citep[Theorems 3.3, 3.4]{higham2008} that 
\begin{align}
L_{\vec(\tilde \Psi)}(F,H) = &  \vec\left(A(W)L_{k}
\left(W\E(F)N(\F)^{-1}, \right.\right.\label{eq:2layergrad_term}\\
&\left.\left. W L_{\E}(F,H)N(\F)^{-1}+W\E(\F)L_{\tilde N^{-1}}(\E(F), L_{\E}(F,H))\right)N(F)P\right) \nonumber \\
&+ \vec\left(A(W)B(W, F)L_{\tilde N}(\E(F),L_{\E}(F,H))P \right).
\end{align}
Now consider the first of the two terms in equation~\eqref{eq:2layergrad_term}. By lemmas~\ref{lemma:kernel_ab} and \ref{lemma:matrix_mul} and Corollary~\ref{corollary:inv_norm} we have
\begin{align*}
&\vec\left(A(W)L_{k}
\left(W\E(F)N(\F)^{-1}, \right.\right.\\
&\left.\left. W L_{\E}(F,H)N(\F)^{-1}+W\E(\F)L_{\tilde N^{-1}}(\E(F), L_{\E}(F,H))\right)\tilde N(\E(\F))P\right) \\
=& \vec\left\{A(W)\left\{k'(W\E(\F)N(\F)^{-1})\right.\right.\\
&\odot\left.\left.\left[W\E(H)N(\F)^{-1}-W\E(F)(\tilde N^{-3}(\E(\F))\odot(\E(\F)^T\E(H)))\right]\right\}\tilde N(\E(\F))P\right\}\\
=& \left[(N(F)P)^T\otimes A(W)\right]\diag\left[\vec\left(k'(W\E(F)N(\F)^{-1}\right)\right] \\
&\times \left\{ \left((N(\F)^{-1})^T \otimes W\right)- \left[\id_{p}\otimes (W\E(F))\right]\diag\left(\vec\left[N(\F)^{-3}\right]\right)\left(\id_{p}\otimes \E(F)^T\right)\right\}\Sigma\vec\left(H\right).
\end{align*}
Now consider the second term in \eqref{eq:2layergrad_term}. By Lemma~\ref{lemma:matrix_mul} and Corollary~\ref{corollary:norm} we have
\begin{align*}
&\vec\left(A(W)B(W, F)L_{\tilde N}(\E(F),L_{\E}(F,H))P \right) \\
=& \vec\left(A(W)B(W, F)\left[\tilde N(E(\F))^{-1}\odot \left(\E(\F)^T\E(H) \right)\right]P \right) \\
=& \left[P^T \otimes \left(A(W)B(W, F)\right)\right]\diag\left[\vec(N(\F)^{-1})\right]\left(\id_{p}\otimes \E(F)^T\right)\Sigma\vec(H).
\end{align*}
Therefore,
\begin{align*}
L_{\vec(\tilde \Psi)}(F,H) = & \Big\{\left[(N(F)P)^T\otimes A(W)\right]\diag\left[\vec\left(k'(W\E(F)N(\F)^{-1}\right)\right] \\
&\times \left\{ \left[(N(\F)^{-1})^T \otimes W\right]- \left[\id_{p}\otimes (W\E(F))\right]\diag\left(\vec\left[N(\F)^{-3}\right]\right)\left( \id_{p} \otimes\E(F)^T\right)\right\}\\
&+ \left[P^T \otimes A(W)B(W, F)\right]\diag\left[\vec(N(\F)^{-1})\right]\left(\id_{p}\otimes \E(F)^T\right)\Big\}\Sigma\vec(H).
\end{align*}
\end{proof}

\subsection{Bandwidths derivatives}
Now we compute the derivatives with respect to the bandwidths when using Radial Basis Function (RBF) kernels on the sphere. We denote the RBF kernel for unit norm vectors $x, x'$ such that $\|x\|_2 = \|x'\|_2 =1$ by,
\[
k_\sigma(x, x') = \exp\left(-\frac{\|x-x'\|_2^2}{2\sigma^2}\right) = \exp\left(-\frac{1-\langle x, x'\rangle}{\sigma^2}\right),
\]
where $\sigma$ is its bandwidth.
The output of the convolutional kernel layer incorporates the bandwidth as
\begin{align}\label{eq:ckn_layer_RBF}
\Psi_\ell(F_{\ell-1}, W_\ell, \sigma_\ell) \coloneqq\left(k_{\sigma_{\ell}}\left(W_\ell W_\ell^T\right)+\epsilon \id_{f_\ell}\right)^{-1/2}k_{\sigma_{\ell}}\left(W_\ell\E_\ell(F_{\ell-1})N_\ell(\F_{\ell-1})^{-1}\right)N_\ell(\F_{\ell-1})P_\ell,
\end{align}
where $k_{\sigma_\ell}$ is understood to be applied element-wise.
First note that its gradient back-propagates through the network as in Proposition~\ref{prop:loss_grad}. This is stated in the following proposition.

\begin{proposition}	\label{prop:ckn_loss_grad_bw}
		Let  $\mcl(y, \langle \W_{L+1}, \F_L\rangle)$ be the loss incurred by an image-label sample $(F_0, y)$, where 
	$F_L$ is the output of $L$ layers of the network described by ~\eqref{eq:ckn_layer_RBF} and $W_{L+1}$ parameterizes the linear classifier. 
	Then the 
	gradient of the loss 
	with respect to the bandwidths $\sigma_{\ell}$, $1\leq\ell\leq L$, is given by
	\begin{align*}
	\nabla_{\sigma_\ell} \mcl\left(y, \left\langle \W_{L+1}, \F_L\right\rangle\right) =
	\mcl'\vec(W_{L+1})^T\left[\prod_{\ell '=\ell+1}^{L} \nabla_{\vec\left(F_{\ell'-1}\right)} \vec\left(\Psi_{\ell'}\right)\right]
	\times\nabla_{\sigma_\ell} \vec\left(\Psi_{\ell}\right),
	\end{align*}
	where $\mcl'=\frac{\partial \mcl(\bar y, \hat y)}{\partial \hat y }|_{(\bar y, \hat y) = (y,\left\langle \W_{L+1}, \F_L\right\rangle)} $,  $\nabla_{\vec(F_{\ell'-1})} \vec(\Psi_{\ell'})$ is detailed in Proposition~\ref{prop:2layer_grad_part1}, and $\nabla_{\sigma_{\ell}} \vec\left(\Psi_{\ell}\right)$ is detailed in Proposition~\ref{prop:1layer_grad_bw}.

\end{proposition}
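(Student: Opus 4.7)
The plan is to mirror the proof of Proposition~\ref{prop:loss_grad}, exploiting the fact that the bandwidth $\sigma_\ell$ enters the network only through the single layer $\Psi_\ell$ defined in~\eqref{eq:ckn_layer_RBF}. Write the loss as the composition
\[
\mathcal{L}\bigl(y, \langle W_{L+1}, F_L\rangle\bigr) \;=\; \mathcal{L}\bigl(y, \langle W_{L+1}, \Psi_L(\cdots \Psi_{\ell+1}(\Psi_\ell(F_{\ell-1}, W_\ell, \sigma_\ell), W_{\ell+1}, \sigma_{\ell+1}) \cdots, W_L, \sigma_L)\rangle\bigr).
\]
First I would apply the chain rule in vectorized form to obtain
\[
\nabla_{\sigma_\ell}\mathcal{L}
 \;=\; \mathcal{L}'\,\vec(W_{L+1})^T\,\nabla_{\vec(F_\ell)}\vec(F_L)\;\nabla_{\sigma_\ell}\vec(\Psi_\ell),
\]
and then identify $\nabla_{\vec(F_\ell)}\vec(F_L)$ with the same telescoping product of inter-layer Jacobians that appears in Proposition~\ref{prop:loss_grad}, namely
\[
\nabla_{\vec(F_\ell)}\vec(F_L) \;=\; \prod_{\ell'=\ell+1}^{L} \nabla_{\vec(F_{\ell'-1})}\vec(\Psi_{\ell'}),
\]
where each factor is exactly the quantity made explicit in Proposition~\ref{prop:2layer_grad_part1}. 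Since neither $\sigma_{\ell'}$ for $\ell' \ne \ell$ nor any $W_{\ell'}$ depends on $\sigma_\ell$, differentiation commutes with the composition in a clean way and the chain rule yields the stated formula.

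The single remaining piece is the factor $\nabla_{\sigma_\ell}\vec(\Psi_\ell)$, which is deferred to Proposition~\ref{prop:1layer_grad_bw}; here I would only need to verify that it is well defined, i.e., that $\Psi_\ell(F_{\ell-1}, W_\ell, \sigma_\ell)$ is differentiable with respect to $\sigma_\ell>0$. This follows because $\sigma \mapsto k_\sigma(t) = \exp(-(1-t)/\sigma^2)$ is smooth in $\sigma$ on $(0,\infty)$, so the element-wise map $\sigma_\ell \mapsto k_{\sigma_\ell}(W_\ell W_\ell^T)$ and $\sigma_\ell \mapsto k_{\sigma_\ell}(W_\ell E_\ell(F_{\ell-1}) N_\ell(F_{\ell-1})^{-1})$ are both smooth, and the inverse square root of a positive definite matrix depends smoothly on its entries (this is where Lemma~\ref{lemma:matrix_sqrt} from Appendix~\ref{app:gradient_proofs} is invoked to differentiate through the $(\cdot + \epsilon I)^{-1/2}$ factor).

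The main obstacle is not in the chain-rule step itself, which is essentially identical to the weights case, but in producing an explicit expression for $\nabla_{\sigma_\ell}\vec(\Psi_\ell)$; that computation uses the product rule on the three factors of $\Psi_\ell$, the identity $\partial_\sigma k_\sigma(t) = (2(1-t)/\sigma^3)\,k_\sigma(t)$ applied element-wise, and Lemma~\ref{lemma:matrix_sqrt} to push the derivative through the matrix inverse square root, yielding a Lyapunov-type expression analogous to the one in Proposition~\ref{prop:1layer_grad}. This computation, being of the same flavor as the derivative with respect to $W_\ell$, is relegated to Proposition~\ref{prop:1layer_grad_bw}, and invoking it completes the proof.
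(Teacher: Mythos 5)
Your proposal is correct and matches the paper's own treatment: the paper likewise reduces this proposition to a chain-rule back-propagation identical in structure to Proposition~\ref{prop:loss_grad}, with the telescoping product of the inter-layer Jacobians from Proposition~\ref{prop:2layer_grad_part1} and the single new ingredient, the per-layer bandwidth derivative, deferred to Proposition~\ref{prop:1layer_grad_bw} (which indeed uses the element-wise identity $\partial_\sigma k_\sigma(t) = \tfrac{2(1-t)}{\sigma^3}k_\sigma(t)$ together with the matrix square root and inverse lemmas). No gaps.
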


The derivative of the RBF kernel on the sphere with respect to its bandwidth is given by the following lemma. This leads us to the derivative of the network with respect to the bandwidth in the proposition below.
\begin{lemma}
\label{lemma:rbf_sphere_bw_grad}
Let $A\in\mbr^{m\times n}$ and $B\in\mbr^{p\times n}$ and define the function $F:\mbr\to\mbr^{m\times p}$ by $F(\sigma)=\exp\left[-\frac{1}{\sigma^2}(\mbone_{m\times p}-AB^T)\right]$, where $\mbone_{m\times p}$ is an $m\times p$ matrix of ones and $\exp$ is understood to be applied element-wise. Then for a scalar $h\in\mbr$ we have
\begin{equation}
F(\sigma+h)=F(\sigma) + \frac{2}{\sigma^3}F(\sigma)\odot\left(\mbone_{m\times p}-AB^T\right)h + o(h).
\end{equation}
\end{lemma}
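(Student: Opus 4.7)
The plan is to reduce this to an element-wise scalar computation, since both $F(\sigma)$ and the Hadamard product on the right-hand side act entry by entry. First I would write the $(i,j)$ entry of $F(\sigma)$ as $F_{ij}(\sigma) = \exp(-c_{ij}/\sigma^2)$, where $c_{ij} = 1 - (AB^T)_{ij}$ is the $(i,j)$ entry of $\mathbf{1}_{m\times p} - AB^T$. This turns the matrix statement into the scalar claim $F_{ij}(\sigma+h) = F_{ij}(\sigma) + \tfrac{2c_{ij}}{\sigma^3} F_{ij}(\sigma) h + o(h)$ for every fixed $(i,j)$, from which the Hadamard-product form follows by stacking the entries.

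For each $(i,j)$, the scalar function $\sigma \mapsto \exp(-c_{ij}/\sigma^2)$ is smooth on $\sigma \neq 0$, and a direct differentiation gives
\begin{equation*}
\frac{d}{d\sigma}\exp\!\left(-\frac{c_{ij}}{\sigma^2}\right) = \frac{2 c_{ij}}{\sigma^3}\exp\!\left(-\frac{c_{ij}}{\sigma^2}\right).
\end{equation*}
A first-order Taylor expansion in $h$ therefore yields the scalar expansion above with a remainder that is $o(h)$. Because the number of entries $mp$ is finite, I can take the maximum of the remainder terms over $(i,j)$ to conclude that the matrix-valued remainder is $o(h)$ in any fixed norm on $\reals^{m\times p}$.

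Reassembling the entries, I would recognize that the collection $\{\tfrac{2 c_{ij}}{\sigma^3} F_{ij}(\sigma)\}_{i,j}$ is exactly the matrix $\tfrac{2}{\sigma^3} F(\sigma) \odot (\mathbf{1}_{m\times p} - AB^T)$, which gives the claimed expansion. I do not anticipate any real obstacle here: the statement is essentially the chain rule for a smooth scalar function applied element-wise, and the only thing to be careful about is noting that the element-wise $\exp$ and Hadamard product commute with taking entries, so the passage from the scalar Taylor expansion to the matrix identity is immediate.
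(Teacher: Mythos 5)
Your proof is correct: the entrywise chain-rule computation gives $\frac{d}{d\sigma}\exp(-c_{ij}/\sigma^2)=\frac{2c_{ij}}{\sigma^3}\exp(-c_{ij}/\sigma^2)$, and reassembling the finitely many entries yields exactly the stated Hadamard-product form with an $o(h)$ remainder. The paper itself states this lemma without proof, treating it (like the neighboring lemmas) as a routine Taylor expansion, and your argument is precisely that routine argument carried out in full.
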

\begin{proposition}[Derivative of the network with respect to the bandwidth]
\label{prop:1layer_grad_bw}

The output of the $\ell$\textsuperscript{th} convolutional kernel layer defined in~\eqref{eq:ckn_layer_RBF},
\begin{align*}
\Psi_\ell(F_{\ell-1}, W_\ell, \sigma_\ell) & =\left(k_{{\sigma_\ell}}\left(W_\ell W_\ell^T\right)+\epsilon \id_{f_\ell}\right)^{-1/2}k_{{\sigma_\ell}}\left(W_\ell\E_\ell(F_{\ell-1})N_\ell(\F_{\ell-1})^{-1}\right)N_\ell(\F_{\ell-1})P_\ell, \\
& = A_{\sigma_\ell}(W_\ell) B_{\sigma_\ell}(W_\ell, F_{\ell-1}) N_\ell(\F_{\ell-1})P_\ell,
\end{align*}
where $A_{\sigma_\ell}(W_\ell), B_{\sigma_\ell}(W_\ell, F_{\ell-1})$ are defined as in~\eqref{eq:ckn_layer_shortcuts}, has a partial derivative with respect to the bandwidth of the kernel given by
\begin{align*}
\nabla_{{\sigma_\ell}} \vec(\Psi_\ell) =&  -[(B_{\sigma_\ell}(W_\ell, F_{\ell-1}) N_\ell(\F_{\ell-1})P_\ell)^T\otimes \id_{f}]\left(\id_{f} \otimes A_{\sigma_\ell}(W_\ell) + A_{\sigma_\ell}(W_\ell)\otimes \id_{f}  \right)^{-1}\\
&\phantom{-}\times \left(A_{\sigma_\ell}(W_\ell)^2 \otimes A_{\sigma_\ell}(W_\ell)^2 \right)\vec\left(\frac{2}{{\sigma_\ell}^3}k_{\sigma_\ell}\left(W_\ell W_\ell^T\right)\odot\left(\mbone_{f_\ell\times f_\ell}-W_\ell W_\ell^T\right)\right) \\
&+(P_\ell^TN_\ell(\F_{\ell-1})\otimes A_{\sigma_\ell}(W_\ell)) \\
& \phantom{+} \times
\vec\left(\frac{2}{{\sigma_\ell}^3}B_{\sigma_\ell}(W_\ell, F_{\ell-1})\odot\left(\mbone_{f_\ell\times p_\ell'}-W_\ell\E_{\ell}(F_{\ell-1})N_\ell(\F_{\ell-1})^{-1}\right)\right).
\end{align*}
\end{proposition}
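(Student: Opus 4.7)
The plan is to mimic the structure of the proof of Proposition~\ref{prop:1layer_grad}, but now treating $W_\ell$ and $F_{\ell-1}$ as fixed and differentiating only with respect to the scalar $\sigma_\ell$. Since $N_\ell(F_{\ell-1})$ and $P_\ell$ do not depend on $\sigma_\ell$, the product rule collapses to
\begin{align*}
\partial_{\sigma_\ell} \Psi_\ell
 = \big(\partial_{\sigma_\ell} A_{\sigma_\ell}(W_\ell)\big) B_{\sigma_\ell}(W_\ell,F_{\ell-1}) N_\ell(F_{\ell-1}) P_\ell
 + A_{\sigma_\ell}(W_\ell) \big(\partial_{\sigma_\ell} B_{\sigma_\ell}(W_\ell,F_{\ell-1})\big) N_\ell(F_{\ell-1}) P_\ell.
\end{align*}
Vectorizing using the identity $\vec(XYZ) = (Z^T \otimes X)\vec(Y)$ will turn the two terms above into $((B_{\sigma_\ell} N_\ell P_\ell)^T \otimes \id_{f_\ell}) \vec(\partial_{\sigma_\ell} A_{\sigma_\ell})$ and $(P_\ell^T N_\ell \otimes A_{\sigma_\ell}) \vec(\partial_{\sigma_\ell} B_{\sigma_\ell})$, which already match the outer Kronecker factors appearing in the claimed formula.

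The second term is essentially immediate: $B_{\sigma_\ell}(W_\ell, F_{\ell-1}) = k_{\sigma_\ell}(W_\ell E_\ell(F_{\ell-1}) N_\ell(F_{\ell-1})^{-1})$, so Lemma~\ref{lemma:rbf_sphere_bw_grad} applied with $A = W_\ell$ and $B^T = E_\ell(F_{\ell-1}) N_\ell(F_{\ell-1})^{-1}$ yields
\[
\partial_{\sigma_\ell} B_{\sigma_\ell}(W_\ell, F_{\ell-1})
 = \tfrac{2}{\sigma_\ell^3}\, B_{\sigma_\ell}(W_\ell, F_{\ell-1}) \odot \big(\mbone_{f_\ell \times p_\ell} - W_\ell E_\ell(F_{\ell-1}) N_\ell(F_{\ell-1})^{-1}\big),
\]
which is exactly the second summand of the claim after vectorization.

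The first term is the heart of the proof. Write $M(\sigma_\ell) = k_{\sigma_\ell}(W_\ell W_\ell^T) + \epsilon \id_{f_\ell}$ and $A_{\sigma_\ell} = M(\sigma_\ell)^{-1/2}$. Applying Lemma~\ref{lemma:rbf_sphere_bw_grad} with $A = B = W_\ell$ gives $\partial_{\sigma_\ell} M = \tfrac{2}{\sigma_\ell^3} k_{\sigma_\ell}(W_\ell W_\ell^T) \odot (\mbone_{f_\ell \times f_\ell} - W_\ell W_\ell^T)$. It remains to differentiate the inverse square root through the chain rule. The cleanest route (already used implicitly in the proof of Proposition~\ref{prop:1layer_grad}) is to start from the identity $A_{\sigma_\ell}^{\,2} = M^{-1}$, apply the derivation rule to obtain $(\partial A_{\sigma_\ell}) A_{\sigma_\ell} + A_{\sigma_\ell} (\partial A_{\sigma_\ell}) = - A_{\sigma_\ell}^{\,2} (\partial M) A_{\sigma_\ell}^{\,2}$ (using Lemma~\ref{lemma:matrix_inv} on the right-hand side), then vectorize the resulting Sylvester/Lyapunov equation to get
\[
\big(\id_{f_\ell} \otimes A_{\sigma_\ell} + A_{\sigma_\ell} \otimes \id_{f_\ell}\big) \vec(\partial A_{\sigma_\ell})
 = -\big(A_{\sigma_\ell}^{\,2} \otimes A_{\sigma_\ell}^{\,2}\big) \vec(\partial M),
\]
where we use symmetry of $A_{\sigma_\ell}$ and of $A_{\sigma_\ell}^{\,2}$. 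Invert the Kronecker sum (which is nonsingular because $A_{\sigma_\ell}$ is positive definite) and substitute $\partial_{\sigma_\ell} M$ to recover precisely the first summand of the claim.

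The main obstacle is the differentiation through $M^{-1/2}$; this can alternatively be justified by invoking Lemma~\ref{lemma:matrix_sqrt} together with Lemma~\ref{lemma:matrix_inv} via $A_{\sigma_\ell} = (M^{1/2})^{-1}$ and the chain rule, but either way the result is the same closed form involving the Kronecker sum inverse. Once this core computation is done, assembling the two vectorized summands yields the formula as stated, and nothing further beyond book-keeping of Kronecker-product identities is required.
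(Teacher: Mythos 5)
Your proposal is correct and follows essentially the same route as the paper's proof: product rule on the two $\sigma_\ell$-dependent factors $A_{\sigma_\ell}$ and $B_{\sigma_\ell}$, vectorization via $\vec(XYZ)=(Z^T\otimes X)\vec(Y)$, Lemma~\ref{lemma:rbf_sphere_bw_grad} for the kernel derivatives, and the Kronecker-sum (Lyapunov) inversion combined with the derivative of the matrix inverse for $\partial_{\sigma_\ell} M^{-1/2}$. Your derivation of the Lyapunov equation directly from $A_{\sigma_\ell}^2=M^{-1}$ is a minor cosmetic variant of the paper's composition of Lemmas~\ref{lemma:matrix_sqrt} and~\ref{lemma:matrix_inv}, which you yourself note, so there is nothing substantive to add.
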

\begin{proof}
For simplicity in the proof we drop the layer index, denoting e.g., $W = W_\ell$ for the weights and $F = F_{\ell-1}$ for the input of this layer.
Denote the Fréchet derivative of a function $f$ at a point $\sigma$ in the direction $h$ by $L_f(\sigma,h)$. For fixed weights $W$ and inputs $F$, denote $\tilde \Psi(\sigma) = \Psi(W, F, \sigma)$ the restricted output function. Similarly denote $\tilde A(\sigma) = A_\sigma(W)$ and $\tilde B(\sigma) = B_{\sigma}(W, F) N(\F)P$.

As in Proposition~\ref{prop:1layer_grad}, observe that
\begin{align*}
\vec(\tilde \Psi (\sigma))
 = (\tilde B(\sigma)^T\otimes \id_{f})\vec(\tilde A(\sigma)).
\end{align*}
By the chain rule  and Lemmas~\ref{lemma:matrix_sqrt}, \ref{lemma:matrix_inv}, and \ref{lemma:rbf_sphere_bw_grad} we have
\begin{align*}
(\tilde B(\sigma)^T\otimes \id_{f})L_{\vec(\tilde A)}(\sigma,h) =&  -(\tilde B(\sigma)^T\otimes \id_{f})\left(\id_{f} \otimes \tilde A(\sigma) + \tilde A(\sigma)\otimes \id_{f}  \right)^{-1}\left(\tilde A(\sigma)^2 \otimes \tilde A(\sigma)^2 \right)\\
&\times\vec\left(\frac{2}{\sigma^3}k_\sigma\left(WW^T\right)\odot\left(\mbone_{f\times f}-WW^T\right)\right)h.
\end{align*}
Additionally, by the chain rule, Lemma~\ref{lemma:kernel_ab}, and Lemma~\ref{lemma:matrix_inv} we have that 
\begin{align*}
&L_{\tilde B^T\otimes \id_{f}}(\sigma,h)\vec(\tilde A(\sigma)) \\
=& \left[P^TN(F)\otimes \tilde A(\sigma)\right]
\vec\left(\frac{2}{\sigma^3}k_\sigma\left(W\E(F)N(F)^{-1}\right)\odot\left(\mbone_{f\times p'}-W\E(F)N(F)^{-1}\right)\right)h.
\end{align*}
Therefore, 
\begin{align*}
\nabla_{\sigma} \vec(\Psi) =&  -[\tilde B(\sigma)^T\otimes \id_{f}]\left(\id_{f} \otimes \tilde A(\sigma) + \tilde A(\sigma)\otimes \id_{f}  \right)^{-1}\left(\tilde A(\sigma)^2 \otimes \tilde A(\sigma)^2 \right)\\
&\phantom{-}\times\vec\left(\frac{2}{\sigma^3}k_\sigma\left(WW^T\right)\odot\left(\mbone_{f\times f}-WW^T\right)\right) \\
&+\left[P^TN(F)\otimes \tilde A(\sigma)\right]
\vec\left(\frac{2}{\sigma^3}k_\sigma\left(W\E(F)N(F)^{-1}\right)\odot\left(\mbone_{f\times p'}-W\E(F)N(F)^{-1}\right)\right).
\end{align*}
\end{proof}

\section{Stochastic gradient optimization on manifolds}
\label{app:sg_manifolds}

In this section we detail the basic stochastic gradient optimization used to train the CKNs, both in terms of the implementation and the theoretical guarantees.

For ease of presentation, denote by $w_\ell = \Vect(\W_\ell) \in \reals^{d_\ell}$ the vectorized weights at the $\ell$\textsuperscript{th} layer with $d_\ell=\f_\ell\times\s_\ell$ parameters and $w_{1:L+1} = (w_1; \ldots; w_{L+1})$ the concatenation of those layers. Furthermore denote by $S^{d_\ell}=\prod_{j=1}^{\f_\ell} \mbs^{\s_\ell}$ the Cartesian product of Euclidean unit spheres in $\reals^{\s_\ell}$ and by $\mathcal{B}_{2,\lambda}$ the Euclidean ball centered at the origin of radius $\lambda$. The problem then reads  
\begin{align}
	\min_{w_{1:L+1}} \qquad & f(w_{1:L+1}) \triangleq \frac{1}{n}\sum_{i=1}^{n} f_i(w_{1:L+1}) \label{eq:optim_pb} \\
	\mbox{subject to}  \qquad & w_{1:L+1} \in \mathcal{C} \triangleq S^{d_0} \times \cdots \times S^{d_{L-1}} \times \mathcal{B}_{2,\lambda} \nonumber
\end{align}
where $f_i(w_{1:L+1})$ is the loss incurred on the $i$\textsuperscript{th} sample and the set of constraints $ \mathcal{C}$ is a product of manifolds and is therefore a manifold itself. 

Optimization analysis on manifolds is characterized by smooth curves, called retractions, parametrized by a point on the manifold and a direction.
In our case, they amount to block-coordinate projections on the sphere. Formally, with a given set of weights $w_{1:L+1}$, given a direction $\delta_{1:L+1} = (\delta_1; \ldots; \delta_{L+1})$ where $\delta_\ell$ denotes the portion corresponding to the $\ell$\textsuperscript{th} layer, the retraction is defined as
\begin{align*}\label{eq:ckn_proj_grad}
& \step(\delta_{1:L+1}; w_{1:L+1}) = v_{1:L+1} \\
\mbox{where} \qquad & v_{\ell}  = \operatorname{Proj}_{S^{d_\ell}}\left[w_\ell + \delta_\ell\right] \qquad \mbox{for $\ell=1,\ldots, L$} \\
& v_{L+1}  = \Proj_{\mathcal{B}_{2,\lambda}}(w_{L+1} + \delta_{L+1})
\end{align*}
where $\operatorname{Proj}_{S^d}$ denotes the orthogonal projection on $S^d$, i.e., a block coordinate normalization.

The stochastic implementation, starting from a given $w_{1:L+1}^{(0)}$, consists at iteration $t$ of sampling a function $f_i$ and  performing the step
\begin{equation}\label{eq:SGO}\tag{SGO}
	w^{(t+1)}_{1:L+1} = \step(- \stepsize_t \nabla f_i(w_{1:L+1}^{(t)}); w_{1:L+1}^{(t)}).
\end{equation}
Recall that by constraining a manifold, the first order information that measures the stationarity of the problem is the projection of the gradients on the tangent space at the current point~\citep{boumal2016}. Formally, for $w \in S$, denote by $\mathcal{T}_{S,w}$\footnote{For $S^{d_\ell}=\prod_{j=1}^{\f_\ell} \mbs^{\s_\ell}$ the tangent space is the product of the tangent spaces and on a sphere $\mbs$, $\mathcal{T}_{\mbs,x} = x+ V(x)^\perp$ where $V(x)  =\{ \alpha x; \alpha \in \reals\}$ is the line generated by $x$.}, the tangent space at $w$ on $S$. The quantity governing the stationarity is then
\begin{align}
& \grad f(w_{1:L+1}) = (g_1 ;\ldots ; g_{L+1}) \label{eq:grad_manifold}\\
\mbox{where} \qquad & g_\ell = \operatorname{Proj}_{\mathcal{T}_{S^{d_\ell}, w_\ell}} \nonumber (\nabla_{w_\ell}f(w_{1:L+1})) \qquad  \mbox{for $\ell \in \{1, \ldots, L\}$}\\
& g_{L+1} = \Proj_{\mathcal{T}_{\mathcal{B}_{2,\lambda}, w_{L+1}}}(\nabla_{w_{L+1}} f(w_{1:L+1})) \nonumber
\end{align} 
where $\mathcal{T}_{\mathcal{B}_{2,\lambda}, w_{L+1}}= \reals^{d_{L+1}}$ if $w_{L+1} \in \mathcal{B}\strut^\mathrm{o}_{2,\lambda}$ or $\mathcal{T}_{\mathcal{B}_{2,\lambda}, w_{L+1}}= \mathcal{T}_{\mathbb{S}_{2,\lambda}^{d_{L+1}}, w_{L+1}}$ if $w_{L+1} \in \mathcal{B}_{2,\lambda} \setminus\mathcal{B}\strut^\mathrm{o}_{2,\lambda} = \mathbb{S}_{2,\lambda}^{d_{L+1}}$.

The ingredients to prove convergence to a stationary point are then similar to the unconstrained case: 
\begin{enumerate}[label=(\roman*) ]
	\item Upper quadratic bounds for the steps, i.e., there exists $M$ such that for all iterates $w_{1:L+1}^{(t)} \in \mathcal{C}$,
	\[
	f(\step(\delta_{1:L+1}; w_{1:L+1}^{(t)})) \leq f(w_{1:L+1}^{(t)}) + \grad f(w_{1:L+1}^{(t)})^\top \delta_{1:L+1} + \frac{M}{2}\|\delta_{1:L+1}\|_2^2,
	\]
	where $\delta_{1:L+1} \in \mathcal{T}_{\mathcal{C}, w_{1:L+1}^{(t)}}$, the tangent space of $\mathcal{C}$ at $w_{1:L+1}^{(t)}$. \label{ass:quad_bound}
	\item Bounded gradients, i.e., there exists $B$  such that for all iterates $w_{1:L+1}^{(t)} \in \mathcal{C}$, \label{ass:grad_bounded}
	\[
	\|\nabla \grad f_i(w_{1:L+1}^{(t)})\| \leq B.
	\]
\end{enumerate}
The convergence result is then as follows.
\begin{theorem}[{\citet[Theorem 9]{hosseini2017}}]	
	\label{thm.sgdcond4}
	Assume conditions \ref{ass:quad_bound} and \ref{ass:grad_bounded} hold. Then, for $\stepsize_t = c/\sqrt{T}$ with $c>0$ and $T$ the maximal number of iterations, the iterates of~\eqref{eq:SGO} satisfy
	\begin{equation*}
	\label{eq:11}
	\frac{1}{T}\sum_{t=1}^{T}\Expect[\|\grad f(w_{1:L+1}^{(t)})\|_2^2] \le \frac{1}{\sqrt{T}}\left( \frac{f(w_{1:L+1}^{(1)})-f^*}{c} + \frac{Mc}{2}B^2\right).
	\end{equation*}
\end{theorem}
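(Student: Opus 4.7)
The plan is to treat the iteration as a Riemannian stochastic gradient step and combine the two standing assumptions in the usual ``descent lemma plus telescoping'' style argument, tracking expectations carefully through the retraction.

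First I would define the stochastic Riemannian gradient $\tilde g_t = \Proj_{\mathcal{T}_{\mathcal{C}, w^{(t)}_{1:L+1}}}(\nabla f_{i_t}(w^{(t)}_{1:L+1}))$, where $i_t$ is the sample drawn at iteration $t$. Since $\Proj_{\mathcal{T}_{\mathcal{C}, w^{(t)}_{1:L+1}}}$ is linear (as a block-wise orthogonal projection onto the tangent space of a product of spheres and a ball) and $\Expect[\nabla f_{i_t}(w^{(t)}_{1:L+1}) \mid w^{(t)}_{1:L+1}] = \nabla f(w^{(t)}_{1:L+1})$, I get the unbiasedness
\[
\Expect[\tilde g_t \mid w^{(t)}_{1:L+1}] = \grad f(w^{(t)}_{1:L+1}),
\]
and assumption \ref{ass:grad_bounded} gives $\|\tilde g_t\|_2 \le B$ since the projection is non-expansive. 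Using that retractions on spheres and the ball are invariant under adding ambient normal components, I can (and would) replace the effective step direction $\delta_t = -\stepsize_t \nabla f_{i_t}(w^{(t)}_{1:L+1})$ by its tangent component $-\stepsize_t \tilde g_t$ when inserting it into the quadratic upper bound.

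Next I would apply assumption \ref{ass:quad_bound} to the tangent-space direction $-\stepsize_t \tilde g_t$ to obtain
\[
f(w^{(t+1)}_{1:L+1}) \le f(w^{(t)}_{1:L+1}) - \stepsize_t \, \grad f(w^{(t)}_{1:L+1})^\top \tilde g_t + \tfrac{M}{2}\stepsize_t^2 \|\tilde g_t\|_2^2.
\]
Taking conditional expectation with respect to the history, using unbiasedness and the gradient bound, yields
\[
\Expect[f(w^{(t+1)}_{1:L+1}) \mid w^{(t)}_{1:L+1}] \le f(w^{(t)}_{1:L+1}) - \stepsize_t \|\grad f(w^{(t)}_{1:L+1})\|_2^2 + \tfrac{M}{2}\stepsize_t^2 B^2.
\]
Taking total expectation, rearranging, and summing over $t=1, \ldots, T$ gives the telescoping bound
\[
\sum_{t=1}^T \stepsize_t \, \Expect[\|\grad f(w^{(t)}_{1:L+1})\|_2^2] \le f(w^{(1)}_{1:L+1}) - f^* + \tfrac{M}{2} B^2 \sum_{t=1}^T \stepsize_t^2.
\]
Substituting the constant step size $\stepsize_t = c/\sqrt{T}$ produces $\sum_t \stepsize_t = c\sqrt{T}$ and $\sum_t \stepsize_t^2 = c^2$, so dividing by $c\sqrt{T}$ yields exactly the stated rate.

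The main obstacle, and the one point I would treat with the most care, is the compatibility between the ambient-space step $-\stepsize_t \nabla f_{i_t}(w^{(t)}_{1:L+1})$ entering the retraction and the tangent-space quadratic upper bound \ref{ass:quad_bound}. For the product of unit spheres and a Euclidean ball, the metric projection coincides with the exponential-like retraction and has the property that adding a normal component to the input does not alter the output; this lets me legitimately substitute $\tilde g_t$ in place of $\nabla f_{i_t}(w^{(t)}_{1:L+1})$. The remaining steps (unbiasedness, gradient bound, telescoping) are standard, but this reduction from the ambient direction to the tangent direction is what makes the Euclidean-style proof carry over to the manifold setting, and is the technical lemma I would justify in detail before running the argument above.
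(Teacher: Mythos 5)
First, note that the paper does not actually prove this statement: it is imported verbatim as \citet[Theorem 9]{hosseini2017}, and the surrounding text only argues that the hypotheses \ref{ass:quad_bound} and \ref{ass:grad_bounded} hold for the CKN objective on the compact manifold $\mathcal{C}$. Your argument is therefore a reconstruction of the cited proof, and its skeleton --- unbiasedness of the projected stochastic gradient, the quadratic upper bound along the retraction, conditional expectation, telescoping, and the substitution $\stepsize_t = c/\sqrt{T}$ giving $\sum_t \stepsize_t = c\sqrt{T}$ and $\sum_t\stepsize_t^2 = c^2$ --- is the standard one and reproduces the stated constant exactly.

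The genuine gap is the lemma you single out as the crux: the claim that for the metric-projection retraction on a product of unit spheres (and on $\mathcal{B}_{2,\lambda}$) ``adding a normal component to the input does not alter the output.'' This is false. On a single sphere, writing the ambient direction as $\delta = \delta_{\tan} + \alpha w$ with $\delta_{\tan} \perp w$, one has
\begin{equation*}
\Proj_{\mbs}(w+\delta) \;=\; \frac{(1+\alpha)\,w + \delta_{\tan}}{\|(1+\alpha)\,w + \delta_{\tan}\|_2} \;\neq\; \frac{w + \delta_{\tan}}{\|w + \delta_{\tan}\|_2} \;=\; \Proj_{\mbs}(w+\delta_{\tan})
\end{equation*}
whenever $\alpha \neq 0$ (e.g.\ $w=(1,0)$, $\delta_{\tan}=(0,1)$, $\alpha=1$ gives $(2,1)/\sqrt{5}$ versus $(1,1)/\sqrt{2}$); to first order the normal component rescales the effective tangent step by $1/(1+\alpha)$. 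So you cannot literally replace the ambient direction $-\stepsize_t\nabla f_{i_t}$ by $-\stepsize_t\tilde g_t$ inside the retraction, and as written your application of \ref{ass:quad_bound} (which is stated only for tangent directions) does not directly apply to the iterates of~\eqref{eq:SGO}. The gap is fixable: since $\alpha = O(\stepsize_t)$ here, the discrepancy between $\step(-\stepsize_t\nabla f_{i_t}; w^{(t)})$ and $\step(-\stepsize_t\tilde g_t; w^{(t)})$ is $O(\stepsize_t^2)$ and can be absorbed into the quadratic remainder with a possibly larger constant $M$ (using compactness of $\mathcal{C}$ and local Lipschitzness of $f$); alternatively, one reads the algorithm as stepping along the Riemannian stochastic gradient $-\stepsize_t \grad f_{i_t}$, which is the form in which the cited theorem is actually stated. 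Either repair should be made explicit; the invariance claim as you state it cannot be the justification.
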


Local Lipschitz and smoothness conditions are sufficient to ensure conditions~\ref{ass:quad_bound} and~\ref{ass:grad_bounded} on compact manifolds~\citep{boumal2016}. The convergence result then follows directly.

\begin{proposition_unnumbered}[\ref{prop:sgo_constrained}]
Assume the loss in the constrained training problem~\eqref{eq:training_constrained} and the kernel defining the network~\eqref{eq:ckn_layer} are continuously differentiable.
A projected stochastic gradient descent with stepsize $\stepsize_t = c/\sqrt{T}$ where $c>0$ and $T$ is the maximal number of iterations, finds an $O(1/\sqrt{T})$-stationary point.
\end{proposition_unnumbered}

\section{Ultimate Layer Reversal}
\label{app:ulr}
In this section we provide additional details related to the Ultimate Layer Reversal method.

\subsection{Simplified objective}
We recall how the simplified objective and the original one are generally related through the following proposition.

\begin{proposition_unnumbered}[\ref{prop:ulr_grad}]
Assume that $f(W,V)$ is twice differentiable and that for any $W$, the partial functions $V \rightarrow f(W,V)$ are strongly convex. Then the simplified objective $\simpobj(W) = \min_V f(W, V) $ is differentiable and satisfies
\[
\|\nabla \simpobj(W)\|_2 = \|\nabla f(W, V^*)\|_2,
\]
where $V^* = \argmin_V f(W, V) $. 
\end{proposition_unnumbered}
\begin{proof}
	The pairs $(W, V^*)$ such that $V^* \in \argmin_V f(W, V) $ are solutions of the first order optimality condition $\nabla_V f(W, V) = 0$. By the implicit function theorem, using that $\nabla_V^2 f(W, V)$ is invertible for any pair $(W,V)$ by strong convexity of $V \rightarrow f(W,V)$, the mapping $V^*(W) = \argmin_V f(W, V)$ is differentiable. The simplified objective then reads $\simpobj(W) = f(W, V^*(W))$. It is differentiable as a composition of differentiable functions and satisfies $\nabla \simpobj(W) = \nabla_W f(W, V^*(W))$.
	On the other hand, $\nabla f(W, V^*) = \nabla_W f(W, V^*)$ for any $V^* = \argmin_V f(W, V)$ s.t. $\nabla_V f(W, V^*) = 0$ and therefore 
	$
	\|\nabla \simpobj(W)\|_2 = \|\nabla f(W, V^*)\|_2.
	$
\end{proof}

\subsection{Least squares loss}
We now detail the computations of the ultimate layer reversal for the case of the square loss, for which the minimization can be performed analytically. 
Denote by $V \in \reals^{d \times K}, c \in \reals^K$ the affine classifier in the ultimate layer and $W = W_{1:L}$ the inner weights with $d =d_L$ the dimension of the penultimate layer. The objective then reads
\begin{align*}
\min_{V, c, W} \frac{1}{n}\sum_{i=1}^n(y_i - F(W;x_i)^\top V - c)^2 + \lambda \|V\|_F^2,
\end{align*}
which can be compactly written as
\begin{align}\label{eq:least_squares}
\min_{W, V, c} f(W,V,c) \triangleq \left\{\frac{1}{n}\|Y - F(W) V - \ones_n c^\top\|_F^2 + \lambda \|V\|_F^2 \right\},
\end{align}
where $Y \in \reals^{n \times K}$ is the matrix of labels,
$F(W) \in \reals^{n \times d}$ is the output of the inner layers applied to the inputs $x_1, \ldots, x_n$, and $\lambda>0$ is a regularization parameter.

The derivation of the Ultimate Layer Reversal is then provided by the following proposition.
\begin{proposition}\label{prop:simp_least_square}
	The simplified objective $ \tilde f(W) = \min_{V,c} f(W,V,c)$ of $f$ in~\eqref{eq:least_squares} reads 
	\begin{align*}
	\tilde f(W) & = \frac{1}{n} \|\Pi Y \|_F^2 - \frac{1}{n}\Tr\Big(Y^\top \Pi F(W)\big( F(W)^\top\Pi F(W) + n\lambda  \id_{d}\big)^{-1} F(W)^\top \Pi Y\Big) \\ 
	& = \frac{1}{n}\Tr\Big( Y^\top\Pi\big(\id_{n} + (n\lambda)^{-1}\Pi F(W) F(W)^\top \Pi\big)^{-1} \Pi Y\Big),
	\end{align*}
	where $\Pi = I - \frac{1}{n} \mbone \mbone^\top $.
\end{proposition}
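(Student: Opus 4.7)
The plan is to minimize out $c$ first, then $V$, then simplify the resulting quadratic form. Throughout, let $A = \Pi F(W)$ and $Z = \Pi Y$ for brevity, noting that $\Pi = I_n - \frac{1}{n}\mathbf{1}\mathbf{1}^\top$ is the centering matrix, which is symmetric, idempotent, and satisfies $\Pi \mathbf{1}_n = 0$.

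First I would minimize over $c$ with $W,V$ held fixed. Setting $\nabla_c f = -\frac{2}{n}(Y - F(W)V - \mathbf{1}_n c^\top)^\top \mathbf{1}_n = 0$ yields $c^\star = \frac{1}{n}(Y - F(W)V)^\top \mathbf{1}_n$, which is the column-wise mean of the residuals. Substituting back into the squared-residual term, the matrix $Y - F(W)V - \mathbf{1}_n (c^\star)^\top$ becomes $\Pi(Y - F(W)V)$. Thus the partially minimized objective is
\begin{equation*}
f_1(W,V) \triangleq \tfrac{1}{n}\|\Pi Y - \Pi F(W) V\|_F^2 + \lambda \|V\|_F^2 = \tfrac{1}{n}\|Z - AV\|_F^2 + \lambda \|V\|_F^2,
\end{equation*}
where I used $\Pi^2 = \Pi$ to pull the projection inside.

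Next I would minimize the ridge-regression problem $f_1(W,V)$ over $V$. The first-order condition $\nabla_V f_1 = \frac{2}{n}(A^\top A V - A^\top Z) + 2\lambda V = 0$ yields $V^\star = (A^\top A + n\lambda I_d)^{-1} A^\top Z$. Plugging $V^\star$ back and using the normal equation $A^\top A V^\star + n\lambda V^\star = A^\top Z$, one has
\begin{equation*}
n\,\tilde f(W) = \|Z\|_F^2 - 2\langle A^\top Z, V^\star\rangle + \langle V^\star, (A^\top A + n\lambda I_d) V^\star\rangle = \|Z\|_F^2 - \langle A^\top Z, V^\star\rangle.
\end{equation*}
Writing $\langle A^\top Z, V^\star\rangle = \Tr\bigl(Y^\top \Pi F(W)(F(W)^\top \Pi F(W) + n\lambda I_d)^{-1} F(W)^\top \Pi Y\bigr)$ gives the first expression in the proposition.

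To obtain the second expression I would apply the push-through (Woodbury) identity $A(A^\top A + n\lambda I_d)^{-1} A^\top = I_n - (I_n + (n\lambda)^{-1} A A^\top)^{-1}$, valid since $n\lambda > 0$ makes both inverses well defined. Substituting into $\|Z\|_F^2 - \Tr(Z^\top A(A^\top A + n\lambda I_d)^{-1} A^\top Z)$ produces
\begin{equation*}
n\,\tilde f(W) = \Tr\bigl(Z^\top (I_n + (n\lambda)^{-1} A A^\top)^{-1} Z\bigr) = \Tr\bigl(Y^\top \Pi(I_n + (n\lambda)^{-1}\Pi F(W) F(W)^\top \Pi)^{-1} \Pi Y\bigr),
\end{equation*}
since $A A^\top = \Pi F(W) F(W)^\top \Pi$ and $Z = \Pi Y$. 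The only genuinely subtle step is this Woodbury manipulation; the rest is routine. A minor point worth checking explicitly is that the first-order conditions above indeed give the global minimum, which follows from the strong convexity of $f$ in $(V, c)$ for $\lambda > 0$ (the Hessian in $V$ is $\frac{2}{n}A^\top A + 2\lambda I \succ 0$, and in $c$ it is $2 I_K$).
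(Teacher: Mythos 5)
Your proposal is correct and follows essentially the same route as the paper: eliminate the bias $c$ by centering with $\Pi$, solve the resulting ridge regression in $V$ in closed form, and pass to the second expression via the Sherman–Morrison–Woodbury (push-through) identity. Your write-up is merely more explicit about the normal-equation simplification and the convexity check than the paper's terse version.
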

\begin{proof}
For fixed $W$, the optimal classifier parameters $V^*(W),c^*(W)$ can be found analytically. Minimization in $c$ amounts to centering the labels and the inputs to the ultimate layer, i.e., 
\begin{equation*}\label{eq:bias_min}
c^*(W) =  \frac{1}{n}(Y-F(W)V^*(W) )^T\mbone,
\end{equation*}
where $\mbone$ is a vector of ones.
Define the centering matrix $\Pi = \id - \frac{1}{n} \mbone \mbone^\top $, an orthogonal projector. Then minimization in $V$ gives 
\begin{equation*}\label{eq:weights_min}
V^*(W) = \big( F(W)^\top\Pi F(W) + n\lambda  \id_{d}\big)^{-1} F(W)^\top \Pi Y
\end{equation*}
and the resulting objective is 
\begin{align*}
\tilde f (W) & = \frac{1}{n} \|\Pi Y \|_F^2 - \frac{1}{n}\Tr\Big(Y^\top \Pi F(W)\big( F(W)^\top\Pi F(W) + n\lambda  \id_{d}\big)^{-1} F(W)^\top \Pi Y\Big) \\ 
& = \frac{1}{n}\Tr\Big( Y^\top\Pi\big(\id_{n} + (n\lambda)^{-1}\Pi F(W) F(W)^\top \Pi\big)^{-1} \Pi Y\Big),
\end{align*}
where the second line is obtained from the Sherman–Morrison–Woodbury formula. 
\end{proof}

 \section{Training methods details and results}
 \label{app:training_details_results}
 
 In this appendix we report additional experimental details and results. First, Table~\ref{tab:batch_sizes} provides the batch size we use for each ConvNet and CKN experiment. The batch size is the largest size that fits on the GPU. This batch size declines as the size of the architecture increases since larger architectures consume more memory.
 
  \begin{table}[t]
\centering
 \caption{\label{tab:batch_sizes} Batch sizes used for the LeNet-1, LeNet-5, and All-CNN-C CKNs and ConvNets}
\begin{tabular}{c|rrrrr}
 & \multicolumn{5}{c}{Number of filters/layer} \\
Architecture & 8 & 16 & 32 & 64 & 128 \\
\hline
LeNet-1 & 8192& 4096 & 2048 & 1024 & 512\\
LeNet-5 & 8192 & 4096 & 2048 & 1024 &512  \\
All-CNN-C &  2048 & 2048 &1024 & 512 & 256  \\
\bottomrule
\end{tabular}
\end{table}

 Next, we report the results of using our proposed Ultimate Layer Reversal method in terms of accuracy vs. time. We do so for the LeNet-5 CKN with 8 and 128 filters/layer and for the All-CNN-C CKN on CIFAR-10 with 8 and 128 filters/layer. We can see from the plots that our new Ultimate Layer Reversal method, URL-SGO, outperforms stochastic gradient optimization, SGO.
 
 \begin{figure*}[t!]
\centering
\begin{subfigure}{.48\textwidth}
  \centering
\includegraphics[width=1.05\linewidth]{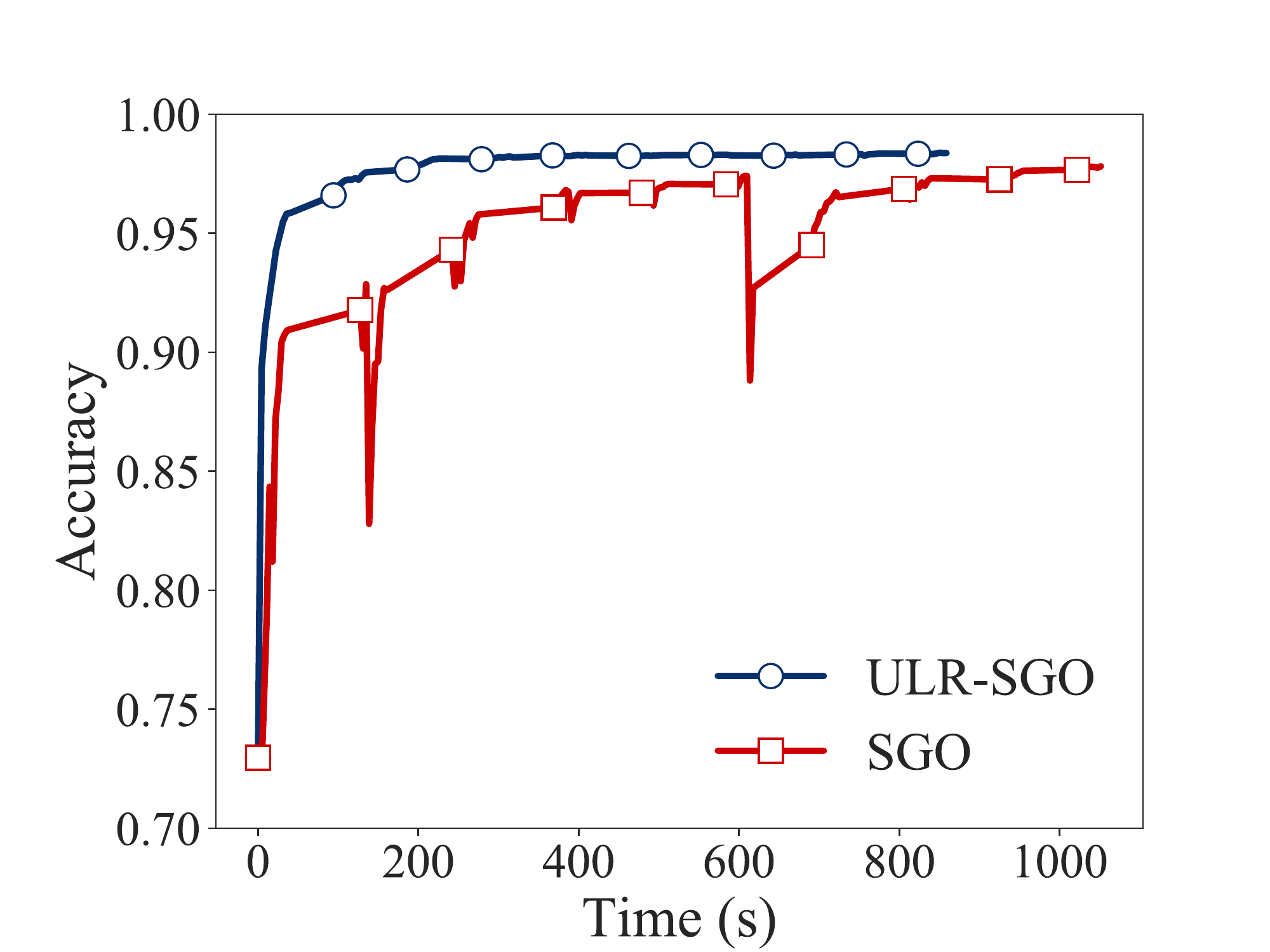}
 \caption{\label{fig:lenet5_8_ulr_sgo_time} LeNet-5 CKN on MNIST with 8 filters/layer}
\end{subfigure}\hfill
\begin{subfigure}{.48\textwidth}
  \centering
\includegraphics[width=1.05\linewidth]{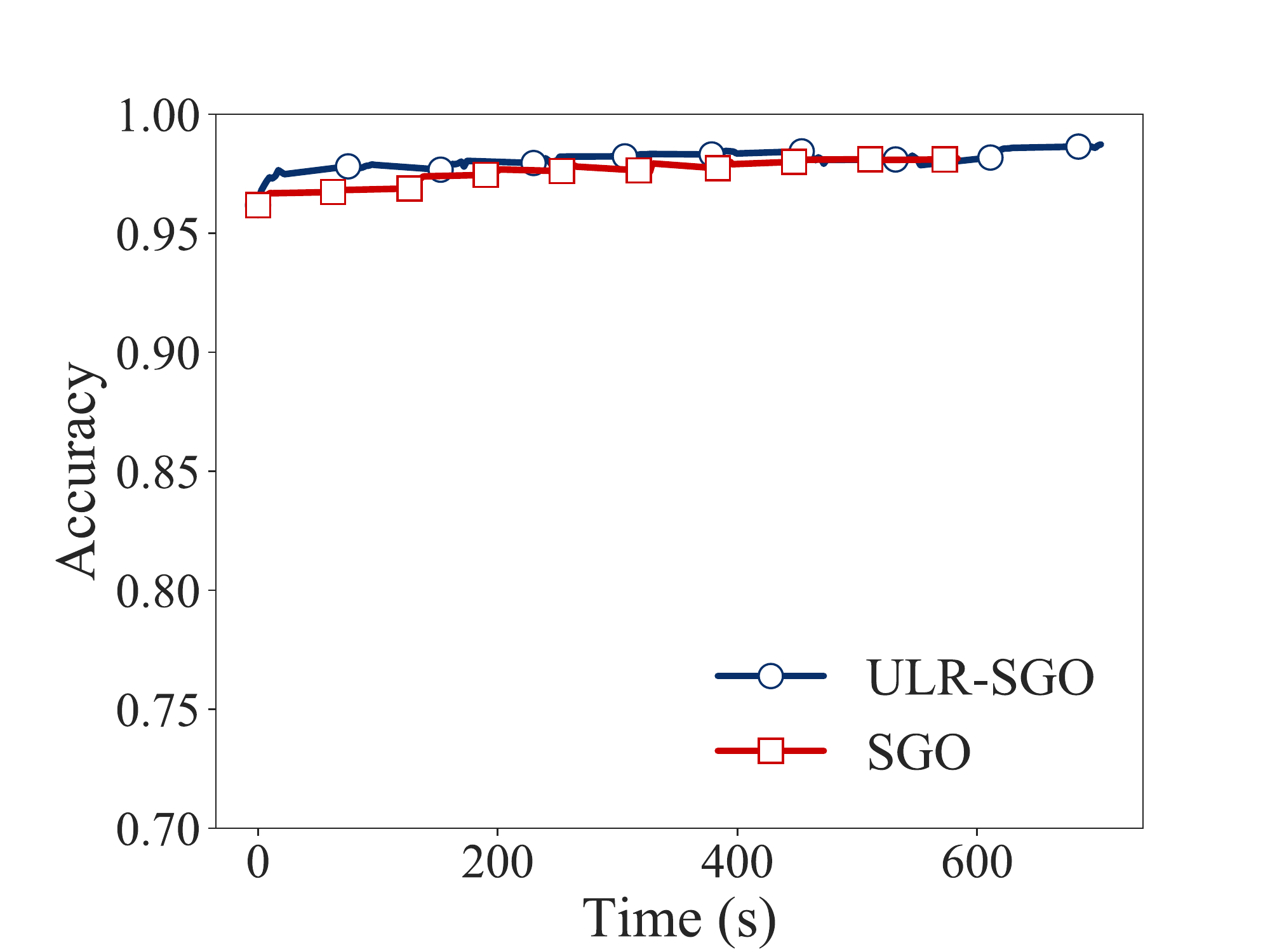}
 \caption{\label{fig:lenet5_128_ulr_sgo_time} LeNet-5 CKN on MNIST with 128 filters/layer}
\end{subfigure}\hfill
\begin{subfigure}{.48\textwidth}
  \centering
\includegraphics[width=1.05\linewidth]{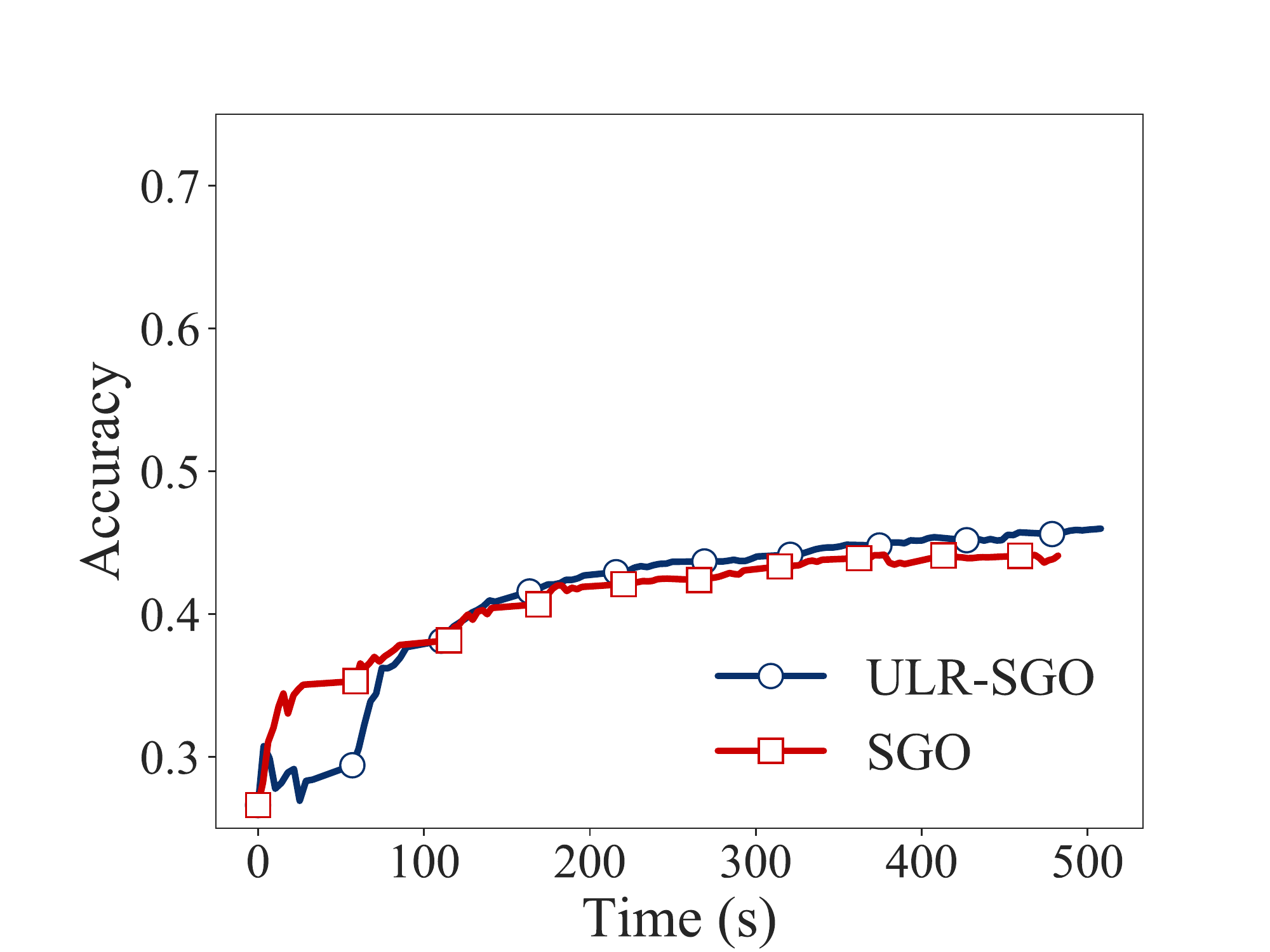}
 \caption{\label{fig:lenet5_8_ulr_sgo_time} All-CNN-C CKN on CIFAR-10 with 8 filters/layer}
\end{subfigure}\hfill
\begin{subfigure}{.48\textwidth}
  \centering
	\includegraphics[width=1.05\linewidth]{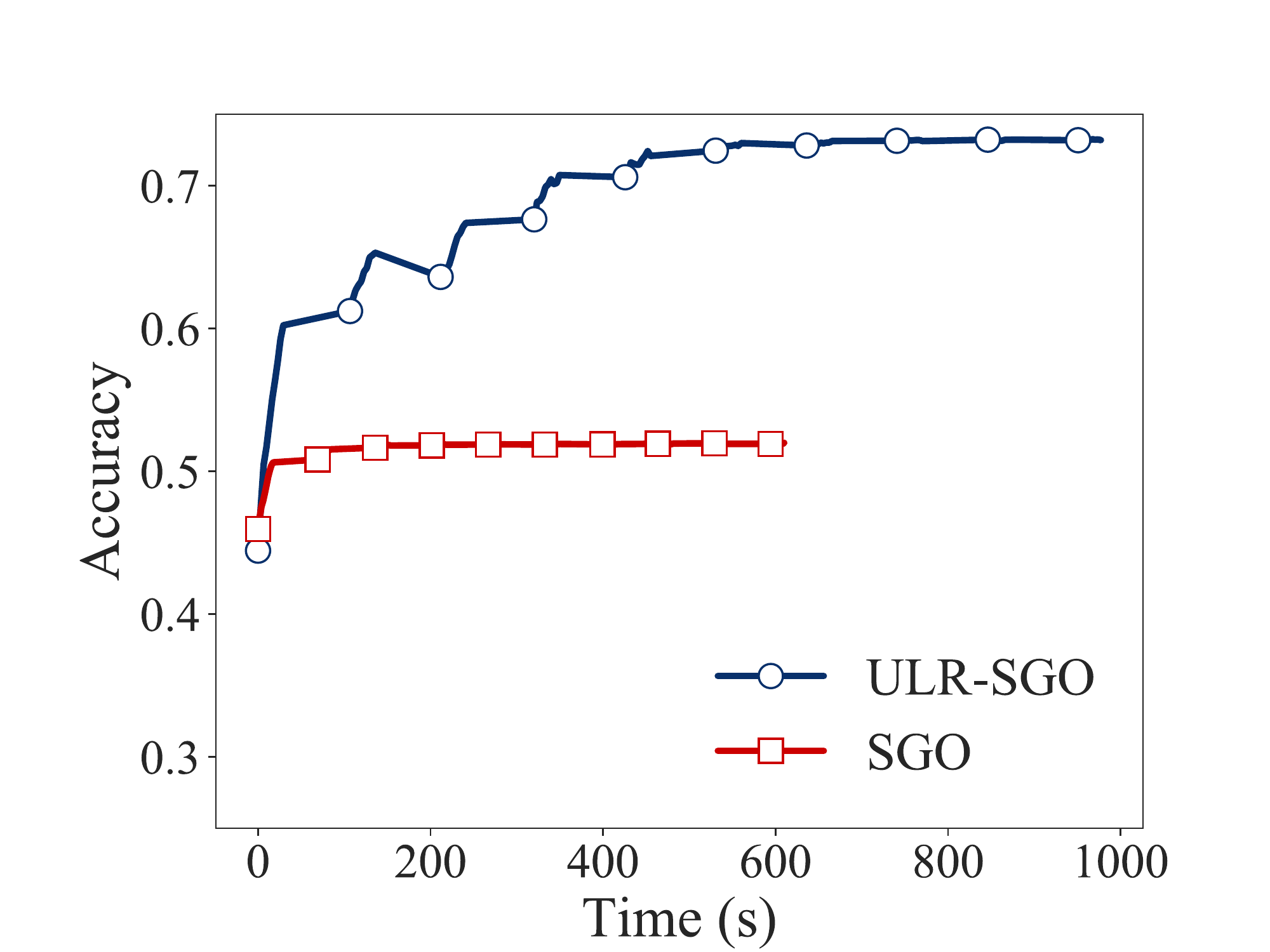}
 \caption{\label{fig:lenet5_128_ulr_sgo_time} All-CNN-C CKN on CIFAR-10 with 128 filters/layer}
\end{subfigure}\hfill
\caption{\label{fig:sgo_ulr_time} Performance of CKNs when using stochastic gradient optimization (SGO) vs. our Ultimate Layer Reversal method (ULR-SGO) in terms of accuracy vs. time.}
\end{figure*}

Finally, Figure~\ref{fig:acc_vs_nfilt_unsup} shows the performance of the unsupervised initialization of the CKNs relative to the trained CKNs and ConvNets. We can see that the unsupervised performance improves as the number of filters per layer increases, as we would expect. For the LeNets the performance varies between 67\% and 98\% of the performance of the supervised CKN. For All-CNN-C the results are more modest: On All-CNN-C the unsupervised performance is 44-59\% of that of the supervised CKN. This suggests that for more complex tasks it is more difficult to obtain a good unsupervised performance.

 \begin{figure*} %
\centering
\begin{subfigure}{.33\textwidth}
  \centering
\includegraphics[width=1.05\linewidth]{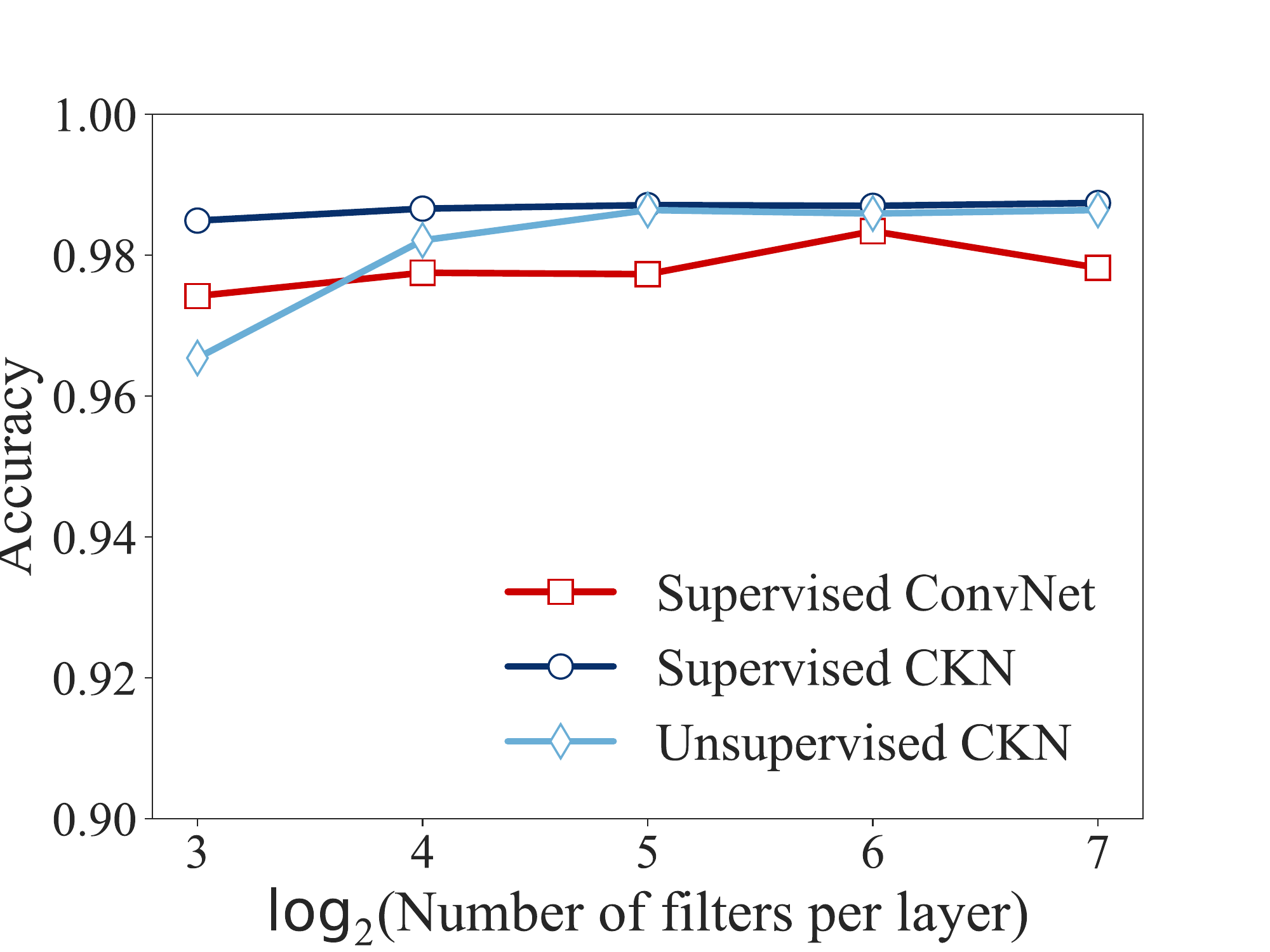}
 \caption{\label{fig:lenet5_unsup}LeNet-1 on MNIST}
\end{subfigure}\hfill
\begin{subfigure}{.33\textwidth}
  \centering
\includegraphics[width=1.05\linewidth]{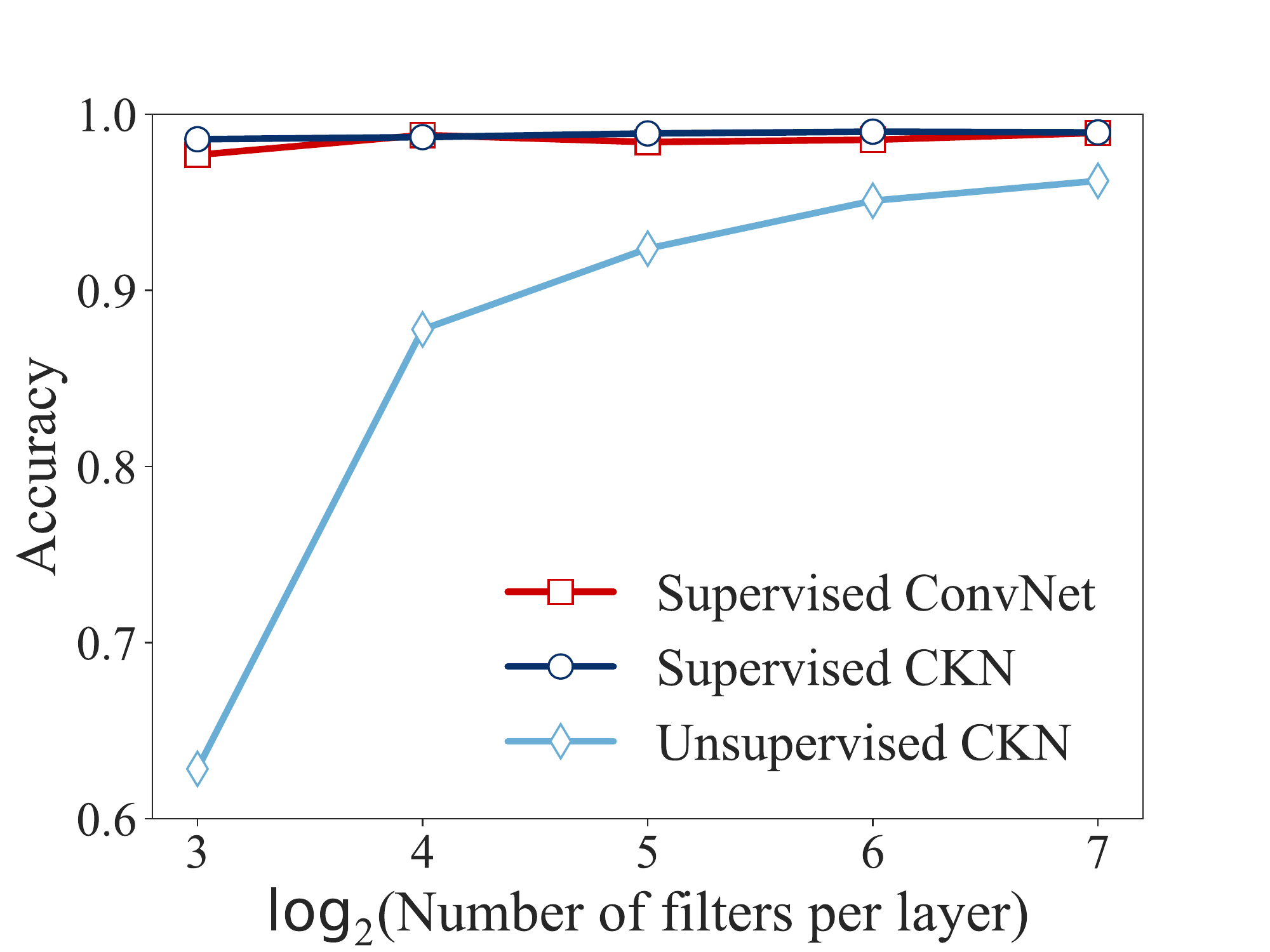}
 \caption{\label{fig:lenet5_unsup}LeNet-5 on MNIST}
\end{subfigure}\hfill
\begin{subfigure}{.33\textwidth}
  \centering
\includegraphics[width=1.05\linewidth]{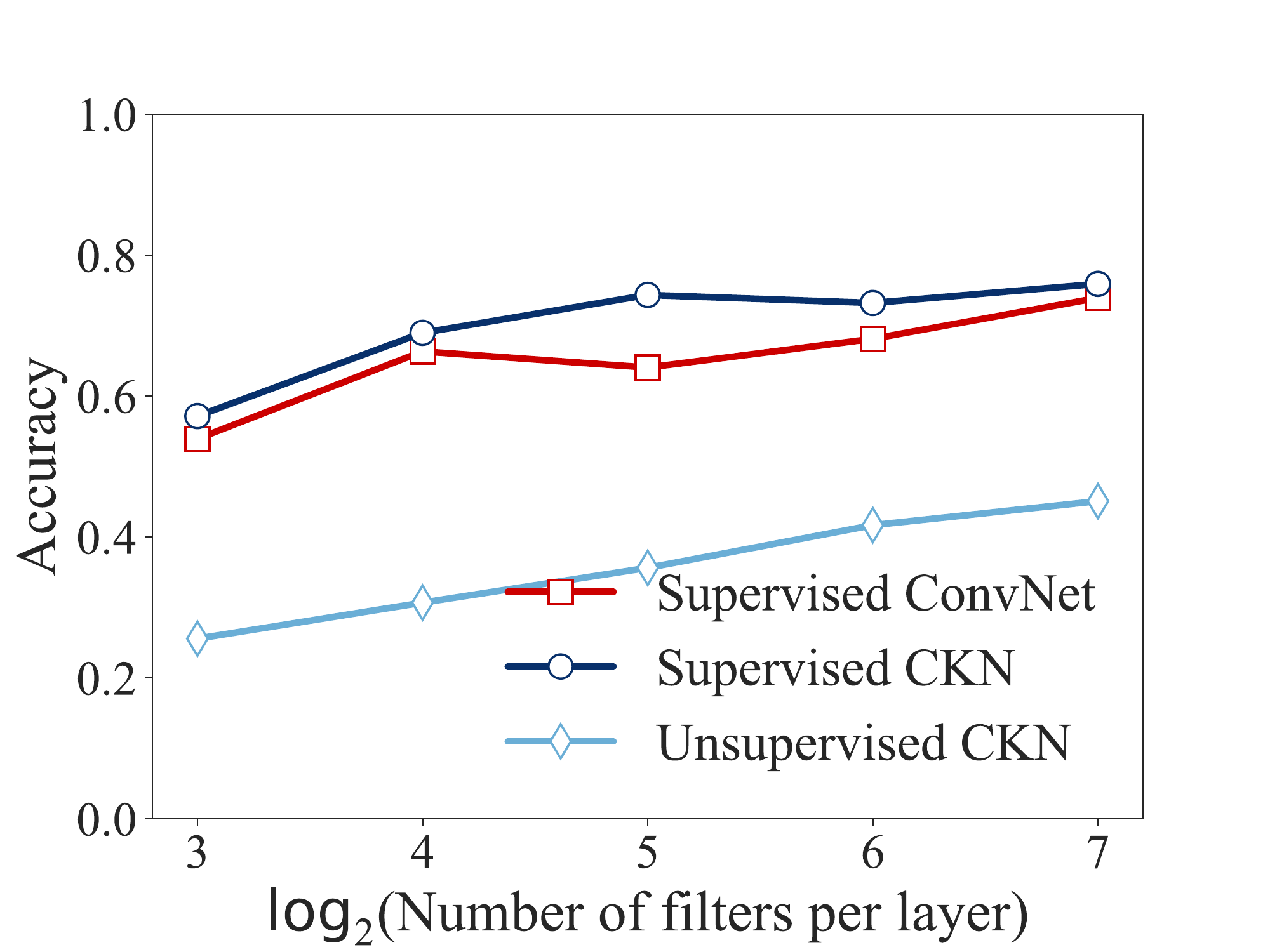}
 \caption{\label{fig:allcnn_unsup} All-CNN-C on CIFAR-10}
\end{subfigure}\hfill
\caption{\label{fig:acc_vs_nfilt_unsup} Performance of unsupervised and supervised CKNs and their ConvNet counterparts when varying the number of filters per layer. Note that the y-axes begin at different values.}
\end{figure*}

\end{document}